\def\cost{\mbox{cost}}
\def\sign{\mbox{sign}}
\def\lu#1{\textcolor{blue}{#1}} 
\def\rev#1{\textcolor{black}{#1}} 
\def\Sbf{\mathbf{S}}
\def\Pcal{\mathcal{P}}
\def\Scal{\mathcal{S}}
\newtheorem{definition}{Definition}[section]
\newtheorem{lemma}{Lemma}[section]
\newtheorem{theorem}{Theorem}[section]
\newtheorem{corollary}{Corollary}[theorem]
\begin{document}
%
\title{Communication-efficient $k$-Means for Edge-based Machine Learning}
\author{Hanlin~Lu,~\IEEEmembership{Student~Member,~IEEE,}
        Ting~He,~\IEEEmembership{Senior~Member,~IEEE,}
        Shiqiang~Wang,~\IEEEmembership{Member,~IEEE,}
        Changchang~Liu,
        Mehrdad~Mahdavi,
        Vijaykrishnan~Narayanan,~\IEEEmembership{Fellow,~IEEE,}
        Kevin~S.~Chan,~\IEEEmembership{Senior~Member,~IEEE,}
        and~Stephen~Pasteris
\IEEEcompsocitemizethanks{\IEEEcompsocthanksitem H. Lu, T. He, M. Mahdavi, and V. Narayanan are with Pennsylvania State University, University Park, PA 16802, USA (email: \{hzl263, tzh58, mzm616, vxn9\}@psu.edu). 
\IEEEcompsocthanksitem S. Wang and C. Liu are with IBM T. J. Watson Research Center, Yorktown Heights, NY 10598, USA (email: \{wangshiq@us., Changchang.Liu33@\}ibm.com).
\IEEEcompsocthanksitem K. Chan is with Army Research Laboratory, Adelphi, MD 20783, USA (email: kevin.s.chan.civ@mail.mil). 
\IEEEcompsocthanksitem S. Pasteris is with University College London, London WC1E 6EA, UK (email: s.pasteris@cs.ucl.ac.uk).
}
\thanks{\scriptsize A preliminary version of this work was presented at ICDCS'20. \cite{Lu20ICDCS}.}
}

\IEEEtitleabstractindextext{%
\begin{abstract}
We consider the problem of computing the $k$-means centers for a large high-dimensional dataset in the context of edge-based machine learning, where data sources offload machine learning computation to nearby edge servers. $k$-Means computation is fundamental to many data analytics, and the capability of computing provably accurate $k$-means centers by leveraging the computation power of the edge servers, at a low communication and computation cost to the data sources, will greatly improve the performance of these analytics. We propose to let the data sources send small summaries, generated by joint dimensionality reduction (DR), cardinality reduction (CR), and quantization (QT), to support approximate $k$-means computation at reduced complexity and communication cost. By analyzing the complexity, the communication cost, and the approximation error of $k$-means algorithms based on carefully designed composition of DR/CR/QT methods, we show that: (i) it is possible to compute near-optimal $k$-means centers at a near-linear complexity and a constant or logarithmic communication cost, (ii) the order of applying DR and CR significantly affects the complexity and the communication cost, and (iii) combining DR/CR methods with a properly configured quantizer can further reduce the communication cost without compromising the other performance metrics. Our theoretical analysis has been validated through experiments based on real datasets. 
\end{abstract}

\begin{IEEEkeywords}
k-Means, dimensionality reduction, coreset, random projection, quantization, edge-based machine learning.
\end{IEEEkeywords}}

\maketitle

\IEEEdisplaynontitleabstractindextext

%
\IEEEpeerreviewmaketitle

\IEEEraisesectionheading{\section{Introduction}\label{sec:introduction}}
\IEEEPARstart{E}{dge}-based machine learning~\cite{park2019wireless} is an emerging application scenario, 
where mobile/wireless devices collect data and transmit them (or their summaries) to nearby edge servers for processing. Compared to alternative approaches, e.g., transmitting locally learned model parameters as in federated learning \cite{WangJSAC2019}, transmitting data summaries has the advantages that: (i) only one round of communications is required,\footnote{In cases that the raw data are spread over multiple nodes, another round of communications is needed to decide the sizes of data summaries to collect from each node \cite{Balcan13NIPS}. However, each node only sends one scalar in this round and hence the communication cost is negligible. } (ii) the transmitted data can potentially be used to compute other machine learning models \cite{Lu19Globecom,lu2020robust}, and (iii) the edge server can solve the machine learning problem closer to the optimality than the data-collecting devices within the same time. 
In this work, we focus on $k$-means clustering under the framework of edge-based machine learning. 


$k$-Means clustering is one of the most widely-used machine learning techniques. Algorithms for $k$-means are used in many areas of data science, e.g., for data compression, quantization, hashing; see the survey in \cite{Jain10PRL} for more details. Recently, it was shown in \cite{Lu19Globecom,lu2020robust} that the centers of $k$-means can be used as a proxy of the original dataset in computing a broader set of machine learning models with sufficiently continuous cost functions. Thus, efficient and accurate computation of $k$-means can bring broad benefits to machine learning applications.

However, solving $k$-means is nontrivial. The problem is known to be NP-hard, even for two centers \cite{ADHP09} or in the plane \cite{MNV12}. Due to its fundamental importance, how to speed up the $k$-means computation for large datasets has received significant attention. Most existing solutions can be classified into two approaches: \emph{dimensionality reduction (DR)} methods that aim at generating a ``thinner'' dataset with a reduced number of attributes~\cite{Makarychev19STOC}, and \emph{cardinality reduction (CR)} methods that aim at generating a ``smaller'' dataset with a reduced number of data points (i.e., samples)~\cite{Feldman13SODA:report}. \rev{However, these solutions assumed that the full dataset is available locally at the server that performs $k$-means computation, and hence ignored the communication cost.} 

To our knowledge, we are the first to explicitly analyze the communication cost in computing $k$-means over remote and possibly distributed data. \rev{The need of communications arises in the application scenario of edge-based machine learning, where edge devices collecting data wish to offload machine learning computation to nearby edge servers through wireless links.} 
%
%
Given a large high-dimensional dataset, i.e., $n, d\gg 1$ ($n$: cardinality, $d$: dimension), residing at one or multiple data sources at the network edge, an obvious solution of solving $k$-means at the data sources and sending the centers to the server will incur a high computational complexity at the data sources \rev{that is not suitable for the limited computation power of edge devices}, while another obvious solution of sending the raw data to the server and solving $k$-means there will incur a high communication cost \rev{that imposes too much stress on the wireless links}. 
We seek to achieve a better tradeoff by letting the data sources send small data summaries generated by efficient data reduction methods and leaving the $k$-means computation to the server. \looseness=-1

\rev{Besides DR and CR, quantization (QT)~\cite{Sayood:2012:IDC:3050831}  
can also reduce the communication cost by representing each data point with a smaller number of bits. 
While $k$-means itself has been used to design vector quantizers~\cite{gersho2012vector}, we will show that simpler quantizers can be combined with DR/CR methods to compute approximate $k$-means at an even lower communication cost without negatively affecting the complexity or the quality of solution.}

\subsection{Summary of Contributions}

\rev{We want to develop efficient $k$-means algorithms suitable for edge-based machine learning, by offloading as much computation as possible to edge servers at a low communication cost to data sources.} Our contributions include:\looseness=-1
 
 1) If the data reside at a single data source, we show that (i) it is possible to solve $k$-means arbitrarily close to the optimal with constant communication cost and near-linear complexity at the data source by combining suitably selected DR/CR methods, (ii) the order of applying DR and CR methods will not affect the approximation error, but will lead to different tradeoffs between communication cost and computational complexity, and (iii) repeating DR both before and after CR can further improve the performance.  

2) If the data are distributed over multiple data sources, we show that suitably combining DR/CR methods can solve $k$-means arbitrarily close to the optimal with near-linear complexity at the data sources and a total communication cost that is logarithmic in the data size. 

3) We further extend our solution to include quantization. Using the rounding-based quantizer as an example, we demonstrate how to configure the quantizer to minimize the communication cost while guaranteeing a given approximation error. \looseness=-1

4) Through experiments on real datasets, we verify that (i) joint DR and CR can drastically reduce the communication cost without incurring a high complexity at the data sources or significantly degrading the solution quality, (ii) the proposed joint DR-CR algorithms can achieve a solution quality similar to state-of-the-art algorithms while notably reducing the communication cost and the complexity, and (iii) combining DR/CR with quantization can further reduce the communication cost without compromising the other performance metrics.     

\textbf{Roadmap.} 
Sections~\ref{sec:Related Work}--\ref{sec:Background} review the background on DR/CR methods. Section~\ref{sec:Joint Dimension and Cardinality Reduction} presents our results on joint DR/CR in the centralized setting, and Section~\ref{sec:Multiple Data Sources} presents those in the distributed setting.  Section~\ref{sec:Extension} presents further improvement via joint DR, CR, and quantization. Section~\ref{sec:evaluations for DR+CR+QT} evaluates our solutions on real datasets. Finally, Section~\ref{sec:Conclusion} concludes the paper. Proofs are given in Appendix. \looseness=-1

\section{Related Work}\label{sec:Related Work}

Our work belongs to the studies on data reduction for approximate $k$-means. \rev{Existing solutions can be classified into the following categories:} 

\textbf{Dimensionality reduction (DR):} DR for $k$-means, initiated by \cite{BZD10}, aims at speeding up $k$-means by reducing the number of features (i.e., the dimension). Two approaches have been proposed: 1) \emph{feature selection} that selects a subset of the original features, and 2) \emph{feature extraction} that constructs a smaller set of new features.
For feature selection, the best known algorithms are from \cite{CEM15}, including a random sampling algorithm that achieves a $(1+\epsilon)$-approximation using $O(k\log k/\epsilon^2)$ features, and a deterministic algorithm that achieves a $(1+\epsilon)$-approximation using $O(k/\epsilon^2)$ features.
For feature extraction, there are two methods with guaranteed approximation, both based on linear projections. The first method is based on \emph{principal component analysis (PCA)} via computing the \emph{singular value decomposition (SVD)}, where exact SVD gives $2$-approximation using $k$ features \cite{Drineas04ML} or $(1+\epsilon)$-approximation using $\lceil k/\epsilon \rceil$ features \cite{CEM15}, and approximate SVD gives $(2+\epsilon)$-approximation using $k$ features \cite{Boutsidis15IT} or $(1+\epsilon)$-approximation using $\lceil k/\epsilon \rceil$ features \cite{CEM15}. The second method is based on \emph{random projections} that preserve vector $\ell$-2 norms with an arbitrarily high probability, whose existence is guaranteed by the \emph{Johnson-Lindenstrauss (JL) lemma} \cite{JL82}. The best known algorithm there is given by \cite{Makarychev19STOC}, which achieves a $(1+\epsilon)$-approximation using $O(log(k/\epsilon)/\epsilon^2)$ features.

\textbf{Cardinality reduction (CR):} CR for $k$-means, initiated by \cite{Har-Peled04STOC}, aims at using a small weighted set of points in the same space, referred to as a \emph{coreset}, to replace the original dataset. A coreset is called an $\epsilon$-coreset (for $k$-means) if it can approximate the $k$-means cost of the original dataset for every candidate set of centers up to a factor of $1\pm \epsilon$. Many coreset construction algorithms have been proposed for $k$-means. 
Early algorithms use geometric partitions to merge each group of nearby points into a single coreset point \cite{Har-Peled04STOC,Frahling05STOC,Har-Peled07DCG}, which cause the cardinality of the coreset to be exponential in the dimension $d$. Later, \cite{Chen09JC} showed that sampling can be used to reduce the coreset cardinality to a polynomial in $k, \epsilon, \log{n}$, and $d$. Most state-of-the-art coreset construction algorithms are based on the \emph{sensitivity sampling} framework that was first proposed in \cite{Langberg10SODA} and then formalized in \cite{Feldman11STOC}. To generate an $\epsilon$-coreset, the solution in \cite{Feldman11STOC} needs a coreset cardinality of\footnote{We use $\tilde{O}(x)$ to denote a value that is at most linear in $x$ times a factor that is polylogarithmic in $x$.} $\tilde{O}(kd\epsilon^{-4})$, and its followup in \cite{Braverman16CoRR} needs $\tilde{O}(k^2 d\epsilon^{-2})$. The best known solution\footnote{There is another algorithm with a coreset cardinality independent of $n$ and $d$ (precisely, $k^{O(\epsilon^{-2})}$) in \cite{Barger16SDM}. However, the algorithm has a high complexity 
and the coreset cardinality is larger than that in \cite{Feldman13SODA:report}. } is the one in \cite{Feldman13SODA:report} (presented implicitly in the proof of Theorem~36), which showed that by reducing the intrinsic dimension of the dataset and adding a constant term to the coreset-based cost, the cardinality of an $\epsilon$-coreset can be reduced to $\tilde{O}(k^3 \epsilon^{-4})$. 


\textbf{Joint DR-CR:}
Among the above works, only \cite{Feldman13SODA:report,Balcan14NIPS} considered joint DR and CR for $k$-means computation. 

\textbf{Algorithms in the distributed setting:} In this setting, \cite{Balcan13NIPS} proposed a distributed version of sensitivity sampling to construct an $\epsilon$-coreset over a distributed dataset, and \cite{Balcan14NIPS} further combined this algorithm with a distributed PCA algorithm from \cite{Feldman13SODA:report}. 
Besides these theoretical results, there are also system works on adapting centralized $k$-means algorithms for distributed settings, e.g., MapReduce \cite{Mao15ICFCST}, sensor networks \cite{Naldi13BCIS}, and Peer-to-Peer networks \cite{Giannella09TKDE}. 
However, these algorithms are only heuristics. 

\rev{\emph{Limitations \& improvements:} While extensively studied, existing solutions mainly focused on reducing the computation time, leaving open the issue of communication cost. Moreover, we note that: (i) the state-of-the-art data reduction methods \cite{Feldman13SODA:report,Balcan14NIPS} blindly assumed that DR should be applied before CR, leaving open whether it is possible to achieve better performance by reversing the order of DR/CR or applying them repeatedly, and (ii) most of the algorithms for the distributed setting are heuristics without guarantees on how well their solutions approximate the optimal solution. To fill this gap, we will perform a comprehensive analysis in terms of computational complexity, communication cost, and approximation error, while carefully designing the order of applying DR/CR. In addition to DR and CR, QT~\cite{Sayood:2012:IDC:3050831} is also an effective method for reducing the communication cost by lowering the data precision. While $k$-means itself has been used to design certain quantizers~\cite{gersho2012vector}, the use of simpler quantizers for communication-efficient $k$-means computation has not been studied before. In this regard, we will show how to properly combine a simple rounding-based quantizer with DR/CR methods to further reduce the communication cost without compromising the other performance metrics.\looseness=0 
}

\section{Background and Formulation}\label{sec:Background}

We start with an overview of existing results on DR and CR for $k$-means, followed by our problem statement.

\subsection{Notations \& Definitions}\label{subsec:notation}

\textbf{Definitions:} Consider a dataset $P\subset \mathbb{R}^d$ with cardinality $n$ that resides at one or multiple data sources (i.e., data-collecting devices), where both $n\gg 1$ and $d\gg 1$. We want to find, with assistance of an edge server, the $k$ points $X=\{x_i\}_{i=1}^k$ that minimize the following cost function\footnote{The norms in (\ref{eq:cost(P,x)}) and (\ref{eq:cost(C)}) refer to the $\ell$-2 norm. }:
\begin{align}\label{eq:cost(P,x)}
\cost(P,X) := \sum_{p\in P} \min_{x_i\in X}\|p-x_i\|^2.
\end{align}
This is the \emph{$k$-means clustering} problem, and the points in $X$ are called \emph{centers}. Equivalently, the $k$-means clustering problem can be considered as the problem of finding the partition $\Pcal = \{P_1,\ldots,P_k\}$ of $P$ into $k$ clusters that minimizes the following cost function:
\begin{align}\label{eq:cost(C)}
\cost(\Pcal) := \sum_{i=1}^k \min_{x_i\in \mathbb{R}^d} \sum_{p\in P_i}\|p-x_i\|^2.
\end{align}

\textbf{Notations:} 
We will use $\|x\|$ to denote the $\ell$-2 norm if $x$ is a vector, or the Frobenius norm if $x$ is a matrix. We will use $A_P\in \mathbb{R}^{n\times d}$ to denote the matrix representation of a dataset $P\subset \mathbb{R}^d$, where each row corresponds to a data point.
Let $\mu(P)$ denote the optimal $1$-means center of $P$, which is well-known to be the sample mean, i.e., $\mu(P) = { \frac{1}{|P|}}\sum_{p\in P}p$. Let $\Pcal_{P,X}$ denote the partition of dataset $P$ induced by centers $X$, i.e., $\Pcal_{P,X}=\{P_1,\ldots,P_{|X|}\}$ for $P_i := \{p\in P:\: \|p-x_i\|\leq \|p-x_j\|, \: \forall x_j\in X\setminus \{x_i\}\}$ (ties broken arbitrarily). Given scalars $x$, $y$, and $\epsilon$ ($\epsilon>0$), we will use $x\approx_{1+\epsilon} y$ to denote ${ \frac{1}{ 1+\epsilon}}x \leq y \leq (1+\epsilon) x$. In our analysis, we use $O(x)$ to denote a value that is at most linear in $x$, $\Omega(x)$ to denote a value that is at least linear in $x$, and $\tilde{O}(x)$ to denote a value that is at most linear in $x$ times a factor that is polylogarithmic in $x$. \looseness=-1

Given a dimensionality reduction map $\pi: \mathbb{R}^d \to \mathbb{R}^{d'}$ ($d'<d$), we use $\pi(P):=\{\pi(p):\: p\in P\}$ to denote the output dataset for an input dataset $P$, and $\pi(\Pcal) := \{\pi(P_1),\ldots,\pi(P_k)\}$ to denote the partition of $\pi(P)$ corresponding to a partition $\Pcal = \{P_1,\ldots,P_k\}$ of $P$. Moreover, given a partition $\Pcal' = \pi(\Pcal)$, we use $\pi^{-1}(\Pcal')$ to denote the corresponding partition of $P$, which puts $p,q\in P$ into the same cluster if and only if $\pi(p), \pi(q)\in P'$ belong to the same cluster under $\Pcal'$. Finally, given $P' = \pi(P)$, we use $\pi^{-1}(P'):= \{\pi^{-1}(p'):\: p'\in P'\}$ to denote a set of points in $\mathbb{R}^d$ that is mapped to $P'$ by $\pi$.
Note that there is no guarantee that $\pi^{-1}(P') = P$. However, suppose $\tilde{P}$ is the solution which satisfies $\pi(\tilde{P}) = P'$, then $\tilde{P}$ must exist ($P$ is a feasible solution) and $\pi^{-1}(P')$ denotes an arbitrary solution. If $\pi$ is a linear map, i.e., $\pi(P) := A_P \Pi$ for a matrix $\Pi\in \mathbb{R}^{d\times d'}$, then the \emph{Moore-Penrose inverse $\Pi^+$} \cite{Ben-Israel:book} of $\Pi$ gives a feasible solution $\pi^{-1}(P') := A_{P'}\Pi^+$. 

\rev{The main notations and abbreviations used in the paper are summarized in Table~\ref{tab:notation}.}

\begin{table}[tb]
\vspace{-.0em}
\footnotesize
\renewcommand{\arraystretch}{1.3}
\caption{\rev{Key Abbreviations and Notations}} \label{tab:notation}
\vspace{-1em}
\centering
\begin{tabular}{|l|l|}
  \hline
  \rev{Notation} & \rev{Explanation} \\
  \hline
  \rev{DR} & \rev{dimensionality reduction} \\
  \hline 
  \rev{CR} & \rev{cardinality reduction}\\
  \hline
  \rev{QT} & \rev{quantization} \\
  \hline
  \rev{PCA} & \rev{principal component analysis} \\
  \hline
  \rev{JL projection} & \rev{a linear projection satisfying Theorem~\ref{thm:epsilon-projection}} \\
  \hline
  \rev{FSS} & \rev{the algorithm proposed in \cite[Theorem~36]{Feldman13SODA:report}}\\
  \hline
  \rev{BKLW} & \makecell[l]{\rev{the distributed version of FSS proposed} \\ \rev{in \cite[Algorithm~1]{Balcan14NIPS}}}
  \\
  \hline
  $P$ & \rev{original input dataset} \\
  \hline
  $n$ & \rev{cardinality of $P$} \\
  \hline
  $d$ & \rev{dimensionality of $P$} \\
  \hline
  $k$ & \rev{number of clustering centers} \\
  \hline
  $X$ & \rev{a set of clustering centers} \\
  \hline
  $\Pcal$ & \rev{a partition of $P$} \\
  \hline
  $\mu(P)$ & \rev{the optimal 1-means center of $P$}\\
  \hline
  $\Pcal_{P,X}$ & \rev{the partition of dataset $P$ induced by centers $X$} \\
  \hline
\end{tabular}
\vspace{-1em}
\end{table}

\subsection{Dimensionality Reduction for $k$-Means}\label{subsec:dimensionality reduction}

\begin{definition}\label{def:epsilon-projection}
We say that a DR map $\pi:\: \mathbb{R}^d\to \mathbb{R}^{d'}$ ($d'< d$) is an \emph{$\epsilon$-projection} if it preserves the cost of any partition up to a factor of $1+\epsilon$, i.e., $\cost(\Pcal)\approx_{1+\epsilon} \cost(\pi(\Pcal))$ for every partition $\Pcal = \{P_1,\ldots,P_k\}$ of a finite set $P\subset \mathbb{R}^d$. 
\end{definition}

One commonly used method to construct $\epsilon$-projection is random projection, where the cornerstone result is the JL Lemma:

\begin{lemma}[\cite{JL82}]\label{lem:JL}
There exists a family of random linear maps $\pi:\: \mathbb{R}^d \to \mathbb{R}^{d'}$ with the following properties: for every $\epsilon, \delta\in (0,1/2)$, there exists $d'=O({ \frac{\log(1/\delta)}{\epsilon^2}  })$ such that for every $d\geq 1$ and all $x\in \mathbb{R}^d$, we have
$\Pr\{\|\pi(x)\| \approx_{1+\epsilon} \|x\|\} \geq 1-\delta.$
\end{lemma}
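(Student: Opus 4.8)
The plan is to realize the random family explicitly as a scaled Gaussian projection and then reduce the statement to a sharp concentration bound for a chi-squared variable. Concretely, I would take $\pi(x) = \frac{1}{\sqrt{d'}} R x$, where $R\in\mathbb{R}^{d'\times d}$ has i.i.d.\ standard Gaussian entries (Rademacher $\pm 1$ entries would work equally well with a slightly more involved moment computation). Since $\pi$ is linear, for any $x\neq 0$ we have $\|\pi(x)\|/\|x\| = \|\pi(x/\|x\|)\|$, so it suffices to establish the concentration for unit vectors $x$ (the case $x=0$ is trivial). For a unit $x$, rotational invariance of the standard Gaussian makes each coordinate $Z_i := \langle r_i, x\rangle$ an i.i.d.\ $N(0,1)$ variable, where $r_i$ is the $i$-th row of $R$. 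Hence $\|\pi(x)\|^2 = \frac{1}{d'}\sum_{i=1}^{d'} Z_i^2 = \frac{1}{d'} Y$ with $Y\sim\chi^2_{d'}$, and in particular $\mathbb{E}\,\|\pi(x)\|^2 = 1$.

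The technical core, which I expect to be the main obstacle, is a two-sided tail bound for $Y$. I would use the moment generating function $\mathbb{E}[e^{tY}] = (1-2t)^{-d'/2}$ (valid for $t<1/2$) and a Chernoff argument on the upper and lower deviations of $Y$ about its mean $d'$. Optimizing over $t$ yields, for $\eta\in(0,1)$, bounds of the form $\Pr\{\|\pi(x)\|^2 \geq 1+\eta\} \leq \exp\!\big(-\tfrac{d'}{2}(\eta-\ln(1+\eta))\big)$ and $\Pr\{\|\pi(x)\|^2 \leq 1-\eta\} \leq \exp\!\big(-\tfrac{d'}{2}(-\eta-\ln(1-\eta))\big)$. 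The delicate part is then invoking elementary inequalities such as $\eta-\ln(1+\eta)\geq \eta^2/2-\eta^3/3$ and $-\eta-\ln(1-\eta)\geq \eta^2/2$ to absorb both tails into a single clean bound $\Pr\{|\|\pi(x)\|^2-1|>\eta\} \leq 2\exp(-c\,d'\,\eta^2)$ for an absolute constant $c>0$, uniformly over $\eta\in(0,1/2)$.

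It remains to pass from control of the squared norm to the $\approx_{1+\epsilon}$ guarantee on the norm itself and to choose $d'$. Since $\sqrt{1+\eta}\leq 1+\epsilon$ and $\sqrt{1-\eta}\geq \tfrac{1}{1+\epsilon}$ hold whenever $\eta$ is a suitable constant multiple of $\epsilon$ (using $\epsilon<1/2$), the event $\{|\|\pi(x)\|^2-1|\leq \eta\}$ implies $\tfrac{1}{1+\epsilon}\leq \|\pi(x)\|\leq 1+\epsilon$, i.e.\ $\|\pi(x)\|\approx_{1+\epsilon}\|x\|$ after rescaling back from the unit case. Setting $\eta=\Theta(\epsilon)$ and then $d' = \big\lceil \tfrac{\ln(2/\delta)}{c\,\eta^2}\big\rceil = O\!\big(\tfrac{\log(1/\delta)}{\epsilon^2}\big)$ drives the failure probability below $\delta$, which is exactly the claimed bound. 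The argument is dimension-free in the sense that the required $d'$ depends only on $\epsilon$ and $\delta$, not on $d$, matching the statement of the lemma.
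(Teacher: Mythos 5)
Your proposal is a correct and complete proof. Note, however, that the paper itself offers no proof of this lemma: it is imported verbatim from \cite{JL82} as a background result (none of the appendix proofs touch it), so there is nothing in the paper to compare your argument against. What you have written is the standard Gaussian-projection proof in the style of Dasgupta--Gupta/Indyk--Motwani: realize $\pi(x)=\frac{1}{\sqrt{d'}}Rx$ with i.i.d.\ $N(0,1)$ entries, reduce to unit vectors by linearity, observe $\|\pi(x)\|^2 = Y/d'$ with $Y\sim\chi^2_{d'}$, and apply a two-sided Chernoff bound via the moment generating function $(1-2t)^{-d'/2}$. The elementary inequalities you invoke ($\eta-\ln(1+\eta)\geq \eta^2/2-\eta^3/3$ and $-\eta-\ln(1-\eta)\geq\eta^2/2$) do yield a uniform bound $2\exp(-c\,d'\eta^2)$, and taking $\eta=\epsilon$ already suffices for the passage from squared norms to the $\approx_{1+\epsilon}$ relation, since $\sqrt{1+\epsilon}\leq 1+\epsilon$ and $(1-\epsilon)(1+\epsilon)^2\geq 1$ for $\epsilon<1/2$; this matches the paper's definition of $\approx_{1+\epsilon}$ exactly. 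The choice $d'=O(\log(1/\delta)/\epsilon^2)$ then follows, and the argument is indeed dimension-free as the lemma requires.
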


Based on this lemma, the best known result achieved by random projection is the following: 

\begin{theorem}[\cite{Makarychev19STOC}]\label{thm:epsilon-projection}
Consider any family of random linear maps $\pi:\: \mathbb{R}^d\to\mathbb{R}^{d'}$ that (i) satisfies Lemma~\ref{lem:JL}, and (ii) is sub-Gaussian-tailed (i.e., the probability for the norm after mapping to be larger than the norm before mapping by a factor of at least $1+t$ is bounded by $e^{-\Omega(d' t^2)}$). Then for every $\epsilon,\delta\in (0, 1/4)$, there exists
$d'=O({1\over \epsilon^2}\log{k\over \epsilon\delta}),$
such that $\pi$ is an $\epsilon$-projection with probability at least $1-\delta$.
\end{theorem}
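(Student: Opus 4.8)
The plan is to reduce the statement to a concentration bound for sums of squared projected norms, and then to replace the prohibitively large union bound over all clusterings by a union bound over a family whose log-cardinality is governed by $k$ rather than $n$. First I would exploit the linearity of $\pi$. Since $\pi$ is linear, the image of a cluster mean equals the mean of the image, $\mu(\pi(P_i)) = \pi(\mu(P_i))$, so the optimal $1$-means center of each projected cluster is exactly the projection of the original center. Consequently
\begin{equation}
\cost(\pi(\Pcal)) = \sum_{i=1}^k \sum_{p\in P_i}\|\pi(p)-\mu(\pi(P_i))\|^2 = \sum_{i=1}^k \sum_{p\in P_i}\|\pi(p-\mu(P_i))\|^2,
\end{equation}
which is an \emph{exact} identity. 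Together with the equivalent pairwise form $\cost(\Pcal) = \sum_{i=1}^k \frac{1}{2|P_i|}\sum_{p,q\in P_i}\|p-q\|^2$, verifying Definition~\ref{def:epsilon-projection} amounts to showing that, with probability at least $1-\delta$, for \emph{every} partition $\Pcal$ the projected sum of squared within-cluster residuals stays within a factor $1+\epsilon$ of the original sum.

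The naive route is immediate but too weak: if $\pi$ preserves all $\binom{n}{2}$ squared pairwise distances up to $1\pm\epsilon$, then the identity above preserves every clustering cost, and Lemma~\ref{lem:JL} with a union bound over the $\binom{n}{2}$ pairs achieves this at $d'=O(\epsilon^{-2}\log(n/\delta))$. The entire difficulty of Theorem~\ref{thm:epsilon-projection} is to trade the $\log n$ for $\log(k/(\epsilon\delta))$. I would split the target into the two one-sided bounds $\cost(\pi(\Pcal))\le(1+\epsilon)\cost(\Pcal)$ (no overestimation) and $\cost(\pi(\Pcal))\ge\frac{1}{1+\epsilon}\cost(\Pcal)$ (no collapse), and control each over a small family of representative configurations rather than over all $\le k^n$ partitions.

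For the overestimation direction I would invoke assumption (ii): the sub-Gaussian upper tail $\Pr\{\|\pi(x)\|\ge(1+t)\|x\|\}\le e^{-\Omega(d't^2)}$ makes the per-configuration failure probability decay super-exponentially in $d'$. Crucially, because $\cost(\pi(\Pcal))$ is a \emph{sum} over all $n$ points, its relative concentration for a fixed configuration does not degrade with $n$ — one pays only once per configuration, not once per point, so the only cost is the number of configurations that must be tested. I would then seek a covering argument showing that, up to a $1\pm\epsilon$ distortion in cost, every clustering is represented by one of only $\mathrm{poly}(k/\epsilon)$ canonical configurations (log-cardinality $O(\log(k/\epsilon))$); a union bound over these at $d'=\Omega(\epsilon^{-2}\log(k/(\epsilon\delta)))$ then succeeds with probability at least $1-\delta$. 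The no-collapse direction is handled symmetrically using the lower tail from Lemma~\ref{lem:JL} on the same family.

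The main obstacle is precisely this control of the effective complexity. A per-point union bound reintroduces $\log n$, while a naive net over $k$-tuples of centers in $\mathbb{R}^d$ costs $\exp(\Omega(k))$ — either would ruin the target dimension. The heart of the argument (and the contribution of \cite{Makarychev19STOC}) is to show that the number of cost-distinct clusterings one must actually test is only polynomial in $k/\epsilon$; this is where the $\log(k/\epsilon)$ dependence, as opposed to an $n$- or exponential-in-$k$ dependence, is won. I expect the reductions and concentration inequalities above to be routine, and this covering-and-transfer step to require the real work.
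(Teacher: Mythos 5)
First, note that the paper does not prove this statement: Theorem~\ref{thm:epsilon-projection} is imported verbatim from \cite{Makarychev19STOC}, so there is no in-paper proof to compare against; your proposal has to be judged against the actual argument of that reference.

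Your reductions are all correct and are indeed the standard starting point: linearity gives $\mu(\pi(P_i))=\pi(\mu(P_i))$, hence $\cost(\pi(\Pcal))=\sum_i\sum_{p\in P_i}\|\pi(p-\mu(P_i))\|^2$ exactly; the pairwise identity $\sum_{p\in Q}\|p-\mu(Q)\|^2=\frac{1}{2|Q|}\sum_{p,q\in Q}\|p-q\|^2$ reduces everything to controlling distortions of displacement vectors; and the naive union bound over pairs gives $d'=O(\epsilon^{-2}\log(n/\delta))$, so the whole content of the theorem is removing the $\log n$. You also correctly observe that for a \emph{fixed} partition the cost is a sum, so (via a layer-cake plus Markov argument using the sub-Gaussian tail) one gets a per-partition failure probability of roughly $\mathrm{poly}(\epsilon\delta/k)$ without paying per point. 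All of that is fine.

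The gap is the step you yourself flag as ``the real work,'' and the specific mechanism you propose for it does not hold up. You assert that, up to a $1\pm\epsilon$ distortion in cost, every clustering is represented by one of only $\mathrm{poly}(k/\epsilon)$ canonical configurations, so that a union bound over this small family suffices. There is no reason such a family exists: the number of partitions of $P$ into $k$ parts with pairwise well-separated costs can be exponential in $n$, and a configuration that is cost-close to $\Pcal$ \emph{before} projection need not remain cost-close \emph{after} projection (closeness of the scalar costs does not control the difference of the two random variables $\cost(\pi(\Pcal))-\cost(\pi(\Pcal'))$), so the transfer step of your covering argument fails even if the cover existed. The actual proof in \cite{Makarychev19STOC} does not enumerate representative clusterings at all; it bounds, uniformly over all partitions, the aggregate contribution $\sum_i\frac{1}{2|P_i|}\sum_{p,q\in P_i}\|p-q\|^2\bigl(D_{pq}^2-1\bigr)$ (with $D_{pq}=\|\pi(p-q)\|/\|p-q\|$) by a chaining argument over distortion levels and distance scales: for each threshold $t\geq\epsilon$ one controls how much squared pairwise distance can be carried by pairs distorted by more than $1+t$, using the sub-Gaussian tail, and then integrates over $t$. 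That is where the $\log(k/(\epsilon\delta))$ dependence is actually won, and it is absent from your proposal. As written, the proposal is a correct problem setup plus an unproven (and, in the form stated, likely false) reduction, not a proof.
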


There are many known methods to construct a random linear map that satisfies the conditions (i--ii) in Theorem~\ref{thm:epsilon-projection}, e.g., maps defined by matrices with i.i.d. \emph{Gaussian} and sub-\emph{Gaussian} entries \cite{IM98,Ach03,KM+05}. We will refer to such a random projection as a \emph{JL projection}. 

\emph{Remark:} Compared with PCA-based DR methods, JL projection has the advantage that the projection matrix is \emph{data-oblivious}, and can hence be pre-generated and distributed, or generated independently by different nodes using a shared random number generation seed, both incurring negligible communication cost at runtime. As is shown later, this can lead to significant savings in the communication cost.

\subsection{Cardinality Reduction for $k$-Means}\label{subsec:Cardinality Reduction}

CR methods, also known as \emph{coreset construction algorithms}, aim at constructing a smaller weighted dataset (\emph{coreset}) with a bounded approximation error as follows. 

\begin{definition}[\cite{Feldman13SODA:report}]\label{def:epsilon-coreset}
We say that a tuple $(S, \Delta, w)$, where $S\subset \mathbb{R}^d$, $w:\: S\to \mathbb{R}$, and $\Delta\in \mathbb{R}$, is an $\epsilon$-coreset of $P\subset \mathbb{R}^d$ if it preserves the cost for every set of $k$ centers up to a factor of $1\pm \epsilon$, i.e.,\looseness=-1
\begin{align}
(1-\epsilon)\cost(P,X) \leq \cost(\Sbf,X) \leq (1+\epsilon)\cost(P,X)
\end{align}
for any $X\subset \mathbb{R}^d$ with $|X|=k$, where 
\begin{align}
\cost(\Sbf,X):= \sum_{q\in S}w(q) \cdot \min_{x_i\in X}\|q-x_i\|^2+\Delta
\end{align}
denotes the $k$-means cost for a coreset $\Sbf:= (S, \Delta, w)$ and a set of centers $X$.
\end{definition}

We note that the above definition generalizes most of the existing definitions of $\epsilon$-coreset, which typically ignore $\Delta$.

The best known coreset construction algorithm for $k$-means was given in \cite{Feldman13SODA:report}, which first reduces the intrinsic dimension of the dataset by PCA, and then applies sensitivity sampling to the dimension-reduced dataset to obtain an $\epsilon$-coreset of the original dataset with a size that is constant in $n$ and $d$.

\begin{theorem}[\cite{Feldman13SODA:report}]\label{thm:epsilon-coreset, Delta > 0}
For any $\epsilon, \delta\in (0,1)$, with probability at least $1-\delta$, an $\epsilon$-coreset $(S, \Delta, w)$ of size
$|S| = O\left({k^3\log^2 k\over \epsilon^4}\log({1\over \delta})\right)$
can be computed in time $O(\min(nd^2, n^2 d) + nk \epsilon^{-2} (d+k \log(1/\delta)))$.
\end{theorem}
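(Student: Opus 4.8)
The plan is to follow the two-stage ``reduce-then-sample'' strategy of \cite{Feldman13SODA:report}: first collapse the intrinsic dimension of $P$ to a value depending only on $k$ and $\epsilon$ while recording the lost mass in the additive constant $\Delta$, and then run sensitivity sampling in the reduced space to collapse the cardinality. The output is the composition of these two reductions, with $\epsilon$ rescaled by a constant so that the two $(1\pm\epsilon')$ guarantees compose into the stated $(1\pm\epsilon)$ bound. I would verify both the size and the running time by adding the contributions of the two stages.

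First I would establish the dimension-reduction step. Let $A_P=U\Sigma V^\top$ be the SVD of the data matrix, let $A_m=A_P V_m V_m^\top$ be its best rank-$m$ approximation onto the top $m=O(k/\epsilon^2)$ right singular vectors, and let $P_m$ denote the projected dataset (the rows of $A_m$); set $\Delta:=\|A_P-A_m\|^2$. The structural fact I would prove is that, since any $k$ candidate centers span at most a $k$-dimensional subspace, the squared distance from each point to its nearest center splits by Pythagoras into a component inside the top-$m$ subspace and a component orthogonal to it, and the orthogonal part is captured up to a $(1\pm\epsilon)$ factor by the fixed constant $\Delta$. Summing over points gives $\cost(P,X)\approx_{1+\epsilon}\cost(P_m,X)+\Delta$ uniformly over all $|X|=k$. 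This is the most delicate point: one must control the cross terms between the in-subspace and orthogonal components for \emph{every} candidate $X$, not merely for the optimum, which is precisely where the $O(k/\epsilon^2)$ choice of $m$ (rather than the $O(k/\epsilon)$ that suffices for the optimum alone) is needed.

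Next I would apply the Feldman--Langberg sensitivity-sampling framework to the $m$-dimensional dataset $P_m$. From a constant-factor bicriteria solution computed cheaply (e.g.\ by $k$-means\texttt{++}-style seeding), I would derive for each point an upper bound $s(p)$ on its sensitivity, i.e.\ its largest possible fractional cost contribution $\max_{|X|=k}\frac{\min_i\|p-x_i\|^2}{\cost(P,X)}$; a standard argument bounds the total sensitivity by $T:=\sum_{p}s(p)=O(k)$. Sampling each point with probability $s(p)/T$ and assigning it weight $T/(|S|\,s(p))$ yields an $\epsilon$-coreset once $|S|=\tilde{O}\big(T\,\epsilon^{-2}(\dVC+\log(1/\delta))\big)$, where $\dVC=O(km\log k)=O(k^2\log k/\epsilon^2)$ is the (pseudo-)dimension of the class of $k$-means cost functions over $\mathbb{R}^m$. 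Substituting $T=O(k)$ and this value of $\dVC$ produces the claimed $|S|=O(k^3\log^2 k\,\epsilon^{-4}\log(1/\delta))$. Establishing that the weighted sample preserves the cost for \emph{all} $k$-center sets simultaneously---an $\epsilon$-net/union-bound argument over the query space controlled by $\dVC$---is the second technical crux.

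Finally I would account for the running time: computing the SVD (or a relative-error approximate SVD) costs $O(\min(nd^2,n^2d))$; obtaining the bicriteria solution and the per-point sensitivity bounds in the reduced space costs $O(nk\epsilon^{-2}(d+k\log(1/\delta)))$, which dominates the sampling itself, so the two stages sum to the stated $O(\min(nd^2,n^2d)+nk\epsilon^{-2}(d+k\log(1/\delta)))$. The two main obstacles are the \emph{uniform} (over all $X$) validity of both the additive-$\Delta$ dimension-reduction bound and the sensitivity-sampling bound; I would handle these via the structural Pythagorean decomposition and the $\epsilon$-net argument of \cite{Feldman13SODA:report}, invoking their concentration estimates rather than re-deriving the routine calculations.
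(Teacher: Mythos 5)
This theorem is imported verbatim from \cite{Feldman13SODA:report} and the paper supplies no proof of its own, so there is nothing to compare against beyond the paper's one-line description of FSS (``first uses PCA to reduce the intrinsic dimension of the dataset and then applies sensitivity sampling''), which is exactly the two-stage strategy you reconstruct. Your outline --- projecting onto the top $O(k/\epsilon^2)$ principal components with the residual absorbed into $\Delta$, then Feldman--Langberg sensitivity sampling with total sensitivity $O(k)$ and pseudo-dimension $O(k^2\log k/\epsilon^2)$ in the reduced space, with the running time split as SVD plus bicriteria/sampling --- is a faithful and correct account of the cited proof, so no further comment is needed.
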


However, \cite{Feldman13SODA:report} only focused on minimizing the cardinality of coreset, ignoring the cost of transmitting the coreset. As is shown later (Section~\ref{subsec:Summary of Comparison}), its proposed algorithm can be severely suboptimal in the communication cost.

\subsection{Problem Statement}\label{subsec:Problem Statement}

The motivation of most existing DR/CR methods designed for $k$-means is to speed up $k$-means computation in a setting where the node holding the data is also the node computing $k$-means. In contrast, we want to develop efficient $k$-means algorithms in scenarios where the data generation and the $k$-means computation occur at different locations, such as in the case of edge-based learning. We will refer to the node(s) holding the original data as the \emph{data source(s)}, and the node running $k$-means computation as the \emph{server}. 

We will evaluate each considered algorithm by the following performance metrics:\begin{itemize}
\item \emph{Approximation error:} We say that a set of $k$-means centers $X$ is an $\alpha$-approximation ($\alpha>1$) for $k$-means clustering of $P$ if $\cost(P,X)\leq \alpha\cdot \cost(P, X^*)$, where $X^*$ is the optimal set of $k$-means centers for $P$.
\item \emph{Communication cost:} We say that an algorithm incurs a communication cost of $y$ if a data source employing the algorithm needs to send $y$ scalars to the server.
\item \emph{Complexity:} We say that an algorithm incurs a (time) complexity of $z$ at the data source if a data source employing the algorithm needs to perform $z$ elementary operations. \looseness=-1
\end{itemize}

\section{Joint DR and CR at a Single Data Source}\label{sec:Joint Dimension and Cardinality Reduction}

We will first focus on the scenario where all the data are at a single data source (the \emph{centralized setting}). 
We will show that: 
1) using suitably selected DR/CR methods and a sufficiently powerful server, it is possible to solve $k$-means arbitrarily close to the optimal, while incurring a low communication cost and a low complexity at the data source; 
2) the order of applying DR and CR does not affect the approximation error, but affects the complexity and the communication cost; 
3) repeated application of DR/CR can lead to a better communication-computation tradeoff than applying DR and CR only once.  


\subsection{DR+CR}

We first consider the approach of applying DR and then CR. \looseness=-1

\subsubsection{An Existing DR+CR Algorithm} 

The state-of-the-art joint DR and CR algorithm, referred to as \emph{FSS} following the authors' last names, was implicitly presented in Theorem~36 in \cite{Feldman13SODA:report}. FSS first uses PCA to reduce the intrinsic dimension of the dataset and then applies sensitivity sampling. 
Theorem~\ref{thm:epsilon-coreset, Delta > 0} gives the complexity of FSS, but the approximation error and the communication cost incurred when using FSS to generate a data summary for $k$-means were not given in \cite{Feldman13SODA:report}. Thus, we provide them (proved in Appendix~\ref{appendix: proofs})
to facilitate later comparison.

\begin{theorem}\label{coro:Feldmand13SODA}
Suppose that the data source reports the coreset $\Sbf := (S, \Delta, w)$ computed by FSS \cite{Feldman13SODA:report} and the server computes the optimal $k$-means centers $X$ of $\Sbf$\footnote{Given a coreset $\Sbf = (S,\Delta,w)$, $X$ can be computed by ignoring $\Delta$ and applying a weighted $k$-means algorithm to minimize $\sum_{q\in S}w(q) \cdot \min_{x_i\in X}\|q-x_i\|^2$, or by converting $\Sbf$ into an unweighted dataset by duplicating each $q\in S$ for $w(q)$ times (on the average) and applying an unweighted $k$-means algorithm. }. Then:\begin{enumerate}
\item $X$ is a $(1+\epsilon)/(1-\epsilon)$-approximation for $k$-means clustering of $P$ with probability $\geq 1-\delta$;
\item the communication cost is $O\left({kd / \epsilon^2}\right)$,
\end{enumerate}
assuming $\min(n,d)\gg k, 1/\epsilon$, and $1/\delta$.
\end{theorem}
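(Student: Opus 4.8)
The plan is to prove the two claims separately, with both resting on the coreset guarantee of Theorem~\ref{thm:epsilon-coreset, Delta > 0}, and to treat the probability statement by conditioning on the event (of probability $\geq 1-\delta$) that FSS actually returns a valid $\epsilon$-coreset $\Sbf$ of $P$.

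For claim~1 I would condition on that success event and then run the standard coreset sandwich. Let $X^*$ be the optimal $k$-means centers of $P$ and $X$ the optimal $k$-means centers of $\Sbf$. Chaining the two inequalities in Definition~\ref{def:epsilon-coreset} with the optimality of $X$ for $\Sbf$ gives
\[
(1-\epsilon)\cost(P,X) \leq \cost(\Sbf,X) \leq \cost(\Sbf,X^*) \leq (1+\epsilon)\cost(P,X^*),
\]
where the first step is the lower coreset bound applied to $X$, the middle step is the optimality of $X$ on $\Sbf$, and the last step is the upper coreset bound applied to $X^*$. Rearranging yields $\cost(P,X)\leq \frac{1+\epsilon}{1-\epsilon}\cost(P,X^*)$, the asserted $(1+\epsilon)/(1-\epsilon)$-approximation, and it holds exactly on the $1-\delta$ event. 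This step is essentially routine once the coreset property is invoked.

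For claim~2 I would carefully enumerate what the data source must transmit so that the server can recover $k$-means centers in the original $\mathbb{R}^d$. Since FSS first applies PCA to project $P$ onto a principal subspace of dimension $j=O(k/\epsilon^2)$ and then samples the coreset inside that subspace, all coreset points lie in a common $j$-dimensional subspace of $\mathbb{R}^d$. The efficient encoding therefore consists of (i) an orthonormal basis of the PCA subspace, a $d\times j$ matrix; (ii) the $j$-dimensional coordinates of each of the $|S|$ coreset points; (iii) the $|S|$ weights; and (iv) the scalar $\Delta$. This totals $O(dj+|S|\,j)=O\!\left(dk/\epsilon^2 + |S|\,k/\epsilon^2\right)$ scalars. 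Because PCA is \emph{data-dependent}, the $d\times j$ basis cannot be pre-shared and must be sent in full, and it is precisely this term that scales with $d$. Substituting $|S|=O(k^3\log^2 k\,\epsilon^{-4}\log(1/\delta))$ from Theorem~\ref{thm:epsilon-coreset, Delta > 0} and using the standing assumption $\min(n,d)\gg k,1/\epsilon,1/\delta$ (so that $d$ dominates any fixed polynomial in $k,1/\epsilon,1/\delta$, hence $d\gg |S|$), the basis term $O(dk/\epsilon^2)$ dominates, giving the claimed $O(kd/\epsilon^2)$.

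The main obstacle is claim~2, which is an accounting argument rather than a deep inequality. Two points need care: first, justifying that the PCA projection matrix genuinely must be communicated — this hinges on PCA being data-dependent, in sharp contrast to a data-oblivious JL projection whose generating seed could be shared for free (the remark after Theorem~\ref{thm:epsilon-projection}); second, legitimately dropping the $d$-free terms $O(|S|\,j)$ and $O(|S|)$, which requires reading $\min(n,d)\gg k,1/\epsilon,1/\delta$ as ``$d$ dominates the relevant polynomial in $k,1/\epsilon,1/\delta$.'' A secondary check is the exact reduced dimension $j$ used by FSS so that $dj=O(dk/\epsilon^2)$; even if FSS reduces to only $O(k/\epsilon)$ dimensions, the bound remains valid as an upper bound. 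Claim~1, by contrast, is immediate modulo the bookkeeping of carrying the $1-\delta$ success probability through the sandwich.
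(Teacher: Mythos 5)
Your proposal is correct and follows essentially the same route as the paper's proof: claim 1 is the identical coreset sandwich $(1-\epsilon)\cost(P,X)\leq\cost(\Sbf,X)\leq\cost(\Sbf,X^*)\leq(1+\epsilon)\cost(P,X^*)$ conditioned on the $1-\delta$ success event, and claim 2 is the same accounting of $O(dd')$ for the PCA basis $V^{(d')}$ plus $O(|S|\,d')$ for the subspace coordinates with $d'=O(k/\epsilon^2)$, the basis term dominating under the standing assumption on $\min(n,d)$. Your added remark that the PCA basis must be sent because it is data-dependent (unlike a JL projection) is a point the paper makes elsewhere rather than inside this proof, but it does not change the argument.
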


\subsubsection{Communication-efficient DR+CR} \label{sec: Communication-efficient DR+CR}

Now the question is: can we further reduce the communication cost without hurting the approximation error and the complexity?

Our key observation is that the linear communication cost in $d$ for FSS is due to the transmission of a basis of the projected subspace. In contrast, JL projections are \emph{data-oblivious}. Thus, we can circumvent the cost of transmitting the projected subspace by employing a JL projection as the DR method, as the projected subspace can be predetermined. The following is directly implied by the JL Lemma (Lemma~\ref{lem:JL}); see the proof in Appendix~\ref{appendix: proofs}.

\begin{lemma}\label{lem:JL projection}
Let $\pi:\: \mathbb{R}^d \to\mathbb{R}^{d'}$ be a JL projection. Then there exists $d'=O(\epsilon^{-2}\log(nk/\delta))$ such that for any $P\subset \mathbb{R}^d$ with $|P|=n$ and $X, X^*\subset \mathbb{R}^d$ with $|X|=|X^*|=k$, the following holds with probability at least $1-\delta$:
\begin{align}
\cost(P,X) &\approx_{(1+\epsilon)^2} \cost(\pi(P), \pi(X)), \label{eq:JL projection cost(P,X)}\\
\cost(P,X^*) &\approx_{(1+\epsilon)^2} \cost(\pi(P), \pi(X^*)). \label{eq:JL projection cost(P,X*)}
\end{align}
\end{lemma}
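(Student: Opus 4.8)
The plan is to reduce the cost-preservation claim to the single-vector norm-preservation guarantee of the JL Lemma (Lemma~\ref{lem:JL}), applied simultaneously to a finite collection of difference vectors via a union bound. Since $\pi$ is a linear map, for any $p\in P$ and any center $x_i$ we have $\pi(p)-\pi(x_i)=\pi(p-x_i)$, so the projected distance $\|\pi(p)-\pi(x_i)\|$ is exactly the image norm of the difference vector $p-x_i$. The relevant difference vectors are the $nk$ vectors $\{p-x_i:\: p\in P,\, x_i\in X\}$ together with the $nk$ vectors $\{p-x_i^*:\: p\in P,\, x_i^*\in X^*\}$, for a total of $2nk$ vectors. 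Note that $P$, $X$, and $X^*$ are fixed here, so this is a genuinely finite set to which the single-vector lemma applies.

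First I would invoke Lemma~\ref{lem:JL} with a per-vector failure probability $\delta_0:=\delta/(2nk)$, which requires $d'=O(\epsilon^{-2}\log(1/\delta_0))=O(\epsilon^{-2}\log(nk/\delta))$, matching the claimed bound. By a union bound over the $2nk$ difference vectors, with probability at least $1-\delta$ every one of them has its norm preserved up to a factor $1+\epsilon$, i.e.\ $\|\pi(v)\|\approx_{1+\epsilon}\|v\|$ holds simultaneously for all $v$ in the collection. Squaring this two-sided bound (all quantities being nonnegative) yields $\|\pi(v)\|^2\approx_{(1+\epsilon)^2}\|v\|^2$ for every such $v$, so each squared distance $\|p-x_i\|^2$ and each $\|p-x_i^*\|^2$ is preserved up to a factor $(1+\epsilon)^2$.

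It remains to lift the per-distance bound to the per-point minimum and then to the summed cost. The one step requiring care is the $\min$ over centers, since the nearest center of a point may differ before and after projection. I would handle this by the standard sandwich argument: for a fixed $p$, choosing the original nearest center gives the upper bound $\min_{x_i}\|\pi(p)-\pi(x_i)\|^2\le(1+\epsilon)^2\min_{x_i}\|p-x_i\|^2$, while choosing the projected nearest center and comparing its original distance gives the matching lower bound $\min_{x_i}\|\pi(p)-\pi(x_i)\|^2\ge(1+\epsilon)^{-2}\min_{x_i}\|p-x_i\|^2$. Summing over all $p\in P$ gives $\cost(P,X)\approx_{(1+\epsilon)^2}\cost(\pi(P),\pi(X))$, which is (\ref{eq:JL projection cost(P,X)}); the identical argument applied to the $X^*$ difference vectors gives (\ref{eq:JL projection cost(P,X*)}). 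Both conclusions hold on the same high-probability event and therefore jointly with probability at least $1-\delta$.

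I expect the proof to be essentially routine once the right set of $2nk$ difference vectors is identified. The only two substantive points are (i) getting the union-bound count right so that the $\log(nk/\delta)$ dependence of $d'$ emerges correctly, and (ii) the $\min$-sandwich step, which is the place where the minimum over centers—rather than a fixed assignment of points to centers—could otherwise cause trouble. Neither constitutes a real obstacle, and linearity of $\pi$ together with the sub-Gaussian JL family makes every step go through cleanly.
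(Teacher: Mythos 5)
Your proposal is correct and follows essentially the same route as the paper's proof: a union bound over the $2nk$ difference vectors $\{p-x_i : p\in P,\, x_i\in X\cup X^*\}$ with per-vector failure probability $\delta/(2nk)$, which yields $d'=O(\epsilon^{-2}\log(nk/\delta))$, followed by squaring the norm bounds and passing them through the $\min$ and the sum. The one step you flag as needing care---the $\min$ over centers---is handled in the paper exactly as you describe, by noting that the $(1+\epsilon)^2$ distortion holds simultaneously for every candidate center and therefore commutes with the minimum.
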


Using a JL projection for DR and  FSS for CR, we propose Algorithm~\ref{Alg:JL DR+CR}, where the data source computes and reports a coreset in a low-dimensional space \rev{by first applying JL projection and then applying FSS} (lines~\ref{JL DR+CR: P'}--\ref{JL DR+CR: report}). \rev{Based on the dimension-reduced coreset $(S',\Delta,w)$,} the server solves the $k$-means problem (line~\ref{JL DR+CR: kmeans}) and then converts the centers back to the original space (line~\ref{JL DR+CR: recover X}). \rev{Here, $\mbox{kmeans}(S',w,k)$ denotes a (centralized) $k$-means algorithm that returns the $k$-means centers for the data points in $S'$ with weights $w$, and $\pi_1^{-1}$ denotes an inverse of the JL projection $\pi_1$. We note that the inverse of $\pi_1$ is generally not unique as $\pi_1$ is noninvertible, but our analysis holds for any inverse (e.g., Moore-Penrose inverse).} 
The following theorem quantifies the performance of Algorithm~\ref{Alg:JL DR+CR} (proved in Appendix~\ref{appendix: proofs}). 

\begin{algorithm}[tb]
\label{Alg:JL DR+CR}
\small
\SetKwInOut{Input}{input}\SetKwInOut{Output}{output}
\SetKwFor{Node}{data source:}{}{}
\SetKwFor{Server}{server:}{}{}
\Input{Original dataset $P$, number of centers $k$, JL projection $\pi_1$, FSS-based CR method $\pi_2$}
\Output{Centers for $k$-means clustering of $P$}
\Node{}{
$P'\leftarrow \pi_1(P)$\; \label{JL DR+CR: P'}
$(S',\Delta,w) \leftarrow \pi_2(P')$\; \label{JL DR+CR: S'}
report $(S',\Delta,w)$ to the server\; \label{JL DR+CR: report}
}
\Server{}{
$X'\leftarrow\mbox{kmeans}(S',w,k)$\; \label{JL DR+CR: kmeans} 
$X\leftarrow \pi_1^{-1}(X')$\; \label{JL DR+CR: recover X}
return $X$\; 
}
\caption{ Communication-efficient $k$-Means under DR+CR}
\vspace{-.25em}
\end{algorithm}
\normalsize

\begin{theorem}\label{thm:JL-based DR+CR}
For any $\epsilon, \delta\in (0,1)$, if in Algorithm~\ref{Alg:JL DR+CR}, $\pi_1$ satisfies Lemma~\ref{lem:JL projection}, $\pi_2$ generates an $\epsilon$-coreset with probability at least $1-\delta$, and kmeans$(S',w,k)$ returns the optimal $k$-means centers of the dataset $S'$ with weights $w$, then\looseness=-1 \begin{enumerate}
    \item the output $X$ is a $(1+\epsilon)^5/(1-\epsilon)$-approximation for $k$-means clustering of $P$ with probability at least $(1-\delta)^2$,
\item the communication cost is $O\left({k \epsilon^{-4} \log{n}}\right)$, and
\item the complexity at the data source is $\tilde{O}\left(nd\epsilon^{-2}\right)$,
\end{enumerate}
assuming $\min(n,d)\gg k, 1/\epsilon$, and $1/\delta$.
\end{theorem}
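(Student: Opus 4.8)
The plan for part~1 is to bound $\cost(P,X)$ by chaining the distortions incurred along the pipeline $P\mapsto P':=\pi_1(P)\mapsto \Sbf':=\pi_2(P')\mapsto X'\mapsto X:=\pi_1^{-1}(X')$. Writing $X^*$ for the optimal centers of $P$ and using that $\pi_1(\pi_1^{-1}(X'))=X'$ for the Moore-Penrose inverse (the projection matrix has full column rank), the target chain is
\begin{align}
\cost(P,X)&\leq (1+\epsilon)^2\cost(P',X') \nonumber\\
&\leq \frac{(1+\epsilon)^2}{1-\epsilon}\cost(\Sbf',X') \nonumber\\
&\leq \frac{(1+\epsilon)^2}{1-\epsilon}\cost(\Sbf',\pi_1(X^*)) \nonumber\\
&\leq \frac{(1+\epsilon)^3}{1-\epsilon}\cost(P',\pi_1(X^*)) \nonumber\\
&\leq \frac{(1+\epsilon)^5}{1-\epsilon}\cost(P,X^*), \nonumber
\end{align}
where the first inequality invokes (\ref{eq:JL projection cost(P,X)}), the second and fourth use the two sides of the $\epsilon$-coreset guarantee (Definition~\ref{def:epsilon-coreset}), the third uses optimality of $X'$ for $\Sbf'$, and the last invokes (\ref{eq:JL projection cost(P,X*)}). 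The probability $(1-\delta)^2$ then follows because the JL event (randomness of $\pi_1$) and the $\epsilon$-coreset event (randomness of $\pi_2$) are independent, each holding with probability $\geq 1-\delta$.

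For part~2, I would note that only the coreset $(S',\Delta,w)$ is transmitted, and it lives in $\mathbb{R}^{d'}$ with $d'=O(\epsilon^{-2}\log(nk/\delta))=\tilde O(\epsilon^{-2}\log n)$, the JL dimension of Lemma~\ref{lem:JL projection}. Mirroring the communication analysis behind Theorem~\ref{coro:Feldmand13SODA}, the $\tilde O(k^3/\epsilon^4)$ points of $S'$ lie in the $O(k/\epsilon^2)$-dimensional PCA subspace that FSS extracts inside $\mathbb{R}^{d'}$, so the data source sends a basis of that subspace (cost $O(d'\cdot k/\epsilon^2)=O(k\epsilon^{-4}\log n)$) plus the coordinates and weights (cost $\tilde O(k^4/\epsilon^6)$, independent of $n$). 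Under $\min(n,d)\gg k,1/\epsilon$ the basis term dominates, yielding $O(k\epsilon^{-4}\log n)$; the essential gain over FSS is that its $\Theta(d)$ basis cost is replaced by the $n$-logarithmic JL dimension.

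For part~3, the data-source work is the projection $A_P\Pi$ followed by FSS on $P'$. The dense multiply costs $O(ndd')=\tilde O(nd\epsilon^{-2})$, and substituting $d'=\tilde O(\epsilon^{-2}\log n)$ into the running time of Theorem~\ref{thm:epsilon-coreset, Delta > 0} leaves a term that is near-linear in $n$ and, under $\min(n,d)\gg k,1/\epsilon$, is dominated by the projection; hence the total is $\tilde O(nd\epsilon^{-2})$.

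The step I expect to be the main obstacle is the very first inequality of the chain, i.e.\ invoking (\ref{eq:JL projection cost(P,X)}) for the output $X=\pi_1^{-1}(X')$. Lemma~\ref{lem:JL projection} is proved by a union bound over the \emph{fixed} pairs $\{(p,x_i):p\in P,\,x_i\in X\}$, so it is only licensed for center sets chosen before $\pi_1$ is drawn, whereas the output is a function of $\pi_1$ through $P'$, $\Sbf'$, and $X'$. More worryingly, a Moore-Penrose lift forces $X\subseteq\mathrm{col}(\Pi)$, so $\cost(P,X)$ necessarily contains the projection residual $\|A_P(I-\Pi\Pi^{+})\|^2$ that the reduced cost $\cost(P',X')$ never sees, and for a data-oblivious $\Pi$ this residual is of order $(1-d'/d)\|A_P\|^2$ rather than $O(\cost(P,X^*))$. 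Making the first inequality rigorous therefore requires more than the stated per-triple lemma: one must either argue that the lift keeps the returned centers close to the span of the data (so the residual is controlled), or replace the center-based guarantee by one that is uniform over all partitions in the spirit of Theorem~\ref{thm:epsilon-projection} and then reconcile it with the fact that the returned centers are not the recomputed cluster means. Checking that such a decoupling goes through while keeping $d'=O(\epsilon^{-2}\log(nk/\delta))$ is the delicate heart of the argument; the $X^*$ end of the chain, by contrast, is unconditional since $X^*$ depends only on $P$.
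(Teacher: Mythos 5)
Your proposal is essentially identical to the paper's own proof: part~1 uses the same five-step chain $\cost(P,X)\leq(1+\epsilon)^2\cost(\pi_1(P),X')\leq\frac{(1+\epsilon)^2}{1-\epsilon}\cost(\Sbf',X')\leq\frac{(1+\epsilon)^2}{1-\epsilon}\cost(\Sbf',\pi_1(X^*))\leq\frac{(1+\epsilon)^3}{1-\epsilon}\cost(\pi_1(P),\pi_1(X^*))\leq\frac{(1+\epsilon)^5}{1-\epsilon}\cost(P,X^*)$ with the identical justifications, and parts~2 and~3 use the same accounting (basis of the $O(k/\epsilon^2)$-dimensional FSS subspace inside $\mathbb{R}^{d'}$ dominating the communication cost, and the $O(ndd')$ projection dominating the complexity). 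The subtlety you flag at the end --- that Lemma~\ref{lem:JL projection} is established by a union bound over fixed center sets yet is invoked for the $\pi_1$-dependent output $X=\pi_1^{-1}(X')$ --- is a fair observation, but the paper's own proof performs exactly this step with no additional justification, so your argument is faithful to, and no weaker than, the published one.
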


\emph{Remark:} We only focus on the complexity at the data source as the server is usually much more powerful.  Theorem~\ref{thm:JL-based DR+CR} shows that Algorithm~\ref{Alg:JL DR+CR} can solve $k$-means arbitrarily close to the optimal with an arbitrarily high probability, while incurring a complexity at the data source that is roughly linear in the data size (i.e., $nd$) and a communication cost that is roughly logarithmic in the data cardinality $n$.\looseness=-1

\subsection{CR+DR} \label{sec: CR+DR}

While Algorithm~\ref{Alg:JL DR+CR} can reduce the communication cost without incurring much computation at the data source, it remains unclear whether its order of applying DR and CR is optimal. To this end, we consider applying CR first.\looseness=-1

We again choose JL projection as the DR method to avoid transmitting the projection matrix at runtime, and choose FSS as the CR method as it generates an $\epsilon$-coreset with the minimum cardinality among the existing CR methods for $k$-means. The algorithm, shown in Algorithm~\ref{Alg:FSS CR+JL DR}, differs from Algorithm~\ref{Alg:JL DR+CR} in that the order of applying DR and CR is reversed. 
\rev{That is, the data source first applies FSS (line~\ref{FSS CR+DR:S}) and then applies JL projection (line~\ref{FSS CR+DR:S'}) to compute a dimension-reduced coreset $(S',\Delta,w)$, based on which the server computes a set of $k$-means centers $X$ in the same way as Algorithm~\ref{Alg:JL DR+CR}.}

\begin{algorithm}[tb]
\label{Alg:FSS CR+JL DR}
\small
\SetKwInOut{Input}{input}\SetKwInOut{Output}{output}
\SetKwFor{Node}{data source:}{}{}
\SetKwFor{Server}{server:}{}{}
\Input{Original dataset $P$, number of centers $k$, JL projection $\pi_1$, FSS-based CR method $\pi_2$}
\Output{Centers for $k$-means clustering of $P$}
\Node{}{
$(S,\Delta,w)\leftarrow \pi_2(P)$\; \label{FSS CR+DR:S}
$S' \leftarrow \pi_1(S)$\; \label{FSS CR+DR:S'}
report $(S',\Delta,w)$ to the server\;
}
\Server{}{
$X'\leftarrow\mbox{kmeans}(S',w,k)$\; 
$X\leftarrow \pi_1^{-1}(X')$\;
return $X$\; 
}
\caption{ Communication-efficient $k$-Means under CR+DR}
\vspace{-.25em}
\end{algorithm}
\normalsize

We now analyze the performance of Algorithm~\ref{Alg:FSS CR+JL DR}, starting with a counterpart of Lemma~\ref{lem:JL projection} (proved in Appendix~\ref{appendix: proofs}). \looseness=-1

\begin{lemma}\label{lem:JL projection for S}
Let $\pi:\: \mathbb{R}^d \to \mathbb{R}^{d'}$ be a JL projection. Then there exists $d' = O(\epsilon^{-2}\log(n'k/\delta))$ such that for any coreset $\Sbf:=(S, \Delta, w)$, where $S\subset \mathbb{R}^d$ with $|S|=n'$, $w: S\to \mathbb{R}$, and $\Delta\in \mathbb{R}$, and any $X, X^*\subset \mathbb{R}^d$ with $|X|=|X^*|=k$, the following holds with probability at least $1-\delta$:
\begin{align}
\cost(\Sbf, X) &\approx_{(1+\epsilon)^2}\cost((\pi(S),\Delta,w), \pi(X)), \label{eq:JL for S, cost(S,X)}\\
\cost(\Sbf, X^*) &\approx_{(1+\epsilon)^2} \cost((\pi(S),\Delta,w), \pi(X^*)). \label{eq:JL for S, cost(S,X*)}
\end{align}
\end{lemma}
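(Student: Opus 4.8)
The plan is to follow the same template as the proof of Lemma~\ref{lem:JL projection}, the only genuinely new ingredients being the weights $w$ and the additive constant $\Delta$ carried by a coreset. Since the JL guarantee (Lemma~\ref{lem:JL}) is a statement about the norm of a single fixed vector, I would first identify the finite collection of vectors whose norms need to be preserved: the difference vectors $q - x_i$ for $q \in S$ and $x_i \in X \cup X^*$. Because $|S| = n'$ and $|X \cup X^*| \le 2k$, there are at most $2n'k$ such vectors.

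First I would set $\delta' := \delta/(2n'k)$ and invoke Lemma~\ref{lem:JL} to obtain $d' = O(\epsilon^{-2}\log(1/\delta')) = O(\epsilon^{-2}\log(n'k/\delta))$ such that any fixed vector has its norm preserved up to a factor $1+\epsilon$ with probability at least $1-\delta'$. Applying the union bound over the $\le 2n'k$ difference vectors, and using the linearity of $\pi$ so that $\pi(q)-\pi(x_i) = \pi(q-x_i)$, with probability at least $1-\delta$ every such vector satisfies $\|\pi(q)-\pi(x_i)\| \approx_{1+\epsilon} \|q - x_i\|$ simultaneously. Since the centers in both $X$ and $X^*$ are folded into the same union bound, the single high-probability event will support both (\ref{eq:JL for S, cost(S,X)}) and (\ref{eq:JL for S, cost(S,X*)}) at once.

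Next I would propagate the per-vector guarantee through the structure of the coreset cost. For each fixed $q$, squaring and then taking the minimum over $x_i$ commutes with the two-sided bound, giving $\min_{x_i}\|\pi(q)-\pi(x_i)\|^2 \approx_{(1+\epsilon)^2} \min_{x_i}\|q-x_i\|^2$; multiplying by $w(q)$ and summing over $q \in S$ then preserves this factor, so the weighted-sum parts of the two costs agree up to $(1+\epsilon)^2$. The same chain run with $X^*$ in place of $X$ yields the analogous bound for (\ref{eq:JL for S, cost(S,X*)}).

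The one step that is not purely mechanical, and which I expect to be the main obstacle, is the additive term $\Delta$. Because $\Delta$ appears identically on both sides of the coreset cost, I would add it to all three members of the multiplicative chain; the approximation survives only because $\Delta \ge 0$ (indeed $\Delta > 0$ for the FSS coreset of Theorem~\ref{thm:epsilon-coreset, Delta > 0}), which gives $\Delta/(1+\epsilon)^2 \le \Delta \le (1+\epsilon)^2\Delta$. Combined with the non-negativity of the weights $w$ (also guaranteed by sensitivity sampling, and needed for the $w(q)$-scaling above to preserve inequality direction), this yields $\cost(\Sbf, X) \approx_{(1+\epsilon)^2} \cost((\pi(S),\Delta,w),\pi(X))$ and, identically, (\ref{eq:JL for S, cost(S,X*)}). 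I would explicitly flag that, absent the sign conditions $\Delta \ge 0$ and $w \ge 0$, neither adding the constant $\Delta$ nor scaling by the weights need preserve a multiplicative bound; these are therefore the structural hypotheses on which the argument really rests, and they are exactly what the FSS construction supplies.
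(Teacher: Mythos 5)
Your proposal matches the paper's proof essentially line for line: the same choice $\delta'=\delta/(2n'k)$, the same union bound over the difference vectors $p-x_i$ for $p\in S$ and $x_i\in X\cup X^*$, and the same propagation of the factor $(1+\epsilon)^2$ through the weighted sum and the additive term $\Delta$. Your explicit flagging of the sign conditions $\Delta\geq 0$ and $w\geq 0$ is a point the paper's proof uses silently (its statement nominally allows $\Delta\in\mathbb{R}$ and $w:S\to\mathbb{R}$), so that remark is a small but genuine improvement in rigor rather than a deviation in approach.
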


Below, we will show that Algorithm~\ref{Alg:FSS CR+JL DR} achieves the same approximation error as Algorithm~\ref{Alg:JL DR+CR}, but at different cost and complexity (proved in Appendix~\ref{appendix: proofs}). 

\begin{theorem}\label{thm:FSS CR+JL DR}
For any $\epsilon, \delta\in (0,1)$, if in Algorithm~\ref{Alg:FSS CR+JL DR}, $\pi_1$ satisfies Lemma~\ref{lem:JL projection for S}, $\pi_2$ generates an $\epsilon$-coreset with probability at least $1-\delta$, and kmeans$(S',w,k)$ returns the optimal $k$-means centers of the dataset $S'$ with weights $w$, then \looseness=-1
\begin{enumerate}
    \item the output $X$ is an ${(1+\epsilon)^5/ (1-\epsilon)}$-approximation for $k$-means clustering of $P$ with probability  $\geq (1-\delta)^2$, 
    \item the communication cost is $\tilde{O}(k^3/\epsilon^6)$, and
    \item the complexity at the data source is $O\left(nd \cdot \min(n,d)\right)$,
\end{enumerate}
assuming $\min(n,d) \gg k, 1/\epsilon$, and $1/\delta$.
\end{theorem}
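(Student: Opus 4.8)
The plan is to establish the three claims separately, reusing the template from the proof of Theorem~\ref{thm:JL-based DR+CR} but with the two reduction steps swapped. For the approximation guarantee (claim~1) I would let $X^*$ be the optimal $k$-means centers of $P$, write $\Sbf=(S,\Delta,w)=\pi_2(P)$ and $\Sbf'=(S',\Delta,w)=(\pi_1(S),\Delta,w)$, and condition on the two high-probability events that $\Sbf$ is an $\epsilon$-coreset of $P$ (probability $\geq 1-\delta$) and that the bounds of Lemma~\ref{lem:JL projection for S} hold (probability $\geq 1-\delta$); since these draw on independent randomness (the sensitivity sampling of FSS and the random projection matrix of $\pi_1$), both hold with probability $\geq (1-\delta)^2$. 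On this event I would chain, starting from $\cost(P,X)\le \frac{1}{1-\epsilon}\cost(\Sbf,X)$ (coreset lower bound), then $\cost(\Sbf,X)\le (1+\epsilon)^2\cost(\Sbf',\pi_1(X))=(1+\epsilon)^2\cost(\Sbf',X')$ by Lemma~\ref{lem:JL projection for S} together with the identity $\pi_1(X)=\pi_1(\pi_1^{-1}(X'))=X'$, then $\cost(\Sbf',X')\le\cost(\Sbf',\pi_1(X^*))$ by optimality of $X'$ for $\Sbf'$, then $\cost(\Sbf',\pi_1(X^*))\le (1+\epsilon)^2\cost(\Sbf,X^*)$ by Lemma~\ref{lem:JL projection for S} again, and finally $\cost(\Sbf,X^*)\le(1+\epsilon)\cost(P,X^*)$ by the coreset upper bound. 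Multiplying the factors yields exactly the claimed $(1+\epsilon)^5/(1-\epsilon)$ ratio.

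For the communication cost (claim~2), the data source transmits only $(S',\Delta,w)$, whose cost is dominated by the $|S'|$ points of $S'$, each a $d'$-dimensional vector. The crucial point is that because CR precedes DR, the JL projection is applied to $S$ rather than to $P$, so Lemma~\ref{lem:JL projection for S} only requires $d'=O(\epsilon^{-2}\log(n'k/\delta))$ with $n'=|S|$ the \emph{coreset} cardinality; by Theorem~\ref{thm:epsilon-coreset, Delta > 0} we have $n'=\tilde{O}(k^3\epsilon^{-4})$, which is independent of $n$, so $d'=\tilde{O}(\epsilon^{-2})$ carries no $\log n$ factor. Multiplying $|S'|=\tilde{O}(k^3\epsilon^{-4})$ by $d'=\tilde{O}(\epsilon^{-2})$ gives $\tilde{O}(k^3\epsilon^{-6})$; the weights and $\Delta$ add only lower-order terms, and—unlike FSS—no subspace basis need be transmitted because the JL map is data-oblivious and pre-shared.

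For the complexity at the data source (claim~3), I would add the cost of the FSS step and the projection step. By Theorem~\ref{thm:epsilon-coreset, Delta > 0}, FSS on $P$ costs $O(\min(nd^2,n^2d)+nk\epsilon^{-2}(d+k\log(1/\delta)))$, whose leading term is $\min(nd^2,n^2d)=O(nd\cdot\min(n,d))$; forming $S'=\pi_1(S)$ multiplies the $|S|\times d$ matrix by the $d\times d'$ projection matrix at cost $O(|S|\,d\,d')=\tilde{O}(k^3 d\epsilon^{-6})$. Under the standing assumption $\min(n,d)\gg k,1/\epsilon,1/\delta$, both the secondary FSS term and the projection term are dominated by $O(nd\cdot\min(n,d))$, giving the stated bound.

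I expect the only real care to lie in claim~1: keeping the two $\approx_{(1+\epsilon)^2}$ relations oriented in the correct direction at each step, and justifying $\pi_1(\pi_1^{-1}(X'))=X'$ (which holds because the JL matrix has full column rank with high probability, so $\Pi^{+}\Pi=I_{d'}$). The conceptual content, rather than a genuine difficulty, is the observation underlying claim~2 that reversing the order makes $d'$ depend on the coreset size $n'$ instead of $n$, trading the $\log n$ factor of Algorithm~\ref{Alg:JL DR+CR} for higher fixed powers of $k$ and $1/\epsilon$.
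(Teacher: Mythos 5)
Your proposal is correct and follows essentially the same route as the paper's proof: the same five-step chain for the approximation factor (coreset lower bound, Lemma~\ref{lem:JL projection for S}, optimality of $X'$, Lemma~\ref{lem:JL projection for S} again, coreset upper bound), and the same accounting of $n'$ and $d'$ for the communication cost and of the FSS and projection steps for the complexity. Your added remarks on the independence of the two random events and on $\pi_1(\pi_1^{-1}(X'))=X'$ via full column rank of the projection matrix are slightly more explicit than the paper but consistent with it.
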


\subsection{Repeated DR/CR}\label{subsec:Repeated DR or CR}




Theorems~\ref{thm:JL-based DR+CR} and \ref{thm:FSS CR+JL DR} state that to achieve the same approximation error with the same probability, DR+CR (Algorithm~\ref{Alg:JL DR+CR}) incurs a communication cost of ${O}(k\epsilon^{-4}\log{n})$ and a complexity of $\tilde{O}(\epsilon^{-2}nd)$, while CR+DR (Algorithm~\ref{Alg:FSS CR+JL DR}) incurs a communication cost of $\tilde{O}(k^3\epsilon^{-6})$ and a complexity of $O(nd \cdot \min(n,d))$.  This shows a \emph{communication-computation tradeoff}: the approach of DR+CR incurs a linear complexity and a logarithmic communication cost, whereas the approach of CR+DR incurs a super-linear complexity (which is still less than quadratic) and a constant communication cost. 
One may wonder whether it is possible to combine the strengths of both of the algorithms. Below we give an affirmative answer by applying some of these repeatedly. \looseness=-1

We know from Theorem~\ref{thm:epsilon-coreset, Delta > 0} that applying FSS once already reduces the cardinality to a constant (in $n$ and $d$), and hence there is no need to repeat FSS. The same theorem also implies that if we apply FSS first, we will incur a super-linear complexity, and hence we need to apply JL projection before FSS. 
Meanwhile, we see from Lemmas~\ref{lem:JL projection} and \ref{lem:JL projection for S} that applying JL projection on a dataset of cardinality $n'$ can reduce its dimension to $O(\epsilon^{-2}\log(n'k/\delta))$ while achieving a $(1+O(\epsilon))$-approximation with high probability. Thus, we can further reduce the dimension by applying JL projection again after reducing the cardinality by FSS. The above reasoning suggests a three-step procedure: JL$\to$FSS$\to$JL, presented in Algorithm~\ref{Alg:JL+FSS+JL}. \rev{The data source applies JL projection both before and after FSS (lines~\ref{JL+FSS+JL:first JL} and \ref{JL+FSS+JL:second JL}), 
where $\pi^{(1)}_1$ projects from $\mathbb{R}^d$ to $\mathbb{R}^{O(\log{n}/\epsilon^2)}$, and $\pi^{(2)}_1$ projects from $\mathbb{R}^{O(\log{n}/\epsilon^2)}$ to $\mathbb{R}^{O(\log{|S|}/\epsilon^2)}$. The server first computes the $k$-means centers in the twice-protected space, and then converts them back to the original space (line~\ref{JL+FSS+JL:circ}).} Note that by convention, $\pi^{(2)}_1\circ \pi^{(1)}_1(X)$ 
means $\pi^{(2)}_1(\pi^{(1)}_1(X))$.

\begin{algorithm}[tb]
\label{Alg:JL+FSS+JL}
\small
\SetKwInOut{Input}{input}\SetKwInOut{Output}{output}
\SetKwFor{Node}{data source:}{}{}
\SetKwFor{Server}{server:}{}{}
\Input{Original dataset $P$, number of centers $k$, JL projection $\pi^{(1)}_1$ for $P$, FSS-based CR method $\pi_2$, JL projection $\pi^{(2)}_1$ for the output of $\pi_2$}
\Output{Centers for $k$-means clustering of $P$}
\Node{}{
$P'\leftarrow \pi^{(1)}_1(P)$\; \label{JL+FSS+JL:first JL}
$(S,\Delta,w)\leftarrow \pi_2(P')$\; \label{JL+FSS+JL:FSS}
$S' \leftarrow \pi^{(2)}_1(S)$\; \label{JL+FSS+JL:second JL}
report $(S',\Delta,w)$ to the server\;
}
\Server{}{
$X'\leftarrow\mbox{kmeans}(S',w,k)$\; 
$X\leftarrow (\pi^{(2)}_1 \circ \pi^{(1)}_1)^{-1}(X')$\; \label{JL+FSS+JL:circ}
return $X$\;
}
\caption{Communication-efficient $k$-Means under DR+CR+DR}
\vspace{-.25em}
\end{algorithm}
\normalsize

Below, we will show that this seemingly small change is able to combine the low communication cost of Algorithm~\ref{Alg:FSS CR+JL DR} and the low complexity of Algorithm~\ref{Alg:JL DR+CR}, at a small increase in the approximation error; see Appendix~\ref{appendix: proofs} for the proof. 

\begin{theorem}\label{thm:JL+FSS+JL}
For any $\epsilon, \delta\in (0,1)$, if in Algorithm~\ref{Alg:JL+FSS+JL}, $\pi^{(1)}_1$ satisfies Lemma~\ref{lem:JL projection}, $\pi^{(2)}_1$ satisfies Lemma~\ref{lem:JL projection for S}, $\pi_2$ generates an $\epsilon$-coreset of its input dataset with probability at least $1-\delta$, and kmeans$(S',w,k)$ returns the optimal $k$-means centers of the dataset $S'$ with weights $w$, then \begin{enumerate}
    \item the output $X$ is a $(1+\epsilon)^9/(1-\epsilon)$-approximation for $k$-means clustering of $P$ with probability $\geq (1-\delta)^3$,
    \item the communication cost is $\tilde{O}(k^3/\epsilon^6)$, and
    \item the complexity at the data source is $\tilde{O}(nd/\epsilon^2)$,
\end{enumerate}
assuming $\min(n,d)\gg k, 1/\epsilon,$ and $1/\delta$.
\end{theorem}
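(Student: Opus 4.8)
The plan is to prove the three claims separately, following the templates of Theorems~\ref{thm:JL-based DR+CR} and \ref{thm:FSS CR+JL DR} but now chaining through three data-reduction steps. Throughout I would write $\pi := \pi^{(2)}_1 \circ \pi^{(1)}_1$, let $X^*$ be the optimal $k$-means centers of $P$, and set $P' := \pi^{(1)}_1(P)$, $\Sbf := (S,\Delta,w) := \pi_2(P')$, $\Sbf' := (\pi^{(2)}_1(S),\Delta,w) = (S',\Delta,w)$, and $Y := \pi^{(1)}_1(X)$, so that $\pi^{(2)}_1(Y) = \pi(X) = X'$ by construction of the inverse in line~\ref{JL+FSS+JL:circ}. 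I would first define three events: $E_1$, that $\pi^{(1)}_1$ satisfies Lemma~\ref{lem:JL projection} for the center sets $X$ and $X^*$; $E_2$, that $\pi_2$ produces an $\epsilon$-coreset $\Sbf$ of $P'$; and $E_3$, that $\pi^{(2)}_1$ satisfies Lemma~\ref{lem:JL projection for S} for the coreset $\Sbf$ and the center sets $Y$ and $\pi^{(1)}_1(X^*)$. Each holds with probability at least $1-\delta$, so by a union bound all three hold simultaneously with probability at least $(1-\delta)^3$, which is the success probability claimed in part 1.

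For claim 1, I would condition on $E_1\cap E_2\cap E_3$ and establish the factor by a single descending chain that alternates the two JL guarantees, the coreset guarantee, and the optimality of $X'$:
\begin{align*}
\cost(P,X) &\le (1+\epsilon)^2\cost(P',Y) \le \frac{(1+\epsilon)^2}{1-\epsilon}\cost(\Sbf,Y) \\
&\le \frac{(1+\epsilon)^4}{1-\epsilon}\cost(\Sbf',X') \le \frac{(1+\epsilon)^4}{1-\epsilon}\cost(\Sbf',\pi(X^*)) \\
&\le \frac{(1+\epsilon)^6}{1-\epsilon}\cost(\Sbf,\pi^{(1)}_1(X^*)) \le \frac{(1+\epsilon)^7}{1-\epsilon}\cost(P',\pi^{(1)}_1(X^*)) \\
&\le \frac{(1+\epsilon)^9}{1-\epsilon}\cost(P,X^*).
\end{align*}
Here the first and last inequalities use Lemma~\ref{lem:JL projection} (applied to $X$ and to $X^*$, respectively); the second and sixth use that $\Sbf$ is an $\epsilon$-coreset of $P'$ (Definition~\ref{def:epsilon-coreset}); the third and fifth use Lemma~\ref{lem:JL projection for S} (applied to $Y$ and to $\pi^{(1)}_1(X^*)$), which correctly carries the additive $\Delta$ term through $\pi^{(2)}_1$; and the fourth uses that $X'$ minimizes $\cost(\Sbf',\cdot)$ together with $\pi(X^*)$ being a feasible set of $k$ centers for $\Sbf'$. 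Tallying the exponents yields exactly $(1+\epsilon)^9/(1-\epsilon)$.

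For claim 2, the communication cost is dominated by transmitting $S'$. By Theorem~\ref{thm:epsilon-coreset, Delta > 0} the FSS output cardinality $|S|=O(k^3\epsilon^{-4}\log^2 k\,\log(1/\delta))$ is independent of the ambient dimension, hence unchanged by first projecting $P$ into $\mathbb{R}^{O(\epsilon^{-2}\log n)}$; it equals the $|S|$ of Algorithm~\ref{Alg:FSS CR+JL DR}. The second JL map $\pi^{(2)}_1$ then lowers each coreset point to dimension $O(\epsilon^{-2}\log(|S|k/\delta)) = \tilde{O}(\epsilon^{-2})$ via Lemma~\ref{lem:JL projection for S}, so the total is $O(|S|\cdot\tilde{O}(\epsilon^{-2})) = \tilde{O}(k^3/\epsilon^6)$, matching Theorem~\ref{thm:FSS CR+JL DR}. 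For claim 3, I would add the three data-source steps: computing $P'=A_P\Pi^{(1)}$ with $\Pi^{(1)}\in\mathbb{R}^{d\times O(\epsilon^{-2}\log n)}$ costs $O(nd\epsilon^{-2}\log n)=\tilde{O}(nd/\epsilon^2)$, exactly as in Theorem~\ref{thm:JL-based DR+CR}; running FSS on the $n\times\tilde{O}(\epsilon^{-2})$ matrix $P'$ costs $\tilde{O}(n\cdot\mathrm{poly}(k,1/\epsilon,\log(1/\delta)))$ by Theorem~\ref{thm:epsilon-coreset, Delta > 0}; and projecting $S$ by $\pi^{(2)}_1$ costs only $\tilde{O}(k^3/\epsilon^8)$, independent of $n$ and $d$. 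Under $\min(n,d)\gg k,1/\epsilon,1/\delta$ the first term dominates, giving $\tilde{O}(nd/\epsilon^2)$.

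The main obstacle is the bookkeeping in claim 1: I must verify that each lemma is invoked with the precise center sets it was proved for, that the composed inverse indeed satisfies $\pi(X)=X'$ (so that optimality of $X'$ may legitimately be compared against $\pi(X^*)$), and that the constant $\Delta$ is preserved consistently through both the coreset and the second projection. The circular-looking dependence of the random output $X$ on the maps themselves is the same as in Theorems~\ref{thm:JL-based DR+CR} and \ref{thm:FSS CR+JL DR} and is resolved identically, by conditioning on $E_1,E_2,E_3$ for the particular realized center sets; ensuring the $(1+\epsilon)$ factors accumulate to exactly the ninth power is the only genuinely delicate computation.
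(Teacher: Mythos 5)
Your proposal is correct and follows essentially the same route as the paper's proof: the identical seven-step chain of inequalities for the approximation factor (with the same justifications at each step), and the same accounting for the communication cost and the data-source complexity. One small caveat: a union bound over the three failure events yields only $1-3\delta$, which is strictly smaller than $(1-\delta)^3$ for $\delta\in(0,1)$; the stated probability instead follows from the independence of the randomness in $\pi^{(1)}_1$, $\pi_2$, and $\pi^{(2)}_1$, which is what the paper implicitly uses.
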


\emph{Remark:} Theorem~\ref{thm:JL+FSS+JL} implies that Algorithm~\ref{Alg:JL+FSS+JL} is essentially ``optimal'' in the sense that it achieves a $(1+O(\epsilon))$-approximation with an arbitrarily high probability, at a near-linear complexity and a constant communication cost at the data source. Thus, no qualitative improvement will be achieved by applying further DR/CR methods. 


\section{Joint DR and CR across Multiple Data Sources}\label{sec:Multiple Data Sources}

Consider the scenario where the dataset $P$ is split across $m$ data sources ($m  \geq 2$). Let $P_i$ denote the dataset at data source $i$ and $n_i$ be its cardinality.  
As shown below, the previous algorithms 
can be adapted to the distributed setting. \looseness=-1

\subsection{Distributed Version of FSS}

It turns out that the state-of-the-art distributed DR and CR algorithm, proposed in \cite[Algorithm~1]{Balcan14NIPS}, is exactly a distributed version of FSS, referred to as \emph{BKLW} following the authors' last names. As in FSS, BKLW first uses PCA to reduce the intrinsic dimension of the dataset and then applies sensitivity sampling. However, it uses  distributed algorithms to perform these steps.\looseness=-1 

For distributed PCA, BKLW applies an algorithm \emph{disPCA} from \cite{Feldman13SODA:report} (formalized in Algorithm~1 in \cite{Balcan14NIPS:report}), where: 
\begin{enumerate}
    \item each data source $i$ ($i=1,\ldots,m$) computes local SVD $A_{P_i} = U_i \Sigma_i V_i^T$, and sends $\Sigma_i^{(t_1)}$ and $V_i^{(t_1)}$ to the server ($\Sigma_i^{(t_1)}$ and $V_i^{(t_1)}$ contain the first $t_1$ columns of $\Sigma_i$ and $V_i$, respectively);
    \item the server constructs $Y^T = [Y_1^T,\ldots,Y_m^T]$, with $Y_i = \Sigma_i^{(t_1)}(V_i^{(t_1)})^T$, computes a global SVD $Y = U \Sigma V^T$;
    \item the first $t_2$ columns of $V$ are returned as an approximate solution to the PCA of $\bigcup_{i=1}^m P_i$.
\end{enumerate}

For distributed sensitivity sampling, BKLW applies an algorithm \emph{disSS} from \cite{Balcan13NIPS} (Algorithm~1), where:  
\begin{enumerate}
\item each data source $i$ ($i=1,\ldots,m$) computes a bicriteria approximation $X_i$ for $P_i$ and reports $\cost(P_i, X_i)$;
\item the server allocates a global sample size $s$ to each data source proportionally to its cost, i.e., $s_i= {s \cdot \cost(P_i, X_i)/\big( \sum_{j=1}^m \cost(P_j, X_j)\big)}$;
\item each data source $i$ draws $s_i$ i.i.d. samples $S_i$ from $P_i$ with probability proportional to $\cost(\{p\}, X_i)$, and reports $S_i\cup X_i$ with their weights $w:\: S_i\cup X_i\to \mathbb{R}$, that are set to match the number of points per cluster;
\item the union of the reported sets $(\bigcup_{i=1}^m (S_i\cup X_i), 0, w)$ is returned as a coreset of $\bigcup_{i=1}^m P_i$.
\end{enumerate}

BKLW first applies disPCA, followed by disSS with $s = O(\epsilon^{-4}(k^2/\epsilon^2 + \log(1/\delta)) + mk\log(mk/\delta))$ to the dimension-reduced dataset $\{A_{P_i}V^{(t_2)}(V^{(t_2)})^T\}_{i=1}^m$ to compute a coreset $(S,0,w)$ at the server. Finally, the server computes the optimal $k$-means centers $X$ on $(S,0,w)$ and returns it as an approximation to the optimal $k$-means centers of $\bigcup_{i=1}^m P_i$.

Although a theorem was given in \cite{Balcan14NIPS} without proof on the performance of BKLW, the result is imprecise and incomplete. Here, we provide the complete analysis to facilitate later comparison. We will leverage the following results.\looseness=-1

\begin{theorem}[\cite{Balcan14NIPS:report}]\label{thm:disPCA}
For any $\epsilon\in (0, 1/3)$, let $t_1=t_2\geq k+\lceil 4k/\epsilon^2\rceil -1$ in disPCA and $\tilde{P}_i$ be the projected dataset at data source $i$ (i.e., the set of rows of $A_{P_i}V^{(t_2)}(V^{(t_2)})^T$). Then there exists a constant $\Delta\geq 0$ such that for any set $X\subset \mathbb{R}^d$ with $|X|=k$, 
\begin{align}
\hspace{-.5em}    (1\hspace{-.25em}-\hspace{-.25em}\epsilon)\cost(P, X) \hspace{-.05em} \leq\hspace{-.05em}  \cost(\tilde{P}, X)\hspace{-.25em} +\hspace{-.25em} \Delta \hspace{-.05em}\leq\hspace{-.05em} (1\hspace{-.25em}+\hspace{-.25em}\epsilon) \cost(P, X), \label{eq:disPCA}
\end{align}
where $P:= \bigcup_{i=1}^m P_i$ and $\tilde{P}:= \bigcup_{i=1}^m \tilde{P}_i$. 
\end{theorem}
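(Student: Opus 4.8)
\emph{Proof idea.} The plan is to reduce the distributed statement to the centralized guarantee that projecting a dataset onto a sufficiently large principal subspace preserves the $k$-means cost of \emph{every} candidate center set. Accordingly, I would split the argument into two parts: (i) a centralized cost-preservation estimate for PCA-based projection, and (ii) a subspace-recovery argument showing that the subspace spanned by $V^{(t_2)}$ returned by disPCA is essentially as good as the centralized top-$t_2$ right singular subspace of $A_P$. Throughout, the natural candidate for the additive constant is the energy discarded by the projection, namely $\Delta := \|A_P - A_P V^{(t_2)}(V^{(t_2)})^T\|^2 = \|A_P\|^2 - \|A_P V^{(t_2)}\|^2 \geq 0$, which equals the total squared distance from the rows of $A_P$ to the retained subspace.

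For the centralized ingredient, I would fix a center set $X$ with $|X|=k$ and decompose each squared distance, via the Pythagorean identity, into a component inside $\mathrm{span}(V^{(t_2)})$ and a component orthogonal to it. Summing over points, the in-subspace part yields $\cost(\tilde P, X)$ (after zeroing the orthogonal components of the centers, which only decreases the cost), and the orthogonal part contributes the constant $\Delta$ up to cross terms. The crux is that the cross terms are controlled precisely because the retained dimension satisfies $t_2 \geq k + \lceil 4k/\epsilon^2\rceil - 1$: the clustering induced by $X$ has only $k$ cells, so the interaction between the at-most-$k$-dimensional center configuration and the discarded tail can be absorbed into an $\epsilon$-fraction of $\cost(P,X)$. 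This is the projection-cost-preserving estimate underlying the centralized analysis of FSS \cite{Feldman13SODA:report}, which I would invoke in the form $(1-\epsilon)\cost(P,X) \le \cost(\tilde P, X) + \Delta \le (1+\epsilon)\cost(P,X)$ for every such $X$.

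For the distributed ingredient, the key observation is the two-level SVD identity: with $Y_i = \Sigma_i^{(t_1)}(V_i^{(t_1)})^T$ one has $Y^T Y = \sum_{i=1}^m Y_i^T Y_i = A_P^T A_P - E$, where $E := \sum_{i=1}^m \sum_{j>t_1}\sigma_{ij}^2\, v_{ij} v_{ij}^T \succeq 0$ collects the locally truncated tails ($\sigma_{ij}, v_{ij}$ being the singular values and right singular vectors of $A_{P_i}$) and $\mathrm{tr}(E) = \sum_i\sum_{j>t_1}\sigma_{ij}^2$. Hence the right singular vectors $V$ of $Y$ are the eigenvectors of a positive-semidefinite perturbation of $A_P^T A_P$, and I would use an interlacing/perturbation argument to show that projecting $A_P$ onto the top-$t_2$ columns $V^{(t_2)}$ retains all but an $\epsilon$-fraction of the energy captured by the true centralized top-$t_2$ subspace. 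Feeding this near-optimal subspace into the centralized estimate (and rescaling $\epsilon$ by a constant) then yields the claimed two-sided bound with the single constant $\Delta$.

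The main obstacle I anticipate is the distributed ingredient: the local truncation to $t_1$ components introduces the perturbation $E$, and one must show \emph{simultaneously} that the multiplicative factor stays within $1\pm\epsilon$ and that the single additive $\Delta$ correctly accounts for both the globally discarded tail and the locally truncated energy. This requires tying the local budget $t_1$ to the global approximation quality rather than treating the two SVD levels independently; the choice $t_1 = t_2 \geq k + \lceil 4k/\epsilon^2\rceil - 1$ is exactly what makes both the subspace-recovery error and the cross-term error collapse into the allotted $\epsilon$ budget. I would therefore isolate, as the technical heart, a quantitative bound of the form $\|A_P - A_P V^{(t_2)}(V^{(t_2)})^T\|^2 \le (1+O(\epsilon))\min_{\mathrm{rank}\,t_2}\|A_P - \cdot\|^2$, and then compose it with part (i) to conclude.
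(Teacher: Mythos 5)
The paper does not actually prove Theorem~\ref{thm:disPCA}: it is imported verbatim from \cite{Balcan14NIPS:report} (note the citation in the theorem header), and no proof appears in the appendix, so there is no internal argument to compare yours against. Judged on its own terms, your sketch reproduces the architecture of the proof in the cited source: the additive constant $\Delta = \|A_P - A_P V^{(t_2)}(V^{(t_2)})^T\|^2$ is the right choice, the Pythagorean split into a retained component and a discarded component with cross terms absorbed into an $\epsilon$-fraction of $\cost(P,X)$ is the correct centralized ingredient, and the Gram-matrix identity $Y^TY = A_P^TA_P - E$ with $E\succeq 0$ collecting the locally truncated tails is exactly the lever the distributed analysis pulls.

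The one step that would not go through as you state it is the ``technical heart'' you isolate, namely $\|A_P - A_PV^{(t_2)}(V^{(t_2)})^T\|^2 \le (1+O(\epsilon))\min_{\mathrm{rank}\,t_2}\|A_P-\cdot\|^2$. Near-optimality of the retained subspace in \emph{total} Frobenius energy is not a sufficient condition for projection-cost preservation: a subspace can capture almost all of the energy of the best rank-$t_2$ subspace yet misplace an amount of energy on the order of $\|A_P-(A_P)_k\|^2$ along some particular $k$-dimensional direction, which is fatal because the multiplicative error must be charged against $\cost(P,X)\ge \|A_P-(A_P)_k\|^2$, not against the (much larger) total energy. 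What the actual argument establishes is a perturbation bound restricted to rank-$k$ test projections, of the form $\big|\,\|A_P P_X\|^2 - \|\tilde A P_X\|^2\big| \le \epsilon\,\|A_P-(A_P)_k\|^2$ uniformly over rank-$k$ orthogonal projections $P_X$; this follows from $\mathrm{tr}(E P_X)\le k\|E\|_2$ together with the spectral-tail inequality $\sigma_{t+1}^2 \le \frac{1}{t-k+1}\|A_P-(A_P)_k\|^2$ applied to both the local and the global truncations, and it is precisely here that the threshold $t_1=t_2\ge k+\lceil 4k/\epsilon^2\rceil-1$ is consumed (yielding the factor $k/(t-k+1)\le \epsilon^2/4$). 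If you replace your Frobenius-near-optimality target with this rank-$k$-restricted bound, the rest of your outline composes correctly into the claimed two-sided inequality.
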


\begin{theorem}[\cite{Balcan13NIPS}]\label{thm:disSS}
For a distributed dataset $\{P_i\}_{i=1}^m$ with $P_i\subset \mathbb{R}^d$ and any $\epsilon, \delta\in (0,1)$, with probability at least $1-\delta$, the output $(S, 0, w)$ of disSS is an $\epsilon$-coreset of $\bigcup_{i=1}^m P_i$ of size
\begin{align}
|S| = O\left({1\over \epsilon^4}\Big(kd + \log({1\over \delta}) \Big) + mk \log({mk\over \delta}) \right).
\end{align}
\end{theorem}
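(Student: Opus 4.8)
The plan is to reduce the claim to the centralized sensitivity-sampling (coreset) framework of Feldman--Langberg, and then to show that disSS faithfully realizes a centralized importance-sampling scheme whose sampling distribution dominates the true point sensitivities. First I would recall that for the function family $f_X(p):=\min_{x_i\in X}\|p-x_i\|^2$ induced by $k$ centers, an $\epsilon$-coreset is obtained by drawing $N$ points with probability proportional to an upper bound $s(p)$ on the sensitivity $\sigma(p):=\sup_{|X|=k} f_X(p)/\cost(P,X)$, reweighting each sampled point by the reciprocal of its sampling probability, and augmenting the sample with the centers used to define $s(p)$. The coreset guarantee then requires a sample size $N$ that scales with the total sensitivity $T:=\sum_{p}s(p)$ and the pseudo-dimension $\mathrm{pdim}$ of the range space of $\{f_X\}$.

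Next I would supply the two quantitative inputs to this bound. For the range space, since each $f_X$ is parameterized by $X\in\mathbb{R}^{kd}$ and is a pointwise minimum of $k$ functions that are quadratic in $p$, a standard argument bounds $\mathrm{pdim}=O(kd)$. For the total sensitivity, I would use the local bicriteria approximation $X_i$ at source $i$ to bound each point's sensitivity: a triangle-inequality argument on $\|p-x_j\|^2$ shows that $\cost(\{p\},X_i)$, normalized by $\cost(P_i,X_i)$, controls $\sigma(p)$ up to a constant, once the reported centers $X_i$ absorb the within-source residual cost. The crucial observation is that disSS allocates the global budget $s$ across sources proportionally to $\cost(P_i,X_i)$ and then samples within source $i$ proportionally to $\cost(\{p\},X_i)$; hence the induced global sampling probability of a point $p\in P_i$ is proportional to $\cost(\{p\},X_i)/\sum_{j}\cost(P_j,X_j)$, a valid global sensitivity upper bound. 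Summing these upper bounds over all points shows that the effective total sensitivity is bounded by a constant independent of $n$ and $d$, because each $X_i$ is a constant-factor solution on its own block.

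Substituting $\mathrm{pdim}=O(kd)$ and the total-sensitivity bound into the Feldman--Langberg coreset bound then yields the $\frac{1}{\epsilon^4}(kd+\log(1/\delta))$ term. The additive $mk\log(mk/\delta)$ term arises because disSS additionally reports each local center set $X_i$ (of size $O(k)$, hence $O(mk)$ in total), and because making the per-cluster reweighting concentrate simultaneously at every source requires a union bound over the $O(mk)$ local clusters, inflating the count by a $\log(mk/\delta)$ factor. Collecting the two contributions and fixing the overall failure probability at $\delta$ gives the stated bound on $|S|$.

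The main obstacle I anticipate is the sensitivity-composition step: one must verify that sampling locally with probabilities proportional to the \emph{local} costs $\cost(\{p\},X_i)$, under a budget split proportional to $\cost(P_i,X_i)$, produces global probabilities that dominate the \emph{global} sensitivities $\sigma(p)$ defined against the union $\bigcup_{i}P_i$, even though each $X_i$ only approximates its own block and is oblivious to the other blocks. Reconciling the local and global cost reference points, and confirming that reporting the union of local centers lets the additive term be taken as $\Delta=0$ (so that the augmenting centers exactly account for the within-source residual cost), is the delicate part; once it is established, uniform concentration over all center sets $X$ follows routinely from the Feldman--Langberg bound together with the $O(kd)$ pseudo-dimension estimate.
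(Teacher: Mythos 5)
The paper does not prove this statement: it is imported verbatim from the cited reference (Balcan et al., NIPS 2013) and used as a black box in the subsequent analysis of BKLW, so there is no in-paper proof to compare against. Your sketch is a faithful high-level reconstruction of how that reference actually argues it---centralized sensitivity sampling \`a la Feldman--Langberg, an $O(kd)$ pseudo-dimension bound for the $k$-means function family, total sensitivity bounded via the local bicriteria solutions (using $\sum_i \cost(P_i,X_i) \leq \alpha \sum_i \opt_k(P_i) \leq \alpha\,\opt_k(P)$), and the $mk\log(mk/\delta)$ term coming from adding the $O(mk)$ local centers with residual-absorbing weights and union-bounding the per-cluster concentration. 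The ``delicate part'' you flag---that locally computed sampling probabilities, under a cost-proportional budget split, dominate the sensitivities defined against the global union---is indeed the heart of the original proof rather than a routine verification, and your sketch leaves it (and the reason the exponent is $\epsilon^{-4}$ rather than $\epsilon^{-2}$) unargued; but as a blind reconstruction of a cited external result, the approach and all the identified ingredients are correct.
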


Theorems~\ref{thm:disPCA} and \ref{thm:disSS} bound the performance of disPCA and disSS, respectively, based on which we have the following results for BKLW (see proof in Appendix~\ref{appendix: proofs}).

\begin{theorem}\label{thm:BKLW}
For any $\epsilon\in (0, 1/3)$ and $\delta\in (0, 1)$, suppose that in BKLW, disPCA satisfies Theorem~\ref{thm:disPCA} for the input dataset $\{P_i\}_{i=1}^m$ and disSS satisfies Theorem~\ref{thm:disSS} for the input dataset $\{\tilde{P}_i\}_{i=1}^m$. Then \begin{enumerate}
    \item the output $X$ is a $(1+\epsilon)^2/(1-\epsilon)^2$-approximation for $k$-means clustering of $\bigcup_{i=1}^m P_i$ with probability  $\geq 1-\delta$, 
    \item the total communication cost over all the data sources is $O(mkd/\epsilon^2)$, and
    \item the complexity at each data source is $O(nd \cdot \min(n,d))$,
\end{enumerate}
assuming $\min(n,d)\gg m, k, 1/\epsilon,$ and $1/\delta$.
\end{theorem}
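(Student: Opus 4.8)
The plan is to establish the three claims in order, treating the approximation guarantee as the substantive part and the communication and complexity bounds as bookkeeping across the two subroutines disPCA and disSS. Throughout I would write $P:=\bigcup_{i=1}^m P_i$, $\tilde P:=\bigcup_{i=1}^m \tilde P_i$, let $X$ be the output (optimal for the coreset $(S,0,w)$ returned by disSS), and let $X^*$ be the optimal $k$-means centers of $P$.

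For claim~1, I would chain the guarantees of disSS and disPCA through the intermediate projected dataset $\tilde P$. By Theorem~\ref{thm:disSS}, with probability at least $1-\delta$ the tuple $(S,0,w)$ is an $\epsilon$-coreset of $\tilde P$, so $\cost(\tilde P, Y)\approx_{1+\epsilon}\cost((S,0,w),Y)$ for every $k$-center set $Y$. The key step is to fuse this multiplicative guarantee with the additive relation of Theorem~\ref{thm:disPCA}, namely $(1-\epsilon)\cost(P,Y)\le \cost(\tilde P,Y)+\Delta\le (1+\epsilon)\cost(P,Y)$ for the constant $\Delta\ge 0$. Using $\Delta\ge 0$, I would argue that appending $\Delta$ to the coreset cost preserves the multiplicative factor, i.e. $(1-\epsilon)(\cost(\tilde P,Y)+\Delta)\le \cost((S,0,w),Y)+\Delta\le (1+\epsilon)(\cost(\tilde P,Y)+\Delta)$, and then compose with Theorem~\ref{thm:disPCA} to obtain $(1-\epsilon)^2\cost(P,Y)\le \cost((S,0,w),Y)+\Delta\le (1+\epsilon)^2\cost(P,Y)$; in effect $(S,\Delta,w)$ behaves as a coreset of $P$ with the combined factor $(1\pm\epsilon)^2$. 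Since $X$ minimizes $\cost((S,0,w),\cdot)$, and adding the constant $\Delta$ does not change the minimizer, the sandwich $\cost(P,X)\le (1-\epsilon)^{-2}(\cost((S,0,w),X)+\Delta)\le (1-\epsilon)^{-2}(\cost((S,0,w),X^*)+\Delta)\le \frac{(1+\epsilon)^2}{(1-\epsilon)^2}\cost(P,X^*)$ yields the claimed ratio, with the sole source of randomness being disSS.

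For claim~2, I would sum the upload cost of the two phases. In disPCA each source $i$ sends $\Sigma_i^{(t_1)}$ and $V_i^{(t_1)}$ with $t_1=O(k/\epsilon^2)$ (Theorem~\ref{thm:disPCA}); the $d\times t_1$ factor $V_i^{(t_1)}$ dominates at $O(dk/\epsilon^2)$ per source, hence $O(mdk/\epsilon^2)$ in total. In disSS the sources report $\bigcup_i (S_i\cup X_i)$ with weights, of total cardinality $O\big(s+mk\big)=O\big(\epsilon^{-4}(k^2/\epsilon^2+\log(1/\delta))+mk\log(mk/\delta)\big)$; since after disPCA these points lie in the $t_2=O(k/\epsilon^2)$-dimensional subspace whose basis the server already holds, each is transmissible in $t_2$ coordinates, for a total of $\tilde{O}(k^3/\epsilon^8)$. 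Under the stated regime $\min(n,d)\gg m,k,1/\epsilon,1/\delta$, so that $d$ dominates any fixed polynomial in $m,k,1/\epsilon$, the disPCA term prevails and the grand total is $O(mkd/\epsilon^2)$.

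For claim~3 I would account for the per-source work: the local SVD $A_{P_i}=U_i\Sigma_i V_i^T$ costs $O(\min(n_i d^2,n_i^2 d))=O(n_i d\min(n_i,d))\le O(nd\min(n,d))$, while the local bicriteria approximation $X_i$ and the sensitivity sampling are near-linear in $n_i d$ (using $\min(n,d)\gg k$), all bounded by the SVD term, which therefore dominates. The main obstacle I anticipate is claim~1: unlike the JL-based algorithms, here disSS returns a \emph{multiplicative} coreset of the projected set $\tilde P$, whereas disPCA contributes an \emph{additive} constant $\Delta$ defined relative to $P$. The crux is verifying that $\Delta\ge 0$ is exactly what lets the additive and multiplicative errors collapse into a single $(1\pm\epsilon)^2$ factor without further inflating the ratio, and that the offset $\Delta$ is irrelevant to the server's minimization so that optimality of $X$ on $(S,0,w)$ transfers cleanly to $P$.
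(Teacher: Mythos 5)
Your proposal is correct and follows essentially the same route as the paper: chain the $\epsilon$-coreset guarantee of disSS on $\tilde P$ with the additive disPCA relation (using $\Delta\geq 0$ to absorb the offset), invoke optimality of $X$ on $(S,0,w)$, and let the SVD term dominate both the communication and complexity accounting. The composite bound $(1-\epsilon)^2\cost(P,Y)\leq\cost((S,0,w),Y)+\Delta\leq(1+\epsilon)^2\cost(P,Y)$ that you derive as an intermediate step is precisely the paper's Lemma~\ref{lem:BKLW-based CR method}, so your argument is just a slightly more modular packaging of the same inequalities.
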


\subsection{Enhancements}\label{subsec:Enhancement - JL+BKLW}

It is easy to see that each data source can apply JL projection independently at no additional communication cost. 
Following the ideas in Algorithms~\ref{Alg:JL DR+CR} and \ref{Alg:FSS CR+JL DR}, we wonder: (i) Can we improve BKLW by combining it with JL projection? (ii) Is there an optimal order of applying BKLW and JL projection? \looseness=0

We first consider applying JL projection before invoking BKLW. For consistency with Algorithm~\ref{Alg:JL DR+CR}, we only use the first two steps of BKLW, i.e., disPCA and disSS, that construct a coreset, which we refer to as a \emph{BKLW-based CR method}. The algorithm, shown in Algorithm~\ref{Alg:distributed kmeans}, is essentially the distributed counterpart of Algorithm~\ref{Alg:JL DR+CR}. \rev{First, each source independently applies JL projection to its local dataset (line~\ref{distributed:JL}). Then, the sources cooperatively run BKLW, i.e., disPCA + disSS (line~\ref{distributed: pi_2}). Finally, the server uses the received dimension-reduced coreset to solve $k$-means and converts the centers back to the original space (lines~\ref{distributed:kmeans}--\ref{distributed:inverse JL}). } 

\begin{algorithm}[tb]
\label{Alg:distributed kmeans}
\small
\SetKwInOut{Input}{input}\SetKwInOut{Output}{output}
\SetKwFor{Node}{each data source $i$ ($i=1,\ldots,m$):}{}{}
\SetKwFor{Server}{server:}{}{}
\Input{Distributed dataset $\{P_i\}_{i=1}^m$, number of centers $k$, JL projection $\pi_1$, BKLW-based CR method $\pi_2$ }
\Output{Centers for $k$-means clustering of $P$}
\Node{}{
$P_i'\leftarrow \pi_1(P_i)$\; \label{distributed:JL}
}
run $\pi_2$ on the distributed dataset $\{P_i'\}_{i=1}^m$, which results in each data source $i$ reporting a local coreset $(S_i', 0, w)$ to the server\; \label{distributed: pi_2}
\Server{}{
$X'\leftarrow\mbox{kmeans}(\bigcup_{i=1}^m S'_i,w,k)$\; \label{distributed:kmeans}
$X\leftarrow \pi_1^{-1}(X')$\; \label{distributed:inverse JL}
return $X$\;
}
\caption{Communication-efficient Distributed $k$-Means under DR+CR}
\vspace{-.25em}
\end{algorithm}
\normalsize

We now analyze the performance of Algorithm~\ref{Alg:distributed kmeans}, starting from a coreset-like property of the BKLW-based CR method $\pi_2$ (see proof in Appendix~\ref{appendix: proofs}). 

\begin{lemma}\label{lem:BKLW-based CR method}
Let $P:=\bigcup_{i=1}^m P_i$ be the union of the input datasets for the BKLW-based CR method $\pi_2$ and $\Sbf := (S, 0, w)$ be the resulting coreset reported to the server. For any $\epsilon\in (0, 1/3)$ and $\delta\in (0, 1)$, $\exists t_1=t_2=O(k/\epsilon^2)$, $s = O(\epsilon^{-4}(k^2/\epsilon^2 + \log(1/\delta)) + mk\log(mk/\delta))$, and $\Delta\geq 0$, such that with probability at least $1-\delta$, $\pi_2$ with parameters $t_1$, $t_2$, and $s$ satisfies
\begin{align}\label{eq:BKLW corollary}
\hspace{-.7em} (1\hspace{-.25em}-\textbf{}\epsilon)^2 \cost(P,X) \hspace{-.1em} \leq \hspace{-.1em}\cost(\Sbf,X) \hspace{-.25em}+\hspace{-.25em} \Delta \hspace{-.1em}\leq\hspace{-.1em} (1\hspace{-.25em}+\hspace{-.25em}\epsilon)^2 \cost(P,X)
\end{align}
for any set $X$ of $k$ points in the same space as $P$. 
\end{lemma}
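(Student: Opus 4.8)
The plan is to combine the two guarantees already established for the constituent subroutines of the BKLW-based CR method, namely the distributed PCA bound (Theorem~\ref{thm:disPCA}) and the distributed sensitivity-sampling bound (Theorem~\ref{thm:disSS}), and to chain them carefully so that the two multiplicative distortions compound to $(1\pm\epsilon)^2$ while the additive offset $\Delta$ introduced by disPCA is carried through. First I would fix the parameters: choose $t_1=t_2=k+\lceil 4k/\epsilon^2\rceil-1=O(k/\epsilon^2)$ so that Theorem~\ref{thm:disPCA} applies, and choose the global sample size $s=O(\epsilon^{-4}(k^2/\epsilon^2+\log(1/\delta))+mk\log(mk/\delta))$ so that Theorem~\ref{thm:disSS} applies to the projected distributed dataset $\{\tilde P_i\}_{i=1}^m$. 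Let $\tilde P:=\bigcup_{i=1}^m\tilde P_i$ be the disPCA output and $\Sbf:=(S,0,w)$ be the disSS output on $\tilde P$; note disSS itself contributes no additive term, so the $\Delta$ in the statement is exactly the additive constant furnished by disPCA in Theorem~\ref{thm:disPCA}.

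The core of the argument is a two-sided chaining for an arbitrary fixed set $X$ of $k$ centers. For the upper bound, Theorem~\ref{thm:disSS} gives $\cost(\Sbf,X)\le(1+\epsilon)\cost(\tilde P,X)$ with probability $\ge1-\delta$, and Theorem~\ref{thm:disPCA} gives $\cost(\tilde P,X)+\Delta\le(1+\epsilon)\cost(P,X)$ deterministically. Adding $\Delta$ and combining, I would get
\begin{align}
\cost(\Sbf,X)+\Delta &\le (1+\epsilon)\cost(\tilde P,X)+\Delta \nonumber\\
&\le (1+\epsilon)\big(\cost(\tilde P,X)+\Delta\big) \nonumber\\
&\le (1+\epsilon)^2\cost(P,X), \nonumber
\end{align}
where the middle step uses $\Delta\ge0$ and $1+\epsilon>1$. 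The lower bound is symmetric: Theorem~\ref{thm:disSS} gives $\cost(\Sbf,X)\ge(1-\epsilon)\cost(\tilde P,X)$, and Theorem~\ref{thm:disPCA} gives $(1-\epsilon)\cost(P,X)\le\cost(\tilde P,X)+\Delta$, so adding $\Delta\ge0$ to the sampling bound and then substituting yields $\cost(\Sbf,X)+\Delta\ge(1-\epsilon)\big(\cost(\tilde P,X)+\Delta\big)\ge(1-\epsilon)^2\cost(P,X)$. Together these give \eqref{eq:BKLW corollary}.

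The one subtlety I would flag as the main obstacle is making the additive term $\Delta$ interact cleanly with the two bounds in the right direction. The coreset guarantee of disSS (Theorem~\ref{thm:disSS}) is a pure multiplicative $(1\pm\epsilon)$ statement on $\cost(\tilde P,X)$ with no constant, whereas the disPCA guarantee (Theorem~\ref{thm:disPCA}) relates $\cost(\tilde P,X)+\Delta$ to $\cost(P,X)$; so I must be careful to apply disSS to $\cost(\tilde P,X)$ (not to $\cost(\tilde P,X)+\Delta$) and only afterward fold in $\Delta$, using $\Delta\ge0$ together with $1+\epsilon\ge1\ge1-\epsilon$ to push the inequalities the right way. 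Since the only randomness is in disSS (disPCA is deterministic given the SVDs), the stated success probability $1-\delta$ follows directly from Theorem~\ref{thm:disSS}, with no union bound needed. The parameter choices $t_1,t_2,s$ are inherited verbatim from the two cited theorems, so verifying $t_1=t_2=O(k/\epsilon^2)$ and the claimed order of $s$ is routine bookkeeping.
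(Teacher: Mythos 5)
Your proposal is correct and follows essentially the same route as the paper's proof: both chain the disPCA guarantee (Theorem~\ref{thm:disPCA}) with the disSS coreset guarantee (Theorem~\ref{thm:disSS}), using $\Delta\geq 0$ together with $1-\epsilon\leq 1\leq 1+\epsilon$ to absorb the additive offset into the $(1\pm\epsilon)^2$ factors. The only difference is cosmetic --- you start from $\cost(\Sbf,X)+\Delta$ while the paper starts by multiplying the disPCA inequality by $1\pm\epsilon$ --- and your remarks on the parameter choices and the single source of randomness match the paper's treatment.
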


\emph{Remark:} Comparing Lemma~\ref{lem:BKLW-based CR method} with Definition~\ref{def:epsilon-coreset}, we see that $\pi_2$ does not construct an $O(\epsilon)$-coreset of its input dataset. Nevertheless, its output can approximate the $k$-means cost of the input dataset up to a constant shift, which is sufficient for computing  approximate $k$-means. 

We have the following performance guarantee for Algorithm~\ref{Alg:distributed kmeans} (see proof in Appendix~\ref{appendix: proofs}). 

\begin{theorem}\label{thm:distributed kmeans}
For any $\epsilon\in (0, 1/3)$ and $\delta\in (0, 1)$, suppose that in Algorithm~\ref{Alg:distributed kmeans}, $\pi_1$ satisfies Lemma~\ref{lem:JL projection}, $\pi_2$ satisfies Lemma~\ref{lem:BKLW-based CR method}, and kmeans$(\bigcup_{i=1}^m S'_i,w,k)$ returns the optimal $k$-means centers of the dataset $\bigcup_{i=1}^m S'_i$ with weights $w$. Then 
\begin{enumerate}
    \item the output $X$ is a $(1+\epsilon)^6/(1-\epsilon)^2$-approximation for $k$-means clustering of $\bigcup_{i=1}^m P_i$ with probability $\geq (1-\delta)^2$,
    \item the total communication cost over all the data sources is $O(mk \epsilon^{-4}\log{n})$, and
    \item the complexity at each data source is $\tilde{O}(nd \epsilon^{-4})$,
\end{enumerate}
assuming $\min(n,d)\gg m, k, 1/\epsilon,$ and $1/\delta$. 
\end{theorem}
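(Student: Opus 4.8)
The plan is to mirror the three-part structure of the proof of Theorem~\ref{thm:JL-based DR+CR}, but with the BKLW-based CR method in place of FSS. Throughout, write $P:=\bigcup_{i=1}^m P_i$, let $P':=\pi_1(P)=\bigcup_{i=1}^m P_i'$ be the JL-projected union, and let $\Sbf':=(\bigcup_{i=1}^m S_i', \Delta, w)$ be the coreset assembled at the server, where $\Delta\geq 0$ is the additive constant supplied by Lemma~\ref{lem:BKLW-based CR method}. Let $X^*$ be an optimal set of $k$-means centers of $P$. The two sources of randomness, namely the JL map $\pi_1$ (Lemma~\ref{lem:JL projection}) and the sampling inside $\pi_2$ (Lemma~\ref{lem:BKLW-based CR method}), are independent, so both guarantees hold simultaneously with probability at least $(1-\delta)^2$, which yields the success probability claimed in part~1.

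For the approximation error I would chain the two lemmas exactly as before, but carrying the additive shift $\Delta$ through every step in which $\pi_2$ is invoked. Using $\pi_1(X)=X'$ and the optimality of $X'$ for $\cost(\Sbf',\cdot)$ (hence also for $\cost(\Sbf',\cdot)+\Delta$, since $\Delta$ is constant), I would establish
\begin{align}
\cost(P,X) &\leq (1+\epsilon)^2\,\cost(P',X') \label{eq:plan1}\\
&\leq \frac{(1+\epsilon)^2}{(1-\epsilon)^2}\big(\cost(\Sbf',X')+\Delta\big) \label{eq:plan2}\\
&\leq \frac{(1+\epsilon)^2}{(1-\epsilon)^2}\big(\cost(\Sbf',\pi_1(X^*))+\Delta\big) \label{eq:plan3}\\
&\leq \frac{(1+\epsilon)^4}{(1-\epsilon)^2}\,\cost(P',\pi_1(X^*)) \label{eq:plan4}\\
&\leq \frac{(1+\epsilon)^6}{(1-\epsilon)^2}\,\cost(P,X^*), \label{eq:plan5}
\end{align}
where \eqref{eq:plan1} and \eqref{eq:plan5} apply Lemma~\ref{lem:JL projection} to $(P,X)$ and to $(P,X^*)$ respectively, \eqref{eq:plan2} and \eqref{eq:plan4} use the lower and upper bounds of Lemma~\ref{lem:BKLW-based CR method}, and \eqref{eq:plan3} uses the optimality of $X'$. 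This produces the $(1+\epsilon)^6/(1-\epsilon)^2$ factor of part~1.

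For the communication cost (part~2) I would analyze the two stages of $\pi_2$ in the reduced dimension $d'=O(\epsilon^{-2}\log(nk/\delta))=O(\epsilon^{-2}\log n)$ fixed by Lemma~\ref{lem:JL projection}. In disPCA each of the $m$ sources sends $O(k/\epsilon^2)$ vectors of $\mathbb{R}^{d'}$, i.e.\ $O(mkd'/\epsilon^2)=O(mk\epsilon^{-4}\log n)$ scalars, while disSS sends a coreset of size $s$ lying in an $O(k/\epsilon^2)$-dimensional subspace, i.e.\ $O(sk/\epsilon^2)$ scalars, which is independent of $n$; under $\min(n,d)\gg m,k,1/\epsilon,1/\delta$ the disPCA term dominates, giving $O(mk\epsilon^{-4}\log n)$. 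For the complexity (part~3) I would sum, at source $i$: the JL multiplication $A_{P_i}\Pi$ costing $O(n_i d d')=\tilde O(n_i d\epsilon^{-2})$; the local SVD of disPCA on the $n_i\times d'$ projected matrix costing $O(n_i d'\min(n_i,d'))=\tilde O(n_i\epsilon^{-4})$; and the bicriteria step of disSS costing $O(n_i t_2 k\log(1/\delta))=\tilde O(n_i k^2\epsilon^{-2})$. Using $d\geq 1$ and $\epsilon^{-2}\leq\epsilon^{-4}$, the total is bounded by $\tilde O(nd\epsilon^{-2}+n\epsilon^{-4})=\tilde O(nd\epsilon^{-4})$.

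The main obstacle is not any single computation but the correct handling of the additive shift $\Delta$: unlike FSS, the BKLW-based $\pi_2$ does \emph{not} produce a genuine $\epsilon$-coreset (cf.\ the Remark after Lemma~\ref{lem:BKLW-based CR method}), so I must keep the expression $\cost(\Sbf',\cdot)+\Delta$ intact across \eqref{eq:plan2}--\eqref{eq:plan4}, verify that adding the constant $\Delta$ does not disturb the optimality of $X'$ invoked in \eqref{eq:plan3}, and confirm that the purely multiplicative JL factors are applied outside, not inside, the shifted quantity. Two secondary points to check are that the JL guarantee is applied to both the (projection-dependent) output $X$ and the optimum $X^*$ as in the proof of Theorem~\ref{thm:JL-based DR+CR}, and that $d'$ is set from the \emph{global} cardinality $n=\sum_i n_i$ so that Lemma~\ref{lem:JL projection} is valid for the union $P$; together with the independence of $\pi_1$ and the disSS sampling, this secures the $(1-\delta)^2$ success probability rather than the weaker $1-2\delta$ of a union bound.
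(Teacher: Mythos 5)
Your proposal is correct and follows essentially the same route as the paper's proof: the identical five-step chain of inequalities (JL bound on $(P,X)$, the shifted BKLW sandwich, optimality of $X'$, the reverse BKLW sandwich, JL bound on $(P,X^*)$), the same independence argument for the $(1-\delta)^2$ probability, and the same accounting for communication (disPCA dominating in dimension $d'=O(\epsilon^{-2}\log n)$) and complexity (JL multiplication plus local SVD in the reduced dimension). The only cosmetic difference is that the paper folds the additive constant $\Delta$ into the coreset tuple so that $\cost(\Sbf',\cdot)$ already denotes the shifted quantity, whereas you carry $+\Delta$ explicitly; the two bookkeeping conventions are equivalent.
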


\begin{table*}[tb]
\vspace{-.5em}
\footnotesize
\renewcommand{\arraystretch}{1.3}
\caption{Summary of Comparison} \label{tab:comparison}
\vspace{-.5em}
\centering
\begin{tabular}{|l|c|c|}
  \hline
  Algorithm & \begin{tabular}{c} Communication cost\end{tabular} & \begin{tabular}{c} Computational complexity \end{tabular} \\
  \hline
  FSS~\cite{Feldman13SODA:report}  &  ${O}(k d/\epsilon_2^2)$ & $O(nd \cdot \min(n,d))$ \\
  \hline
  JL + FSS (Alg.~\ref{Alg:JL DR+CR}) &  ${O}(k \log{n}/\epsilon_1^{4})$ & $\tilde{O}(nd/\epsilon_1^{2})$ \\
  \hline
  FSS + JL (Alg.~\ref{Alg:FSS CR+JL DR}) & $\tilde{O}(k^3/\epsilon_1^{6})$ & $O(nd \cdot \min(n,d))$ \\
  \hline
  JL + FSS + JL (Alg.~\ref{Alg:JL+FSS+JL}) & $\tilde{O}(k^3/\epsilon_3^6)$ & $\tilde{O}(nd/\epsilon_3^2)$\\
  \hline 
  \hline
  BKLW \cite{Balcan14NIPS} & $O(mkd/\epsilon_4^2)$ & $O(nd \cdot \min(n,d))$ \\
  \hline 
  JL + BKLW (Alg.~\ref{Alg:distributed kmeans}) & $O(mk\log{n}/\epsilon_5^4)$ & $\tilde{O}(nd/\epsilon_5^4)$ \\
   \hline
\end{tabular}
\vspace{-.05em}
\end{table*}

\emph{Discussion:}
Comparing Theorems~\ref{thm:distributed kmeans} and \ref{thm:BKLW}, we see that for $d\gg \log{n}$ (e.g., $d = \Omega(n)$), Algorithm~\ref{Alg:distributed kmeans} can significantly reduce the communication cost and the complexity at data sources, while achieving a similar $(1+O(\epsilon))$-approximation as BKLW. Note that although the possibility of applying another DR method before BKLW was mentioned in \cite{Balcan14NIPS}, no result was given there.  

Meanwhile, although one could develop a distributed counterpart of Algorithm~\ref{Alg:FSS CR+JL DR} that applies JL projection after BKLW, its performance will not be competitive. Specifically, using similar analysis, this approach incurs the same order of communication cost and complexity as BKLW. 
Meanwhile, the JL projection introduces additional error, causing its overall approximation error to be larger. It is thus unnecessary to consider this algorithm.  


Furthermore, we note that repeated application of DR/CR is unnecessary in the distributed setting. This is because after one round of BKLW (with or without applying JL projection beforehand), we already reduce the cardinality to $O(\epsilon^{-4}(k^2/\epsilon^2 + \log(1/\delta)) + mk\log(mk/\delta))$ and the dimension to $O(k/\epsilon^2)$, both constant in the size $(n,d)$ of the original dataset. Meanwhile, this round incurs a communication cost that scales with $(n,d)$ as $O(\log{n})$ (with JL projection) or $O(d)$ (without JL projection), and a complexity that scales as $\tilde{O}(nd)$ (with JL projection) or $O(nd \cdot \min(n,d))$ (without JL projection). Therefore, any possible reduction in the cost (or the complexity) achieved by further reducing the cardinality or dimension will be dominated by the cost (or the complexity) in the first round. Hence, repeated application of DR/CR will not improve the order of the communication cost or the complexity.

\subsection{Summary of Comparison}\label{subsec:Summary of Comparison}

We are now ready to compare the performances of all the proposed algorithms and their best existing counterparts in both centralized (i.e., single data source) and distributed (i.e., multiple data sources) settings. 

To ensure the same approximation error for all the algorithms, we set the error parameter `$\epsilon$' to $\epsilon_1$ for Algorithms~\ref{Alg:JL DR+CR} and \ref{Alg:FSS CR+JL DR}, $\epsilon_2$ for FSS, $\epsilon_3$ for Algorithm~\ref{Alg:JL+FSS+JL}, $\epsilon_4$ for BKLW, and $\epsilon_5$ for Algorithm~\ref{Alg:distributed kmeans}, where for any $\epsilon\in (0,1)$, $\epsilon_1$ satisfies  $(1+\epsilon_1)^5/(1-\epsilon_1)=1+\epsilon$, $\epsilon_2$ satisfies $(1+\epsilon_2)/(1-\epsilon_2) = 1+\epsilon$, $\epsilon_3$ satisfies $(1+\epsilon_3)^9/(1-\epsilon_3) = 1+\epsilon$, $\epsilon_4$ satisfies $(1+\epsilon_4)^2/(1-\epsilon_4)^2 = 1+\epsilon$, and $\epsilon_5$ satisfies $(1+\epsilon_5)^6/(1-\epsilon_5)^2 = 1+\epsilon$. 

The comparison, summarized in Table~\ref{tab:comparison}, is in terms of the communication cost and the complexity at the data source(s) for achieving a $(1+\epsilon)$-approximation for $k$-means clustering of an input dataset of cardinality $n$ and dimension $d$, where the first four rows are for the centralized setting and the last two rows are for the distributed setting. Our focus is on the scaling with $n$ and $d$, which are assumed to dominate the other parameters (i.e., $k$, $m$, $1/\epsilon_i$). Clearly, for high-dimensional datasets satisfying $d\gg \log{n}$, the best proposed algorithms (Algorithm~\ref{Alg:JL+FSS+JL} and Algorithm~\ref{Alg:distributed kmeans}) significantly outperforms the best existing algorithms (FSS and BKLW) in both centralized and distributed settings.

\section{Extension to Joint DR, CR, and QT}\label{sec:Extension}

Besides cardinality and dimensionality, the volume of a dataset also depends on its \emph{precision}, defined as the number of bits used to represent each attribute in the dataset. While DR and CR methods can be used to reduce the dimensionality and the cardinality, quantization techniques \cite{Sayood:2012:IDC:3050831} can be used to reduce the precision and hence further reduce the communication cost. While the optimal efficient quantization in support of $k$-means is worth a separate study, our focus here is on properly combining a given quantizer with the proposed DR/CR methods. To this end, we will use a simple rounding-based quantizer as a concrete example.\looseness=-1


\subsection{Rounding-based Quantization}
\label{subsec: Incorporating quantization}

Given a scalar $x \in \mathbb{R}$, we denote the $b_0$-bit binary floating number representation of $x$ by 
\begin{align}
    &\hspace{-.75em} x = (-1)^{\sign(x)} \times 2^{e_x} \times \nonumber\\ 
    &\hspace{-.75em} (a(0) \hspace{-.15em}+\hspace{-.15em} a(1) \hspace{-.15em}\times\hspace{-.15em} 2^{-1} \hspace{-.15em}+\hspace{-.15em} \ldots  \hspace{-.15em}+\hspace{-.15em} a(b_0\hspace{-.15em}-\hspace{-.15em}1\hspace{-.15em}-\hspace{-.15em}m_e) \hspace{-.15em}\times\hspace{-.15em} 2^{-(b_0-1-m_e)}),
\end{align} 
where $\sign(x)=0$ if $x\geq 0$ and $\sign(x)=1$ if $x<0$, $m_e$ is the number of exponent bits, $e_x$ is the $m_e$-bit exponent of $x$, and $a(\cdot) \in \{0, 1\}$ are the significant bits ($a(0)\equiv 1$). The rounding-based quantizer $\Gamma$ with $s$ significant bits is \looseness=0
\begin{align}
&\hspace{-1em}\Gamma(x) \hspace{-.05em}:=\hspace{-.05em} (-1)^{\sign(x)} \times 2^{e_{x}} \times \nonumber\\ 
&\hspace{-1em}(a(0) + a(1) \times 2^{-1} + \ldots + a(s)\times 2^{-s} + a'(s) \times 2^{-s}),
\end{align}
where $a'(s)$ is the result of rounding the remaining bits (0: rounding down; 1: rounding up).

For simplicity of notation, we also use $p':=\Gamma(p)$ to denote the element-wise rounding-based quantization of a data point $p=(p_i)_{i=1}^d \in \mathbb{R}^d$. Defining the maximum quantization error as $\Delta_{QT} := \max_{p \in P} \|p-p'\|$, we know that by the definition of rounding-based quantizer, the quantization error in each element satisfies $|p_i-p'_i| \leq 2^{e_{p_i} - s} \leq |p_i| 2^{-s}$ since $|p_i| \geq 2^{e_{p_i}}$. Therefore, the maximum quantization error is bounded as
\begin{align}\label{eq:Delta_QT}
    \Delta_{QT} &= \max_{p\in P} \sqrt{\sum_{i=1}^d(p_i-p'_i)^2} \nonumber\\
    &\leq \max_{p\in P} \sqrt{\sum_{i=1}^d 2^{-2s}p_i^2} = 2^{-s} \max_{p \in P} \|p\|.
\end{align}

\subsection{Approximation Error Analysis}\label{subsec:Approximation Error Analysis}

We now analyze the performance after adding quantization to the proposed communication-efficient $k$-means algorithms. As DR and CR can generate data points of arbitrary values that may not be representable with a given number of significant bits, we add quantization after all the DR/CR steps. That is, we assume that right before a data source reports its dimension-reduced coreset $(S,\Delta,w)$ to the server, it will apply the rounding-based quantizer $\Gamma$ and report $(S_{QT},\Delta,w)$ instead\footnote{Here we only apply quantization to the coreset points in $S$ as their transfer dominates the communication cost, but our approach can be extended to other cases. }, where $S_{QT}:=\{\Gamma(p):\: p\in S\}$. 
Obviously, the quantization further reduces the communication cost. \rev{It also incurs a computational complexity that is linear in the size of $S$, which is sub-linear in the size of the original dataset (i.e., $nd$) and thus subsumed by the complexity of DR/CR (as shown in Table~\ref{tab:comparison}).} 
The only performance metric it can negatively impact is the approximation error, which is analyzed in the following theorem (see proof in Appendix~\ref{appendix: proofs}).

\begin{theorem} \label{thm: DR, CR, QT}
Let $X$ denote the optimal $k$-means centers computed by the server based on the received coreset, $X^*$ denote the optimal $k$-means centers based on the original dataset, and $\Delta_D$ denote the diameter of the input space. 
\begin{enumerate}
    \item In Algorithm \ref{Alg:JL DR+CR}, suppose that $\pi_1$ satisfies Lemma~\ref{lem:JL projection} with $\epsilon_1$, $\pi_2$ generates an $\epsilon_2$-coreset with probability $\geq 1-\delta$, and $\pi_{QT}$ is a quantizer with maximum error $\Delta_{QT}$. If we update Line 4 to: $S'_{QT} \leftarrow \pi_{QT}(S')$ and report $(S'_{QT},\Delta,w)$ to the server, then the approximation error will be
    \begin{align}
        \cost(P,X) \leq 
        &\frac{(1+\epsilon_1)^4(1+\epsilon_2)}{(1-\epsilon_2)} \cost(P, X^*) \nonumber \\
        &+ \frac{(1+\epsilon_1)^2}{(1-\epsilon_2)} 4n\Delta_D\Delta_{QT} 
    \end{align}
    with probability at least $(1-\delta)^2$.
    \item In Algorithm \ref{Alg:FSS CR+JL DR}, suppose that $\pi_1$ satisfies Lemma~\ref{lem:JL projection for S} with $\epsilon_1$, $\pi_2$ generates an $\epsilon_2$-coreset with probability $\geq 1-\delta$, and $\pi_{QT}$ is a quantizer with maximum error $\Delta_{QT}$. If we update Line 4 to: $S'_{QT} \leftarrow \pi_{QT}(S')$ and report $(S'_{QT},\Delta,w)$ to the server, then the approximation error will be 
    \begin{align}
        \cost(P,X) \leq &\frac{(1+\epsilon_1)^4(1+\epsilon_2)}{(1-\epsilon_2)} \cost(P, X^*) \nonumber \\
        &+ \frac{(1+\epsilon_1)^2}{(1-\epsilon_2)} 4n\Delta_D\Delta_{QT} 
    \end{align}
    with probability at least $(1-\delta)^2$. 
    \item In Algorithm \ref{Alg:JL+FSS+JL}, suppose that $\pi_1^{(1)}$ satisfies Lemma~\ref{lem:JL projection} with $\epsilon_1^{(1)}$, $\pi_1^{(2)}$ satisfies Lemma~\ref{lem:JL projection for S} with $\epsilon_1^{(2)}$, $\pi_2$ generates an $\epsilon_2$-coreset with probability $\geq 1-\delta$, and $\pi_{QT}$ is a quantizer with maximum error $\Delta_{QT}$. If we update Line~5 to: $S'_{QT} \leftarrow \pi_{QT}(S')$ and report $(S'_{QT},\Delta,w)$ to the server, then the approximation error will be 
    \begin{align}\label{eq:JL+FSS+JL+QT}
        \cost(P,X) \leq 
        & \frac{(1+\epsilon_1^{(1)})^4(1+\epsilon_2)(1+\epsilon_1^{(2)})^4}{(1-\epsilon_2)} \cost(P, X^*) \nonumber \\
        &\hspace{-1.5em} + \frac{(1+\epsilon_1^{(1)})^2(1+\epsilon_1^{(2)})^2}{(1-\epsilon_2)} 4n\Delta_D\Delta_{QT} 
    \end{align}
    with probability at least $(1-\delta)^3$.
    \item In Algorithm \ref{Alg:distributed kmeans}, suppose that $\pi_1$ satisfies Lemma~\ref{lem:JL projection}, $\pi_2$ satisfies Lemma~\ref{lem:BKLW-based CR method}, and $\pi_{QT}$ is a quantizer with maximum error $\Delta_{QT}$. If we update Line 3 to: run $\pi_2$ on the distributed dataset $\{P_i'\}_{i=1}^m$ to compute a local coreset $(S_i', 0, w)$ at each data source $i$ and  report  $(\pi_{QT}(S_{i}'), 0, w)$ to the server,  then the approximation error will be
    \begin{align}
        \cost(P,X) \leq  &\frac{(1+\epsilon_1)^4(1+\epsilon_2)^2}{(1-\epsilon_2)^2} \cost(P, X^*) \nonumber \\
        &+ \frac{(1+\epsilon_1)^2}{(1-\epsilon_2)^2} 4n\Delta_D\Delta_{QT}
    \end{align}
    with probability at least $(1-\delta)^2$.
\end{enumerate}
\end{theorem}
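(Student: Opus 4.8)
The plan is to treat quantization as an additive perturbation layered on top of the approximation analyses already established in Theorems~\ref{thm:JL-based DR+CR}, \ref{thm:FSS CR+JL DR}, \ref{thm:JL+FSS+JL}, and \ref{thm:distributed kmeans}. The cornerstone is a single perturbation bound: for any coreset $\Sbf=(S,\Delta,w)$ whose points and candidate centers lie in a region of diameter $\Delta_D$ and whose weights satisfy $\sum_{q\in S}w(q)=n$, the quantized coreset $\Sbf_{QT}=(S_{QT},\Delta,w)$ with $S_{QT}=\{\Gamma(q):q\in S\}$ satisfies, for every set $X$ of $k$ centers,
\begin{align}
\left| \cost(\Sbf_{QT},X) - \cost(\Sbf,X) \right| \leq 2n\Delta_D\Delta_{QT}. \nonumber
\end{align}
First I would prove this by writing, for each coreset point $q$ with quantized image $q'=\Gamma(q)$ and each center $x_i$, the identity $\|q'-x_i\|^2-\|q-x_i\|^2 = (q'-q)^\top(q'+q-2x_i)$, then applying Cauchy--Schwarz together with $\|q'-q\|\leq\Delta_{QT}$ (from (\ref{eq:Delta_QT})) and $\|q'+q-2x_i\|\leq\|q'-x_i\|+\|q-x_i\|\leq 2\Delta_D$. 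Since the $\min$ over centers differs by at most the per-center maximum, summing over $q$ with weights $w(q)$ and using $\sum_q w(q)=n$ yields the bound.

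Next, for each of the four parts I would replay the corresponding (unquantized) approximation proof verbatim, inserting this perturbation bound at exactly the two places where the quantized coreset enters. Taking part~1 as the template: the server now computes $X'$ as the optimal centers of $\Sbf'_{QT}$ rather than $\Sbf'$, so the chain
\begin{align}
\cost(\Sbf',X') &\leq \cost(\Sbf'_{QT},X') + 2n\Delta_D\Delta_{QT} \nonumber\\
&\leq \cost(\Sbf'_{QT},\pi_1(X^*)) + 2n\Delta_D\Delta_{QT} \nonumber\\
&\leq \cost(\Sbf',\pi_1(X^*)) + 4n\Delta_D\Delta_{QT} \nonumber
\end{align}
replaces the single optimality step of Theorem~\ref{thm:JL-based DR+CR}, the middle line using optimality of $X'$ for $\Sbf'_{QT}$ and the outer lines using the perturbation bound twice. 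Carrying the surviving prefactor $(1+\epsilon_1)^2/(1-\epsilon_2)$ through the remaining (unchanged) steps produces both the multiplicative term $\tfrac{(1+\epsilon_1)^4(1+\epsilon_2)}{1-\epsilon_2}\cost(P,X^*)$ and the additive term $\tfrac{(1+\epsilon_1)^2}{1-\epsilon_2}4n\Delta_D\Delta_{QT}$. Parts~2 and~3 follow identically, except that the additive prefactor is dictated by whichever constant multiplies the coreset cost at the insertion point: $(1+\epsilon_1)^2/(1-\epsilon_2)$ for part~2, and $(1+\epsilon_1^{(1)})^2(1+\epsilon_1^{(2)})^2/(1-\epsilon_2)$ for the two-JL algorithm of part~3, where the second projection (via Lemma~\ref{lem:JL projection for S}) contributes the extra $(1+\epsilon_1^{(2)})^2$. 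Part~4 is the same argument but built on Lemma~\ref{lem:BKLW-based CR method} instead of a multiplicative coreset, so the two-sided $\pm\Delta$ bound there supplies the $(1-\epsilon_2)^2$ denominator and the prefactor $(1+\epsilon_1)^2/(1-\epsilon_2)^2$; since each source quantizes its local $S_i'$, the perturbation bound is applied to the union, whose total weight is still $n$.

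The main obstacle is justifying the two hypotheses of the perturbation bound uniformly across all centers that appear in the chains. The weight normalization $\sum_q w(q)=n$ must be inherited from the FSS and BKLW constructions, and the diameter bound $\|q'+q-2x_i\|\leq 2\Delta_D$ must hold not only for the coreset points but also for $X'$ and for the projected optimum $\pi_1(X^*)$: since $X'$ minimizes a sum of squared distances to $S_{QT}$ it lies in the convex hull of $S_{QT}$ and is automatically within diameter, whereas $\pi_1(X^*)$ needs a separate boundedness argument (or the convention that $\Delta_D$ bounds the entire relevant region, absorbing the $O(n\Delta_{QT}^2)$ slack coming from quantized points lying slightly outside the original region). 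Confirming that quantization is invoked at precisely two steps of each proof, so that the accumulated additive error is exactly $4n\Delta_D\Delta_{QT}$ rather than a larger multiple, is the only delicate bookkeeping; the rest is a mechanical re-run of the established inequality chains.
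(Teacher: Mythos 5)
Your proposal is correct and follows essentially the same route as the paper's proof: establish the additive perturbation bound $|\cost((S,\Delta,w),X)-\cost((S_{QT},\Delta,w),X)|\leq 2\Delta_D\Delta_{QT}\sum_{q}w(q)=2n\Delta_D\Delta_{QT}$, then splice it into the existing inequality chains at exactly the two steps surrounding the optimality of $X'$ for the quantized coreset, yielding the $4n\Delta_D\Delta_{QT}$ additive term with the surviving prefactor. The only cosmetic difference is that you derive the Lipschitz-type bound directly via the polarization identity and Cauchy--Schwarz, whereas the paper cites the $2\Delta_D$-Lipschitz continuity of the $k$-means cost from prior work and invokes the footnoted guarantee $\sum_q w(q)=n$ for the sensitivity-sampling coreset, exactly as you anticipate.
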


\subsection{Configuration of Joint DR, CR, and QT} \label{sec: Configuration of joint DR, CR and Quantization}

Based on the analysis of the approximation error under given DR, CR, and QT (quantization) methods, 
we aim to answer the following question: how can we configure 
the DR, CR, and QT methods such that we can minimize the communication cost while keeping the approximation error within a given bound? 
We will present a detailed solution for the four-step procedure JL+FSS+JL+QT, as the solutions for the other procedures are similar. 

\subsubsection{Problem Formulation} \label{subsubsec: DR, CR, QT problem statement}

Let $\mathcal{Y}_0$ denote a desired bound on the approximation error and $1-\delta_0$ the desired confidence level, i.e., $\cost(P,X)\leq \mathcal{Y}_0 \cost(P,X^*)$ with probability $\geq 1-\delta_0$, where $X$ is the computed $k$-means solution and $X^*$ the optimal solution. By
Theorem~\ref{thm: DR, CR, QT}, the QT step introduces an additive error. We now convert it into a multiplicative error to enforce the bound $\mathcal{Y}_0$. To this end, suppose that we are given a lower bound $\mathcal{E}$ on the optimal $k$-means cost $\cost(P,X^*)$. For example, by \cite{aggarwal2009adaptive}, we can estimate $\mathcal{E}$ by selecting $O(k)$ points from $P$ according to a certain probability distribution, repeating this process for $\log(1/\delta)$ times, and outputting the set $X$ of selected points with the minimum $\cost(P,X)$. This result is proven to be at most 20-time worse than the optimal solution, i.e., $\mathcal{E}:= \cost(P,X)/20\leq \cost(P,X^*)$, with probability at least $1-\delta$. 
Define $\epsilon_{QT} := \frac{4n\Delta_D\Delta_{QT}}{\mathcal{E}}$. 
Then based on \eqref{eq:JL+FSS+JL+QT}, we have: 
\begin{align}
   &\cost(P,X) \\
   &\leq \frac{(1+\epsilon_1^{(1)})^4(1+\epsilon_2)(1+\epsilon_1^{(2)})^4}{(1-\epsilon_2)} \cost(P, X^*) \nonumber \\
   &~~~~+ \frac{(1+\epsilon_1^{(1)})^2(1+\epsilon_1^{(2)})^2}{(1-\epsilon_2)} 4n\Delta_D\Delta_{QT}  \nonumber\\
   &\leq \frac{(1+\epsilon_1^{(1)})^4(1+\epsilon_2)(1+\epsilon_1^{(2)})^4}{(1-\epsilon_2)} \cost(P, X^*) \nonumber \\ 
   &~~~~+ \frac{(1+\epsilon_1^{(1)})^2(1+\epsilon_1^{(2)})^2}{(1-\epsilon_2)} \frac{4n\Delta_D\Delta_{QT}}{\mathcal{E}}\cost(P,X^*) \nonumber \\ 
   &= \frac{(1+\epsilon_1^{(1)})^2(1+\epsilon_1^{(2)})^2}{(1-\epsilon_2)} \times \nonumber \\
   &~~~~((1+\epsilon_1^{(1)})^2(1+\epsilon_2)(1+\epsilon_1^{(2)})^2 + \epsilon_{QT}) \cost(P,X^*).
\end{align}

Let $f(\epsilon_1^{(1)}, \epsilon_2, \epsilon_1^{(2)}, \epsilon_{QT})$ denote the communication cost as a function of the configuration parameters
$\epsilon_1^{(1)}$, $\epsilon_2$, $\epsilon_1^{(2)}$, and $\epsilon_{QT}$. Our goal is to find the optimal configuration that minimizes the communication cost while satisfying the given bound on the approximation error: 
\begin{subequations}\label{prob: DR, CR, QT}
\begin{align}
\min_{\epsilon_1^{(1)}, \epsilon_2, \epsilon_1^{(2)}, \epsilon_{QT}} \quad & \mathcal{X} := f(\epsilon_1^{(1)}, \epsilon_2, \epsilon_1^{(2)}, \epsilon_{QT}) \label{prob: DR, CR, QT:obj}\\ 
\textrm{s.t.} \quad   \mathcal{Y} &:= \frac{(1+\epsilon_1^{(1)})^2(1+\epsilon_1^{(2)})^2}{(1-\epsilon_2)} \times \nonumber \\ &~~~~~~((1+\epsilon_1^{(1)})^2(1+\epsilon_2)(1+\epsilon_1^{(2)})^2 + \epsilon_{QT}) \nonumber \\
&~\leq \mathcal{Y}_0,\label{prob: DR, CR, QT:constraint}
\end{align} 
\end{subequations}
where $\mathcal{X}$ denotes the communication cost and $\mathcal{Y}$ denotes (an upper bound on) the approximation error. The parameter $\delta$ is set to $1-(1-\delta_0)^{1/3}$ such that the desired confidence level is satisfied. \looseness=-1 

\subsubsection{Analysis}
The communication cost $\mathcal{X}$ is dominated by the transfer of the dimension-reduced, quantized coreset $S'_{QT}$. Let its cardinality, dimensionality, and precision be $n'$, $d'$, and $b'$.  
%
By \cite{Feldman13SODA:report}, the cardinality of an $\epsilon_2$-coreset generated by FSS is $n'=O(\frac{k^3\log^2(k) \log(1/\delta)}{\epsilon_2^4})$. To satisfy Lemma~\ref{lem:JL projection} with $\epsilon_1^{(2)}$, the dimensionality needs to satisfy $d'=O(\frac{\log(n'k/\delta)}{(\epsilon_1^{(2)})^2})$. By the analysis of quantization error in Section~\ref{subsec: Incorporating quantization}, $b'=O(\log(\frac{n\sqrt{d}}{\epsilon_{QT}}))$. 
Denoting the constant factors in these big-$O$ terms by $C_1$, $C_2$, and $C_3$, we have
\begin{align}
\mathcal{X} &\approx n' \cdot d' \cdot b' \\
&= C_1\frac{k^3\log^2(k) \log(1/\delta)}{\epsilon_2^4} \cdot C_2 \frac{\log(n'k/\delta)}{(\epsilon_1^{(2)})^2} \cdot C_3\log\Big(\frac{n\sqrt{d}}{\epsilon_{QT}}\Big) \label{eq:communication cost} \\
&= \tilde{C_1} \cdot \frac{\log(\tilde{C_2} / \epsilon_2^4)}{\epsilon_2^4 (\epsilon_1^{(2)})^2} \cdot \nonumber 
\\ &~~\log\Bigg(\frac{\tilde{C_3}}{\frac{\mathcal{Y}_0(1-\epsilon_2)}{(1+\epsilon_1^{(1)})^2(1+\epsilon_1^{(2)})^2} - (1+\epsilon_1^{(1)})^2(1+\epsilon_2)(1+\epsilon_1^{(2)})^2 }\Bigg), \label{eq: comm. cost w.r.t. Y}
\end{align}
where 
$\tilde{C_1} = k^3\log^2(k)\log(\frac{1}{\delta})C_1C_2C_3$, $\tilde{C_2} = \frac{k^4\log^2(k)\log(\frac{1}{\delta})}{\delta}$, and $\tilde{C_3} = n\sqrt{d}$. Assuming $k\geq 2$, by plugging the constant factors from \cite{Langberg10SODA, blumer1989learnability, li2001improved} into Theorem~36 from \cite{Feldman13SODA:report}, we can use $C_1 = 54912(1+\log_2(3))(1+\log_2(26/3))/225$. 
By JL projection, $d' \leq \lceil{8*\log(\frac{4n'k}{\delta}) / \epsilon^2}\rceil$, and therefore $C_2$ could be $24$. 
Assuming $n>8$ and $\mathcal{E}\geq 1$, $C_3$ could be 2. 
Equation \eqref{eq: comm. cost w.r.t. Y} is obtained by replacing $\epsilon_{QT}$ by its maximum value derived from \eqref{prob: DR, CR, QT:constraint}. 

While solving \eqref{prob: DR, CR, QT} in the general case is nontrivial, we note that for the rounding-based quantizer defined in Section~\ref{subsec: Incorporating quantization}, $\epsilon_{QT}$ only has a finite number of possible values, corresponding to the number of significant bits $s=1,\ldots,b_0-1-m_e$. Thus, under the simplifying constraint of $\epsilon_1^{(1)}=\epsilon_2 = \epsilon_1^{(2)} =: \epsilon$, we can enumerate each possible value of $\epsilon_{QT}$, compute the maximum $\epsilon$ under this $\epsilon_{QT}$ from \eqref{prob: DR, CR, QT:constraint}, and plug it into \eqref{eq: comm. cost w.r.t. Y} to evaluate $\mathcal{X}$. We can then select the configuration that yields the minimum $\mathcal{X}$. 

\section{Performance Evaluation}\label{sec:evaluations for DR+CR+QT}

We now use experiments on real datasets to validate our analysis about the proposed 
joint DR, CR and QT algorithms in comparison with the state of the art in both the single-source and the multiple-source cases. 

\subsection{Datasets, Metrics, and Test Environment}\label{subsec:Platform, Datasets, Metrics} 

We use two datasets: (1) MNIST training dataset \cite{MNIST}, a handwritten digits dataset which has $60,000$ images in $784$-dimensional space; (2) NeurIPS Conference Papers 1987-2015 dataset \cite{perrone2016poisson}, a word counts dataset of the NeurIPS conference papers published from 1987 to 2015, which has $11,463$ instances (words) with $5,812$ attributes (papers). Both of these two datasets are normalized to $[-1, 1]$ with zero mean. In the case of multiple data sources, we randomly partition each dataset among $10$ data sources. 

We measure the performance by (i) the approximation error, measured by the normalized $k$-means cost $\cost(P,X)/\cost(P,X^*)$, where $X$ is the set of centers returned by the evaluated algorithm and $X^*$ is the set of centers computed from $P$, (ii) the normalized communication cost, measured by the ratio between the number of bits 
transmitted by the data source(s) and the size of $P$, and (iii) the complexity at the data source(s), measured by the running time of the evaluated DR/CR algorithm.  We set $k=2$ in all the experiments. Because of the randomness of the algorithms, we repeat each test for $10$ Monte Carlo runs\footnote{The number of Monte Carlo runs is limited by the running time of the experiment, which takes around 30 hours to complete one Monte Carlo run for all the algorithms and all the settings in Section \ref{sec: Results for Joint DR, CR and QT}.}.\looseness=-1

\rev{We run an edge-based machine learning system in a simulated environment and consider both cases of single and multiple data sources. All the experiments are conducted on a Windows machine with Intel i7-8700 CPU and 48GB DDR4 memory. We note that our simulated results closely resemble those in an actual distributed system, as the performance metrics we measure are either independent of the test environment (approximation error and communication cost) or only dependent through a scaling factor (running time). Although the absolute value of the running time depends on the processor speed at the data source, different processor speeds will only cause different scaling factors, and thus the running times obtained in our experiments can still be used to compare the complexities between algorithms.}\looseness=-1

\subsection{Results for Joint DR and CR}

\subsubsection{Evaluated Algorithms}\label{subsubsec:DR+CR: Evaluated Algorithms}

In the case of a single data source, we evaluate the following algorithms:
\begin{itemize}
    \item ``FSS": the benchmark algorithm introduced in \cite{Feldman13SODA:report},
    \item ``JL+FSS": Algorithm \ref{Alg:JL DR+CR}, where we use JL projection before applying FSS, 
    \item ``FSS+JL": Algorithm \ref{Alg:FSS CR+JL DR}, where we use JL projection after applying FSS, and 
    \item ``JL+FSS+JL": Algorithm \ref{Alg:JL+FSS+JL}, where we apply JL projection both before and after FSS. 
\end{itemize}
In the case of multiple data sources, we evaluate the following algorithms:\looseness=-1
\begin{itemize}
    \item ``BKLW": the benchmark algorithm from \cite{Balcan14NIPS}, and 
    \item ``JL+BKLW": Algorithm \ref{Alg:distributed kmeans}, where we apply JL projection before BKLW. 
\end{itemize}
In both cases, we have tuned the parameters of both the benchmark and proposed algorithms to make all the algorithms achieve a similar empirical approximation error. As a baseline, we also include the naive method of ``no reduction (NR)'', i.e., transmitting the raw data.


\subsubsection{Results} \label{subsubsec:Joint DR and CR: Results}

\rev{In the case of a single data source, the data source computes and reports a data summary using the evaluated DR/CR algorithms, based on which a server computes $k$-means centers.}
The results are given in Figure~\ref{fig:single} and Table~\ref{tab:single source - comm cost}. Note that by definition, the baseline (NR) has a normalized $k$-means cost of $1$, a normalized communication cost of 1, and no computation at the data source. We observe the following: 
(i) Compared to the naive method of transmitting the raw data (NR), the proposed algorithms can dramatically reduce the communication cost (by $>99\%$) with a moderate increase in the $k$-means cost ($<10\%$). 
(ii) Compared to the benchmark (FSS), the proposed algorithms can achieve a similar or smaller $k$-means cost while significantly reducing the communication cost and/or complexity, \rev{which is thanks to the proper application of JL projection and consistent with our theoretical analysis in Table~\ref{tab:comparison}}. 
(iii) Between the approaches of DR+CR (JL+FSS) and CR+DR (FSS+JL), we see that the DR+CR approach yields a better performance for the NeurIPS dataset, where JL+FSS has a substantially shorter running time than FSS+JL but similar $k$-means cost and communication cost. This is because $\log{n}\ll \min(n,d)$ for this dataset, allowing JL+FSS to significantly reduce the complexity compared with FSS+JL without blowing up the communication cost \rev{according to our analysis in Table~\ref{tab:comparison}}. 
(iv) For a sufficiently high-dimensional dataset such as NeurIPS, JL+FSS+JL can \rev{further improve the communication-computation tradeoff while achieving a similar $k$-means cost, which is again consistent with our analysis in Table~\ref{tab:comparison}.} 

\begin{figure}[t]
\vspace{-.5em}
\begin{minipage}{0.95\linewidth}
\begin{minipage}{0.49\textwidth}
\centerline{
\includegraphics[width=\textwidth,height=3.5cm]{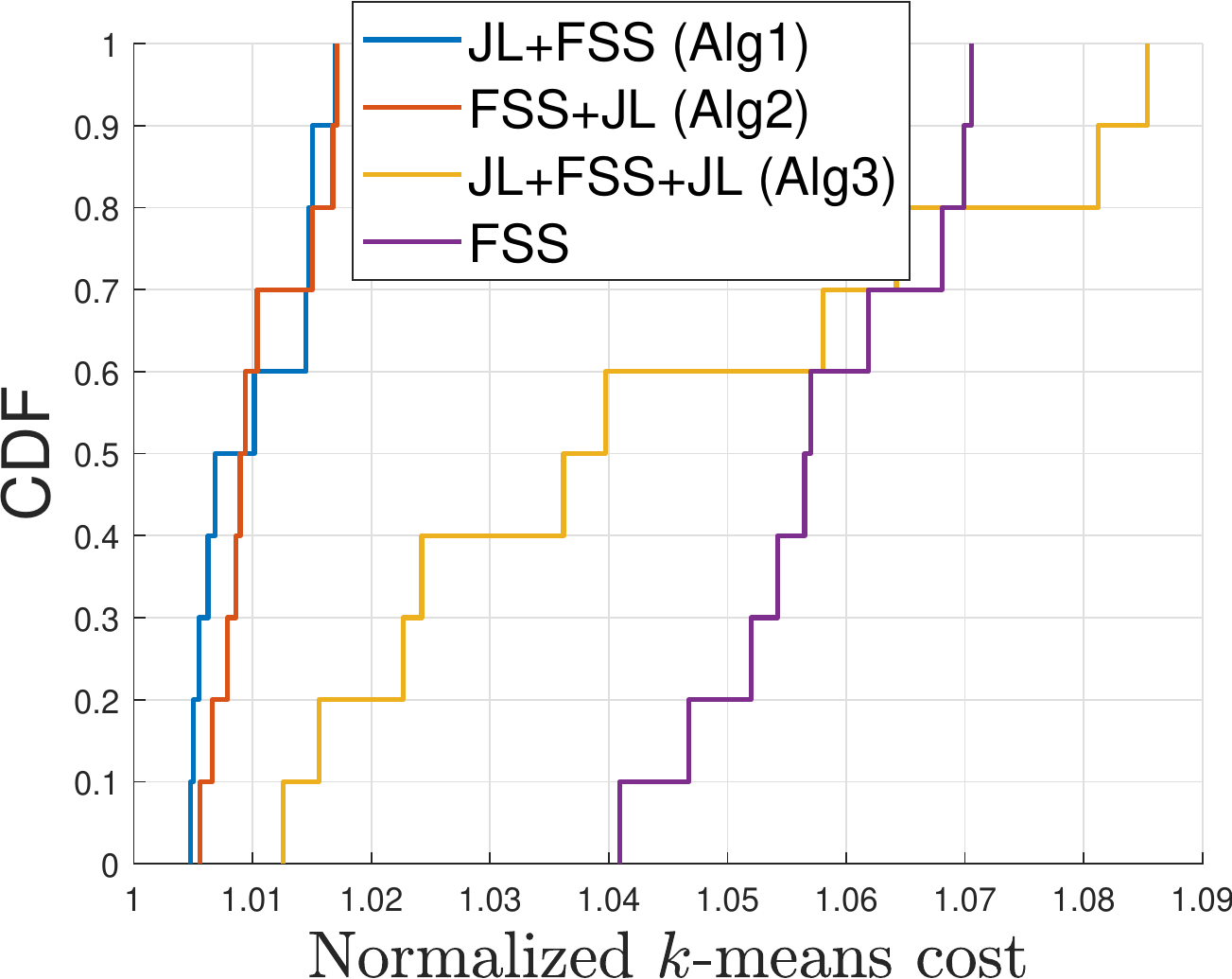}}
\vspace{+.6em}
\end{minipage}
\centering
\begin{minipage}{.49\textwidth}
\centerline{
\includegraphics[width=\textwidth,,height=3.5cm]{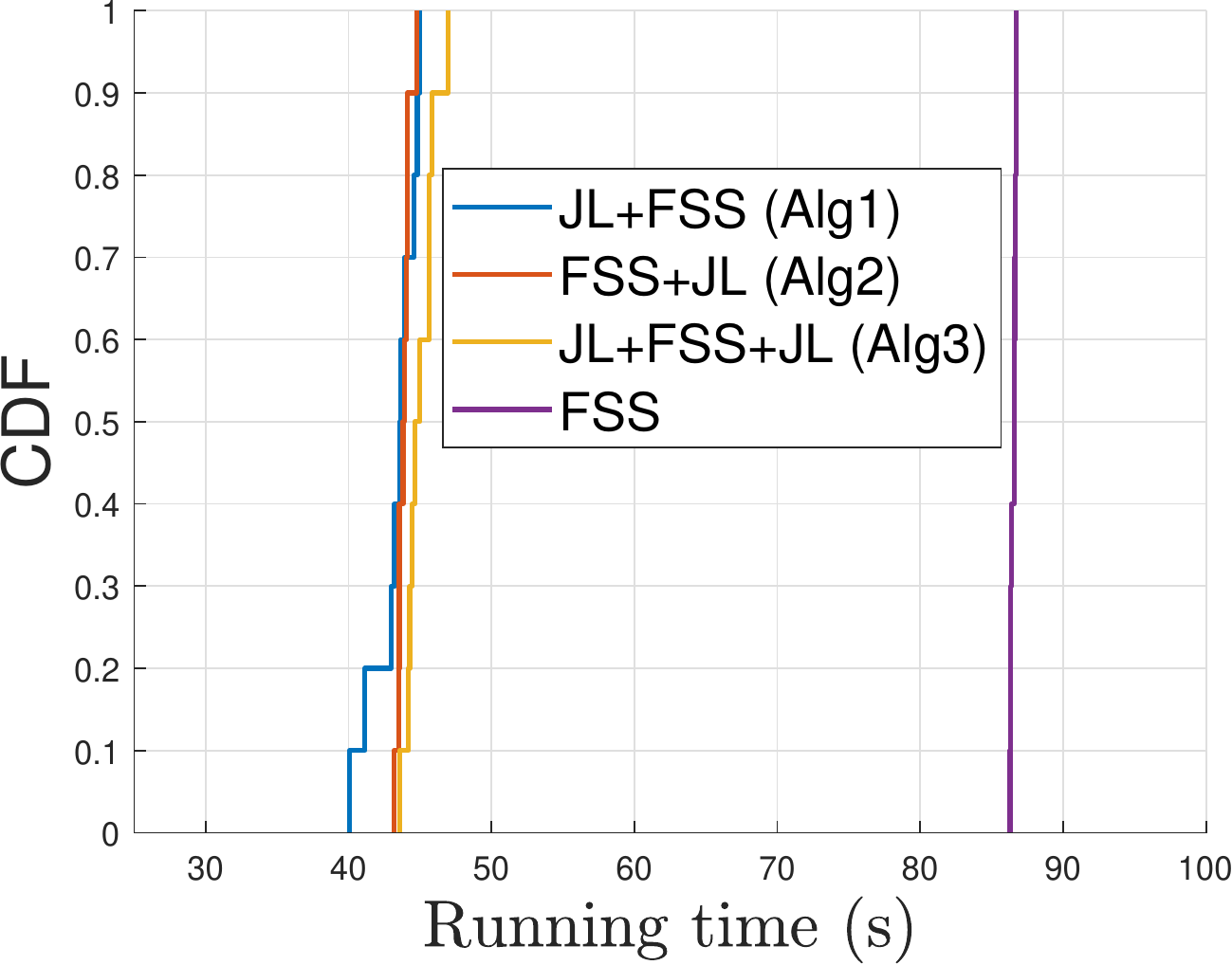}}
\vspace{+.4em}
\end{minipage}
\centerline{\scriptsize (a)  MNIST 
}
\end{minipage}\hfill
\vspace{+.4em}
 \begin{minipage}{0.95\linewidth}
\begin{minipage}{0.49\textwidth}
\centerline{
\includegraphics[width=\textwidth,height=3.5cm]{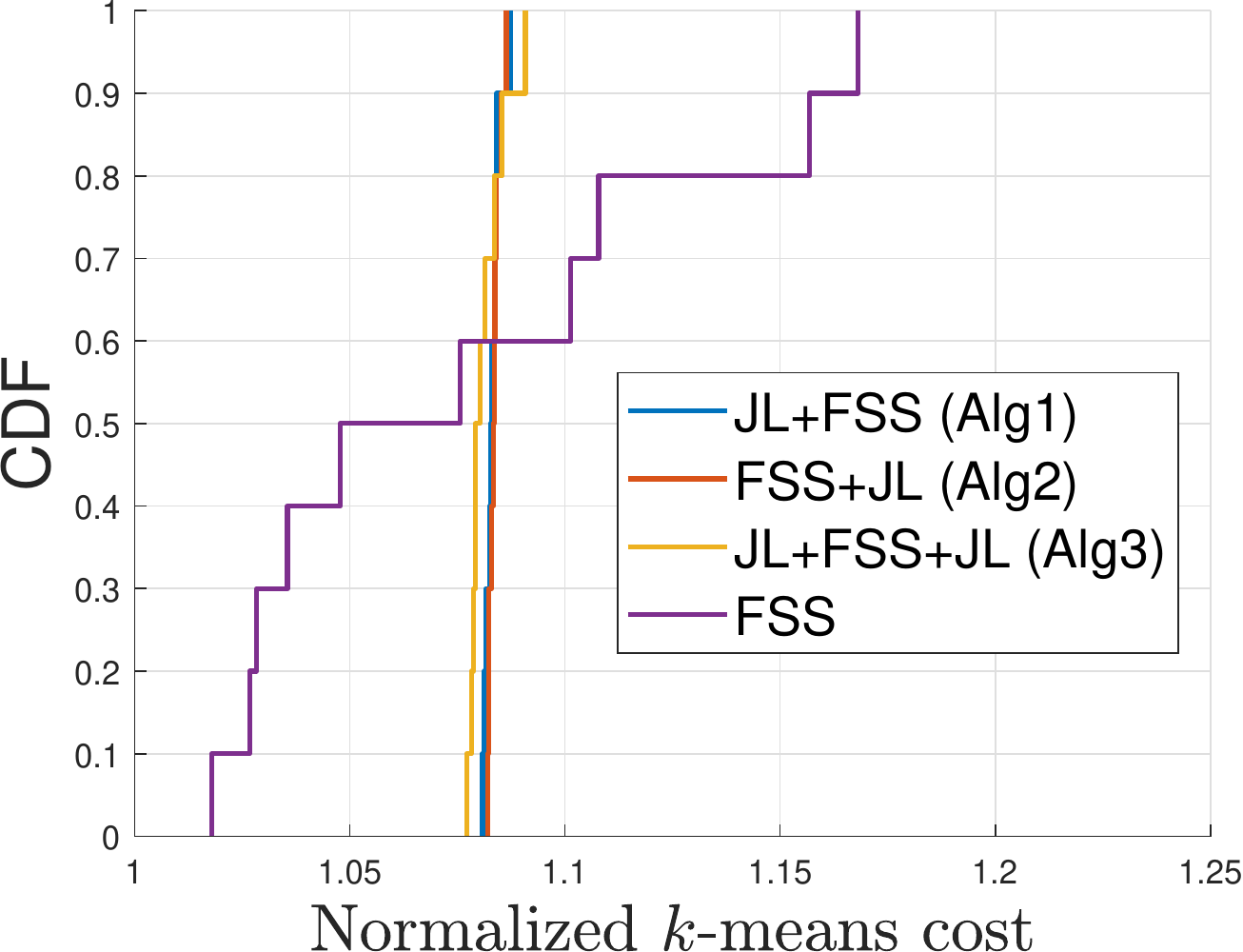}}
\vspace{+.6em}
\end{minipage}
\centering
\begin{minipage}{.49\textwidth}
\centerline{
\includegraphics[width=\textwidth,,height=3.5cm]{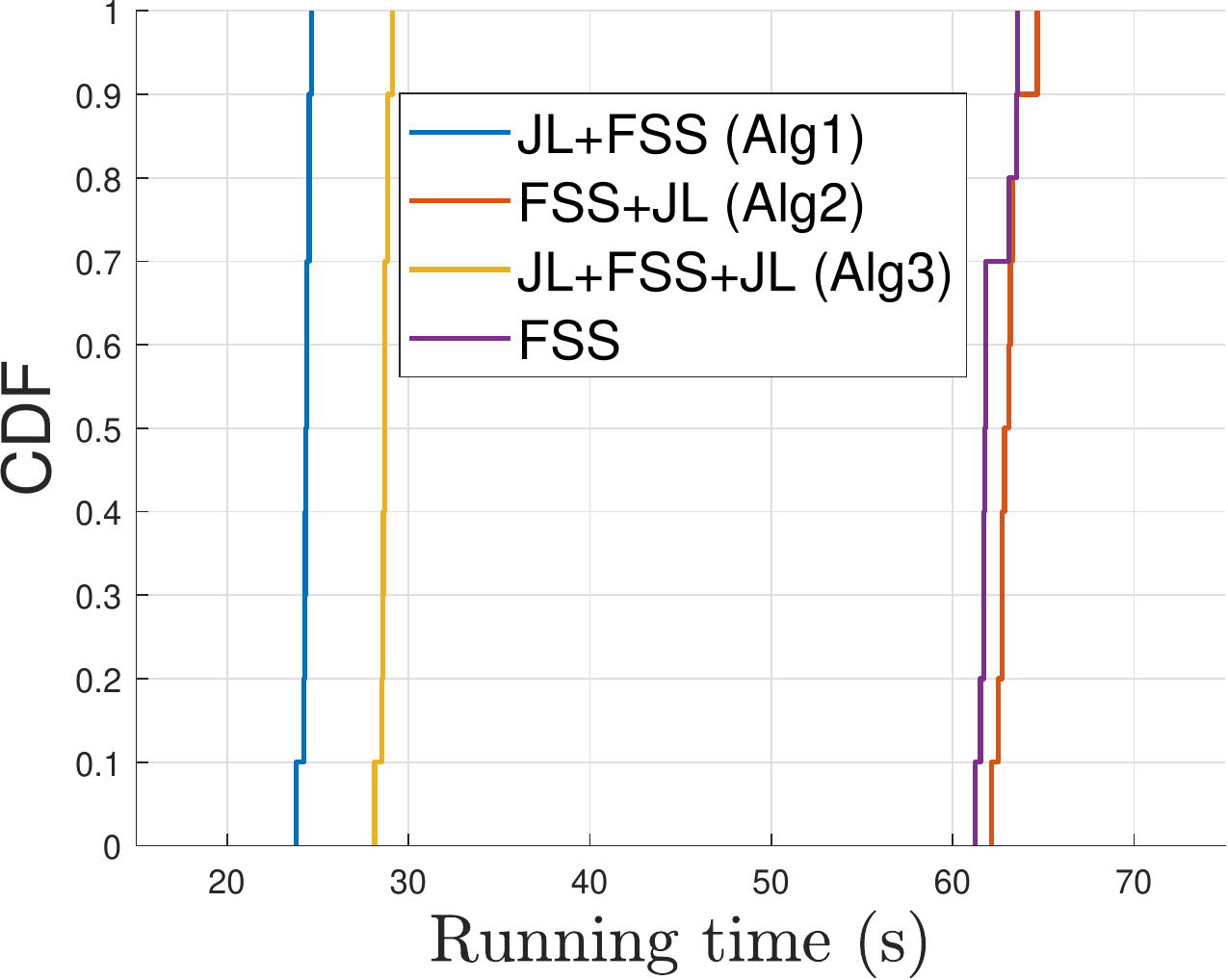}}
\vspace{+.4em}
\end{minipage}
\centerline{\scriptsize (b)  NeurIPS 
}
\end{minipage}
\caption{Single-source case: normalized $k$-means cost and running time}\label{fig:single}
  \vspace{-.5em}
\end{figure}

\begin{table}[b]
\vspace{-1em}
\caption{Single-source Case: Normalized Communication Cost} \label{tab:single source - comm cost}
\centering
\vspace{-.5em}
\begin{tabular}{r| c| c |c |c |c}
\hline
Dataset & NR & FSS & JL+FSS & FSS+JL & JL+FSS+JL\\ \hline
MNIST & 1 & 8.95e-3 & 5.82e-3  & 5.82e-3 & 5.97e-3 \\ \hline
NeurIPS & 1 & 5.87e-3 & 3.60e-3 & 3.59e-3 & 2.84e-3 \\ \hline
\end{tabular}
\vspace{-0.2cm}
\end{table}

\rev{In the case of multiple data sources, $m=10$ data sources cooperatively compute and report a data summary using the evaluated distributed DR/CR algorithms, based on which a server computes $k$-means centers for the union of the $m$ local datasets.}
The results are shown in Figure~\ref{fig:multiple} and Table~\ref{tab:multiple source - comm cost}. We see that the proposed algorithm (JL+BKLW) achieves a $k$-means cost comparable to the benchmark (BKLW), while incurring a lower complexity and a lower communication cost. \rev{This improvement is again thanks to the suitable application of JL projection, which is efficient in both computational complexity and communication cost; the observations are consistent with the analysis in Table~\ref{tab:comparison}.}
Recall that applying JL projection after BKLW will not reduce the communication cost or the complexity as explained after Theorem~\ref{thm:distributed kmeans}. 

\begin{figure}[t]
\begin{minipage}{0.95\linewidth}
\centering
\begin{minipage}{0.49\textwidth}
\centerline{
\includegraphics[width=\textwidth,height=3.5cm]{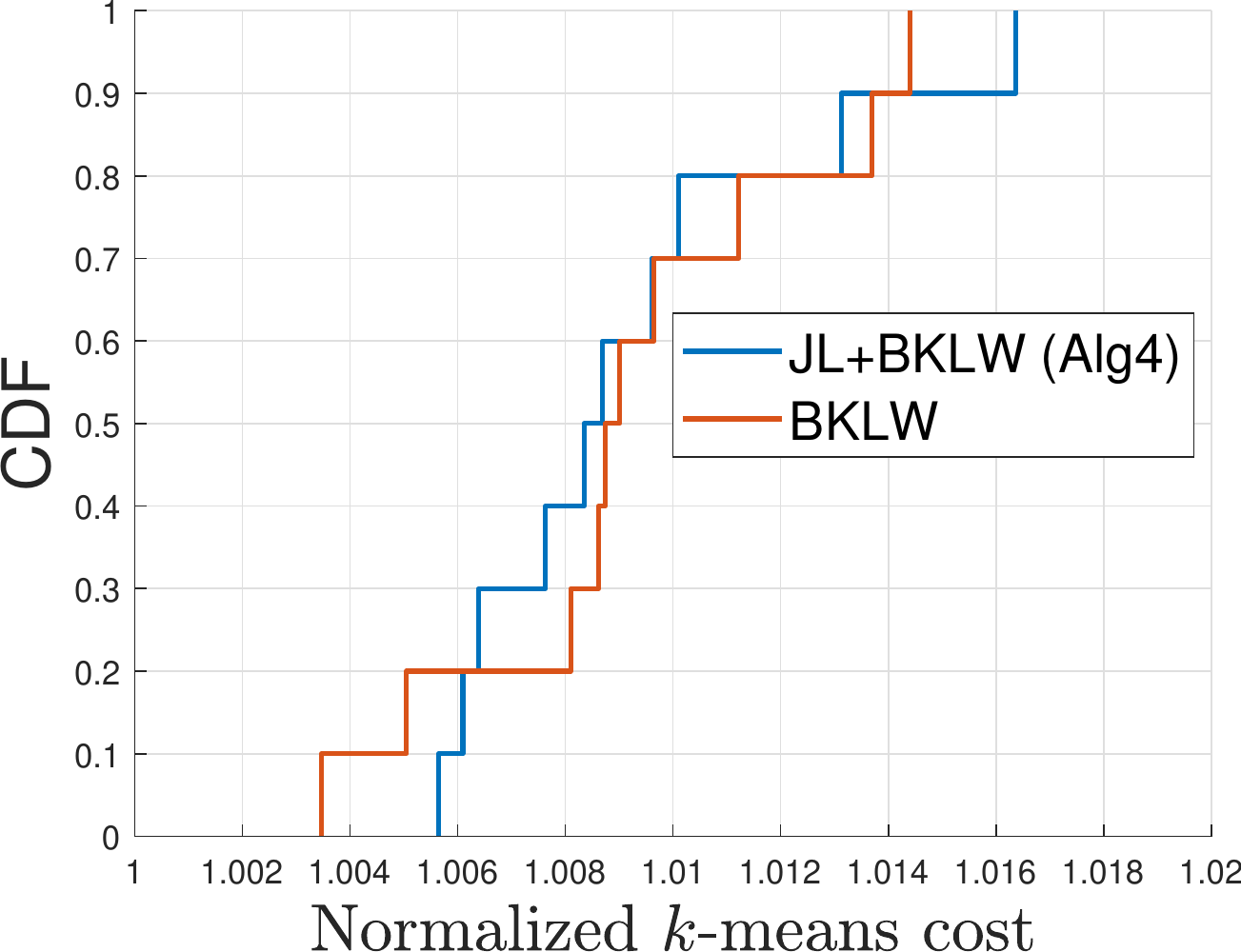}}
\vspace{+.6em}
\end{minipage}
\begin{minipage}{.49\textwidth}
\centerline{
\includegraphics[width=\textwidth,,height=3.5cm]{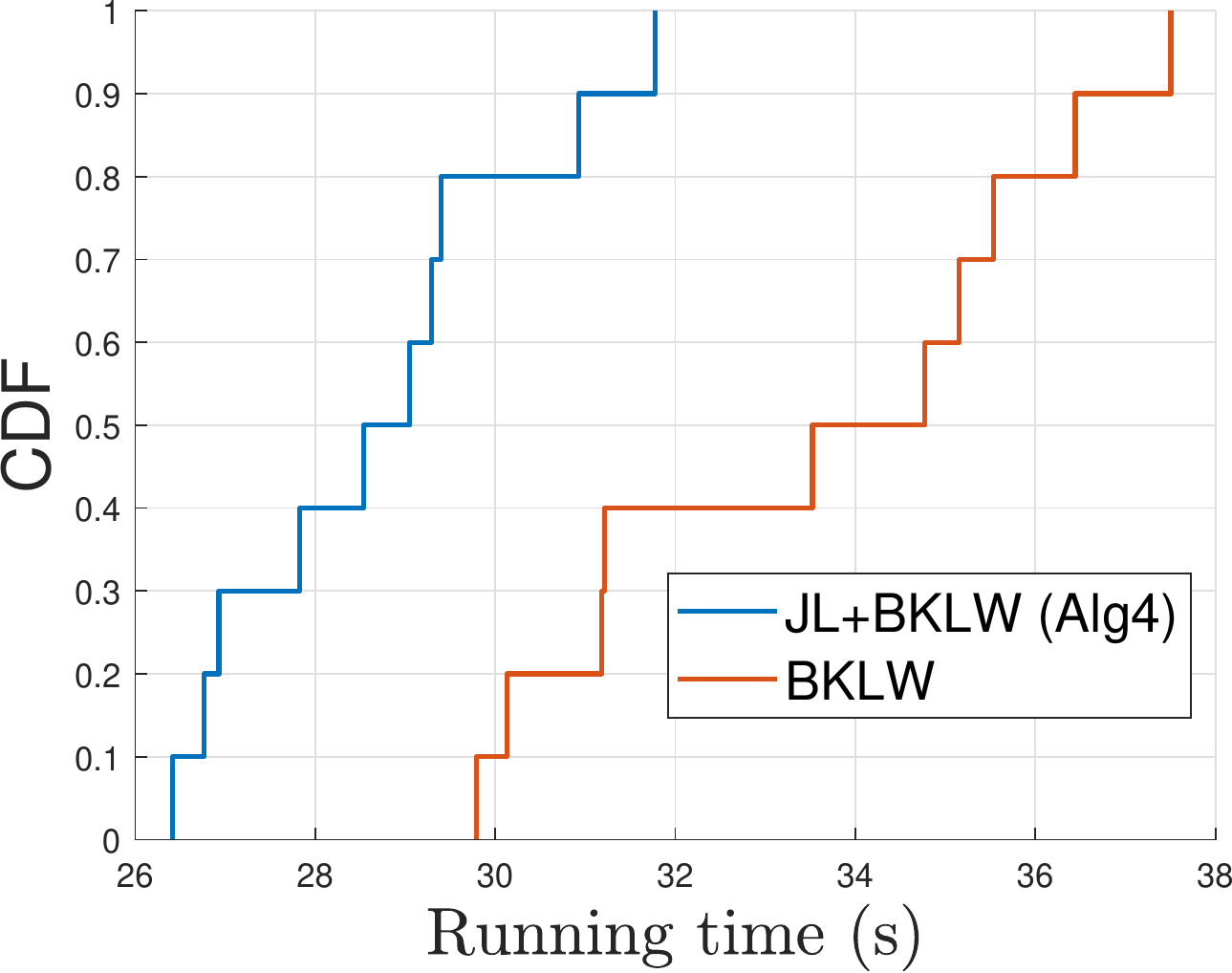}}
\vspace{+.4em}
\end{minipage}
\centerline{\scriptsize (a) MNIST 
}
\end{minipage}\hfill
\vspace{+.4em}
\begin{minipage}{0.95\linewidth}
\centering
\begin{minipage}{0.49\textwidth}
\centerline{
\includegraphics[width=\textwidth,height=3.5cm]{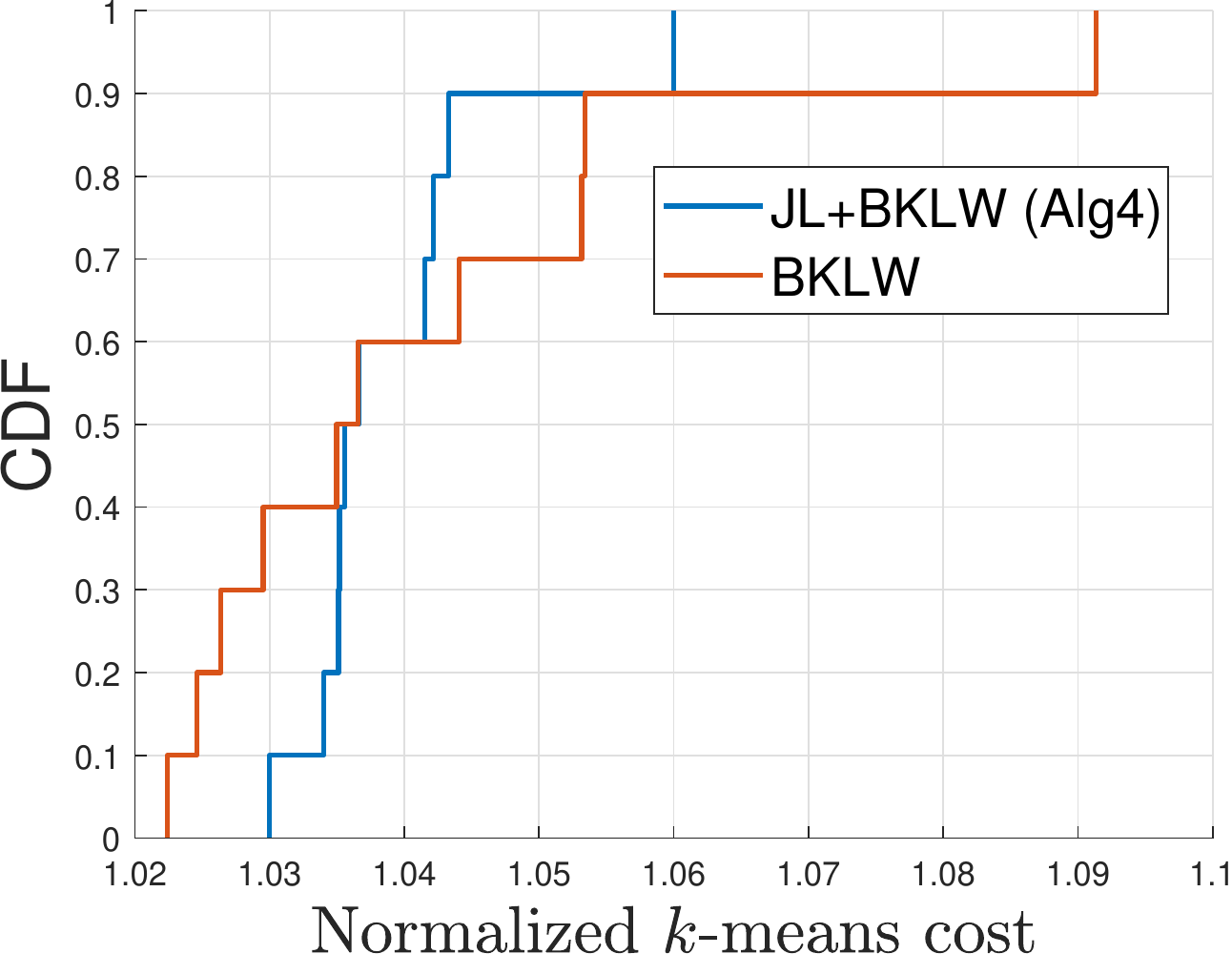}}
\vspace{+.6em}
\end{minipage}
\begin{minipage}{0.49\textwidth}
\centerline{
\includegraphics[width=\textwidth,,height=3.5cm]{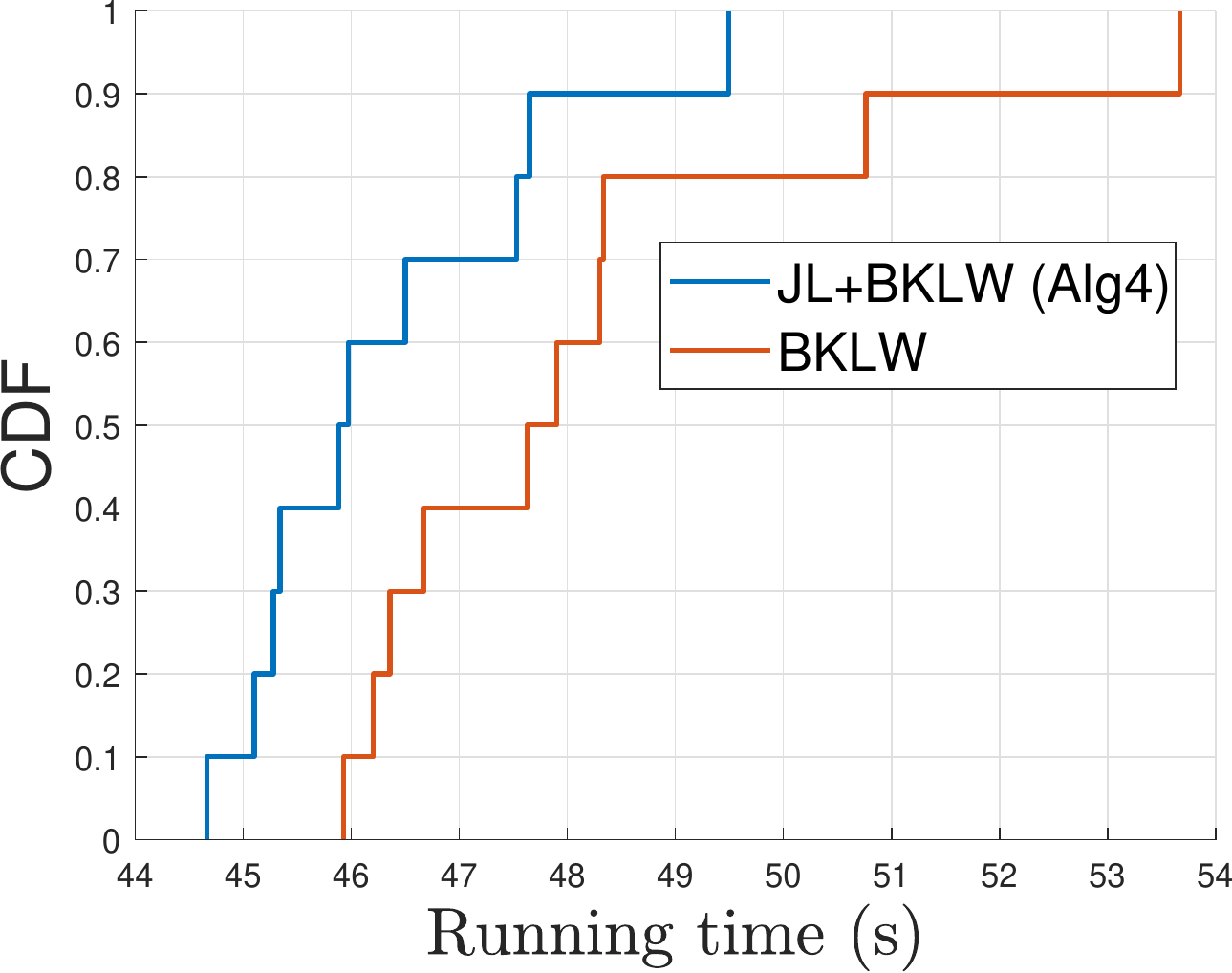}}
\vspace{+.4em}
\end{minipage}
\centerline{\scriptsize (b) NeurIPS 
}
\end{minipage}
\caption{Multiple-source case: normalized $k$-means cost and running time} \label{fig:multiple}
   \vspace{-.5em}
\end{figure}

\begin{table}[t]
\vspace{-.5em}
\caption{Multiple-source Case: Normalized Communication Cost} \label{tab:multiple source - comm cost}
\vspace{-.5em}
\centering
\begin{tabular}{r| c| c |c }
\hline
Dataset & NR & BKLW & JL+BKLW \\ \hline
MNIST & 1 & 1.97e-2 & 1.69e-2 \\ \hline
NeurIPS  & 1 & 1.28e-2 & 1.05e-2 \\ \hline
\end{tabular}
 \vspace{-0.5em}
\end{table}


\subsection{Results for Joint DR, CR, and QT} \label{sec: Results for Joint DR, CR and QT}
We assume double precision for the original dataset before applying QT. 
For tractability, in the experiments we set all the $\epsilon$ values except $\epsilon_{QT}$ to be equal when solving \eqref{prob: DR, CR, QT}. 

\begin{figure}[t]
\vspace{-.5em}
\begin{minipage}{0.95\linewidth}
\begin{minipage}{0.49\textwidth}
\centerline{
\includegraphics[width=\textwidth,height=3.5cm]{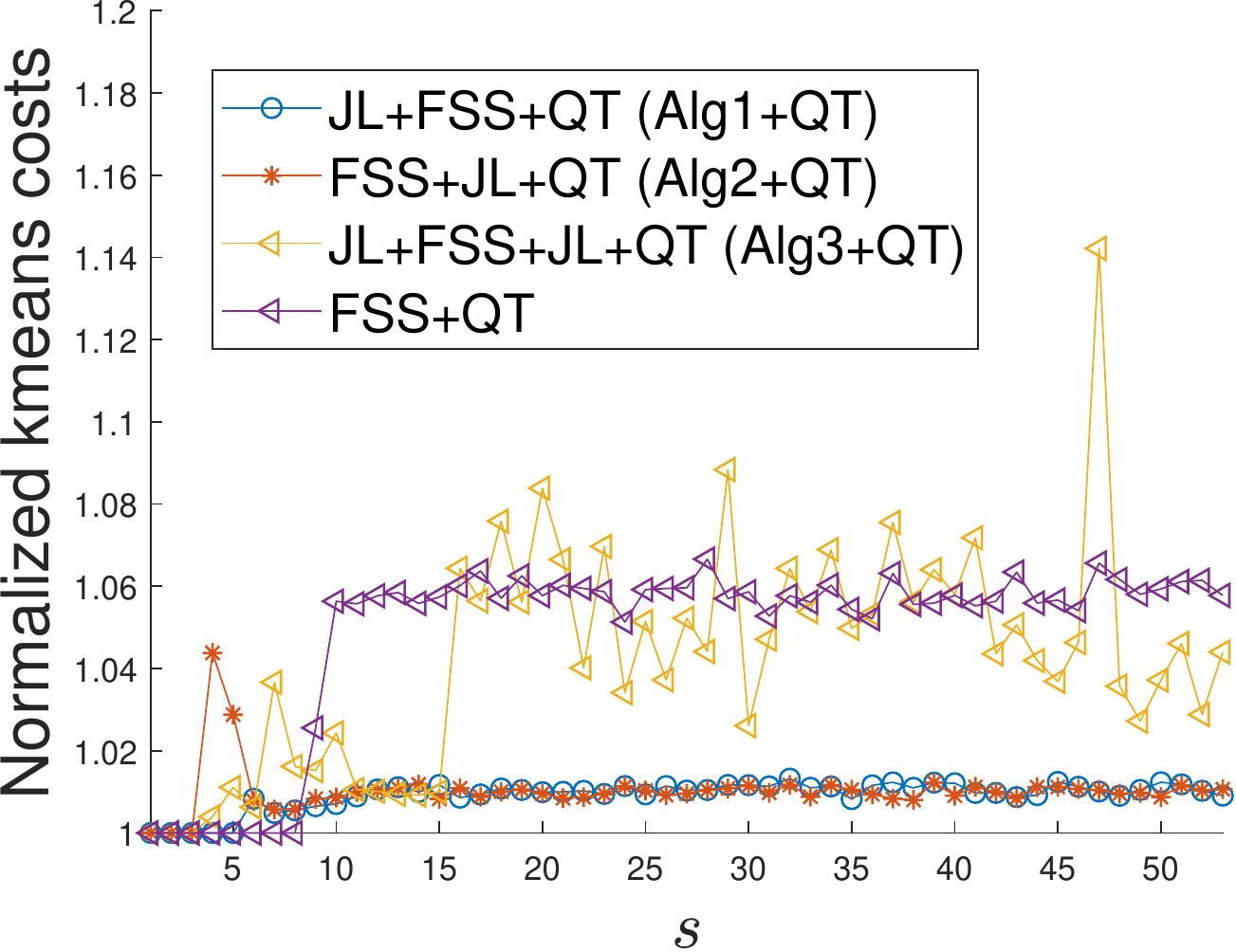}}
\vspace{+.6em}
\centerline{\scriptsize (a) Normalized $k$-means cost}
\end{minipage}
\centering
\begin{minipage}{.49\textwidth}
\centerline{
\includegraphics[width=\textwidth,,height=3.5cm]{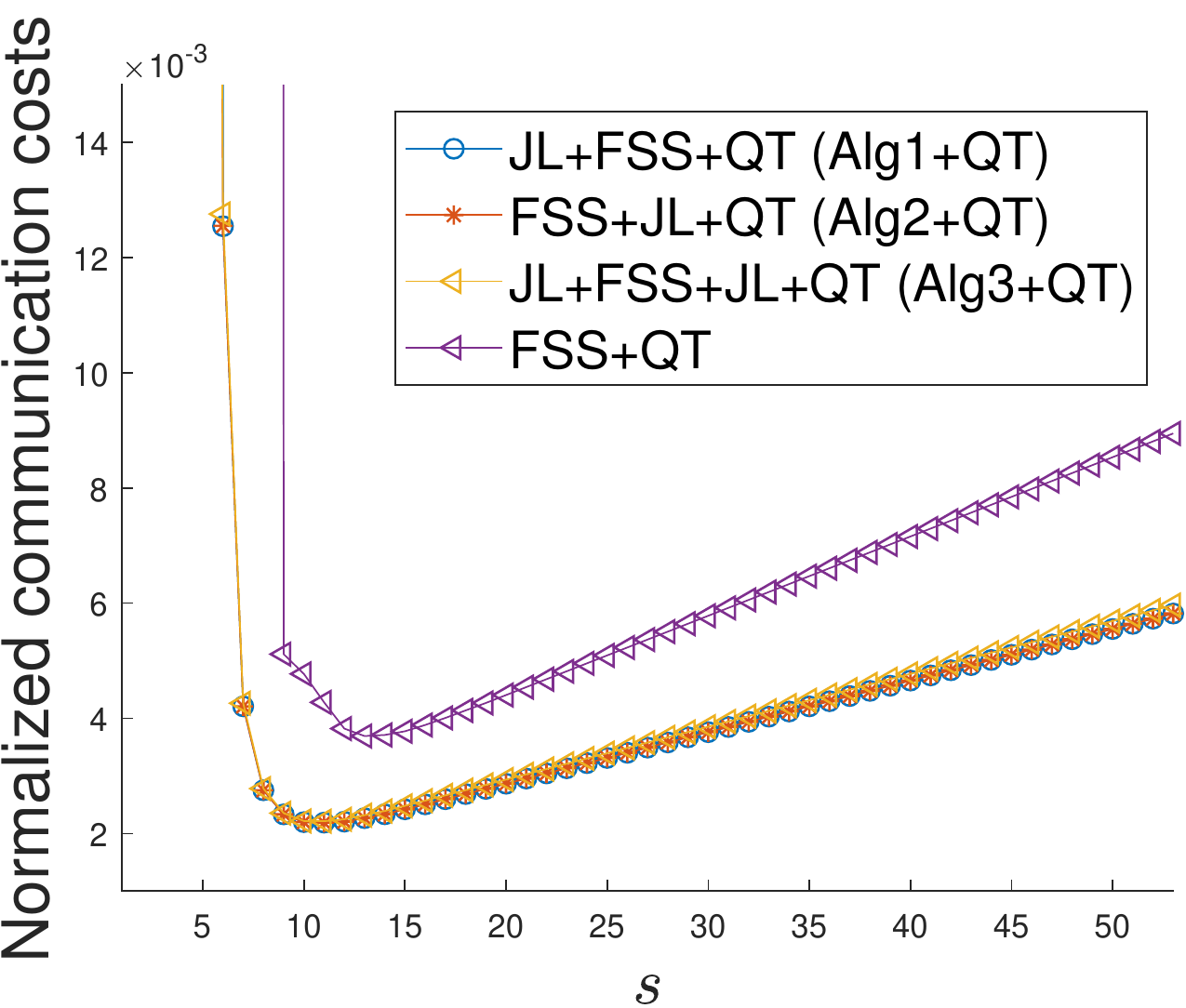}}
\vspace{+.4em}
\centerline{\scriptsize (b) Normalized communication cost}
\end{minipage}
\end{minipage}\hfill
\vspace{+.4em}
 \begin{minipage}{0.95\linewidth}
\begin{minipage}{0.49\textwidth}
\centerline{
\includegraphics[width=\textwidth,height=3.5cm]{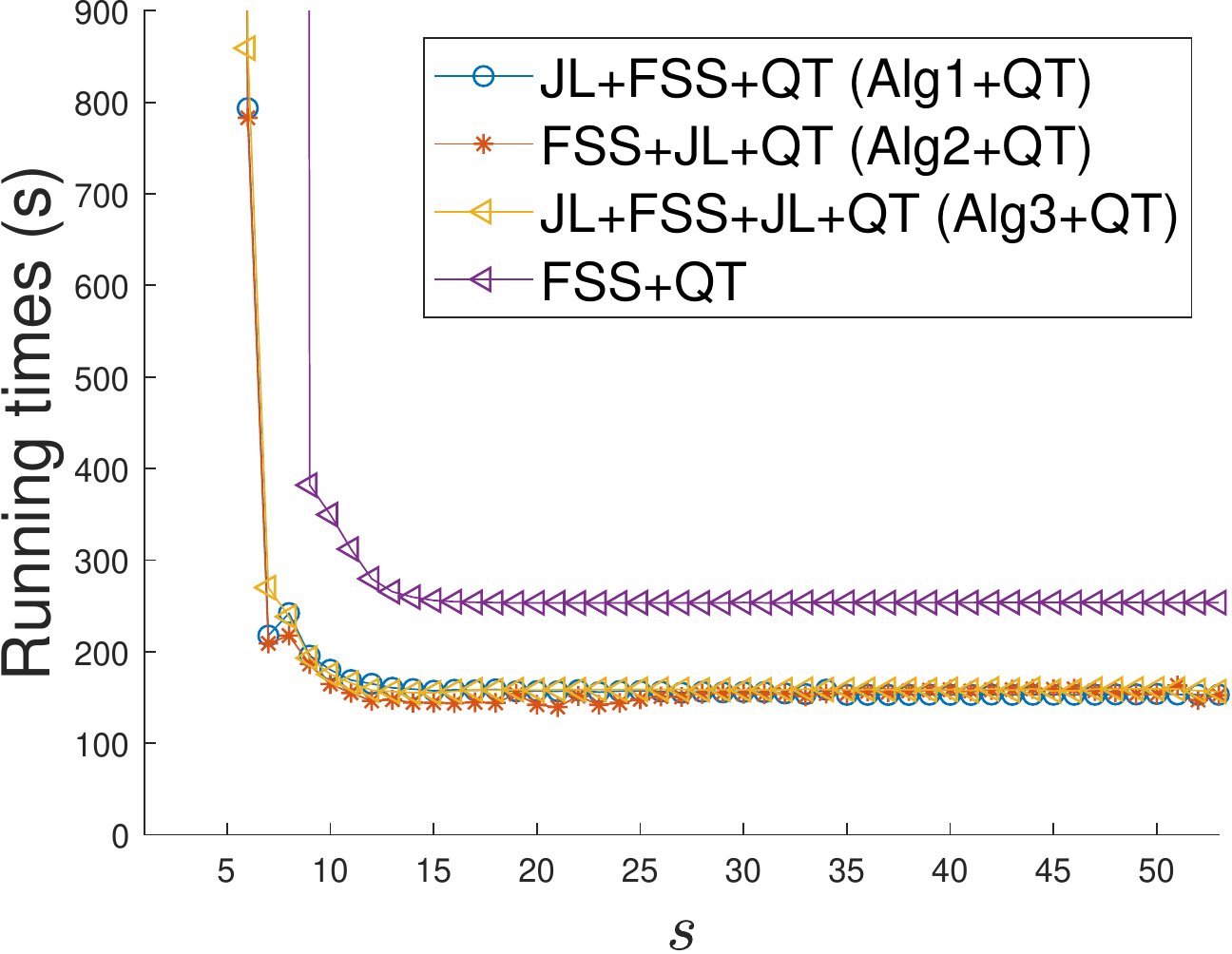}}
\vspace{+.6em}
\centerline{\scriptsize (c) Running time (s)}
\end{minipage}
\centering
\end{minipage}
\caption{Single-source case with quantization: MNIST }
\label{fig: extension - mnist single}
\vspace{-.5em}
\end{figure}

\begin{figure}[t]
\vspace{-.5em}
\begin{minipage}{0.95\linewidth}
\begin{minipage}{0.49\textwidth}
\centerline{
\includegraphics[width=\textwidth,height=3.5cm]{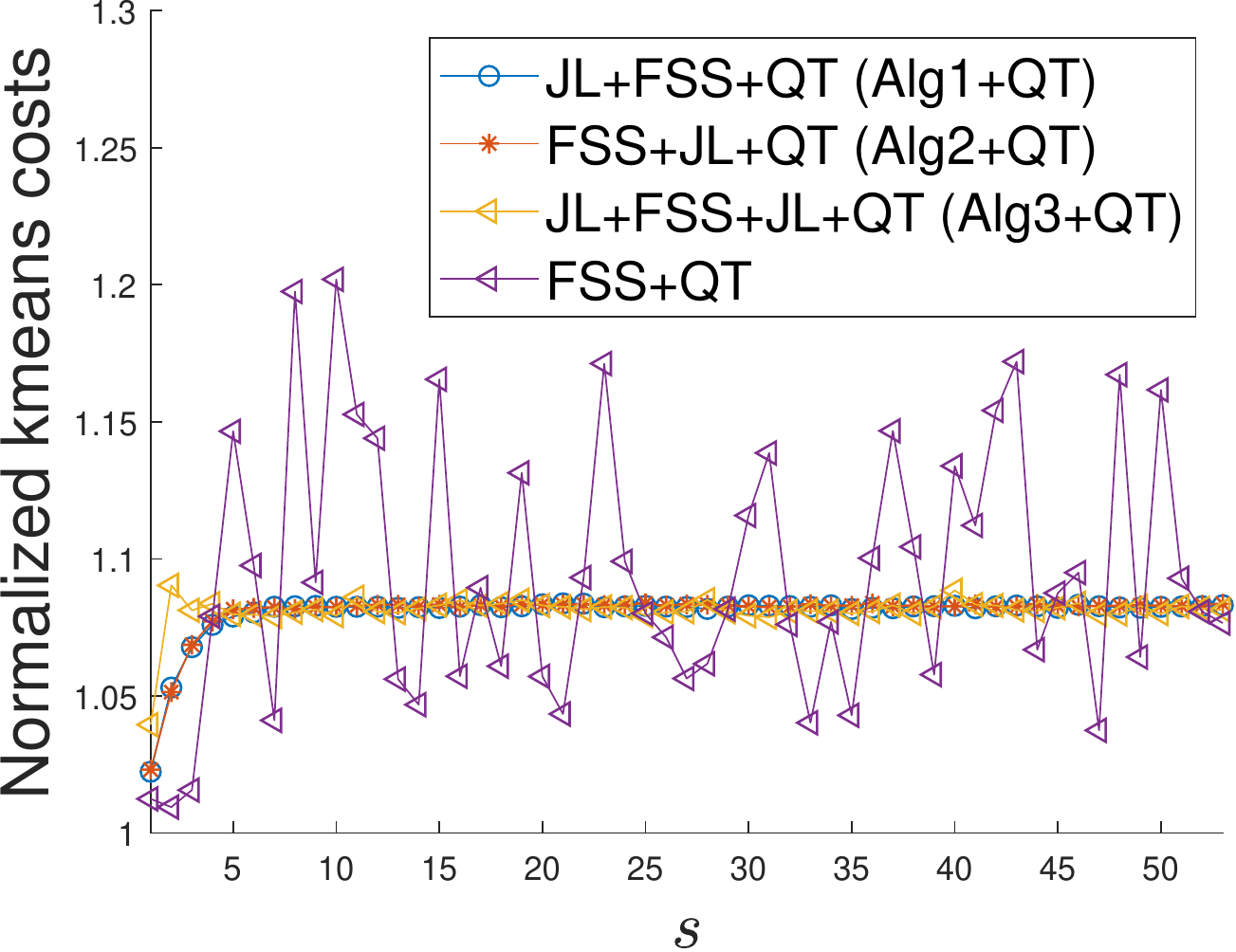}}
\vspace{+.6em}
\centerline{\scriptsize (a) Normalized $k$-means cost}
\end{minipage}
\centering
\begin{minipage}{.49\textwidth}
\centerline{
\includegraphics[width=\textwidth,,height=3.5cm]{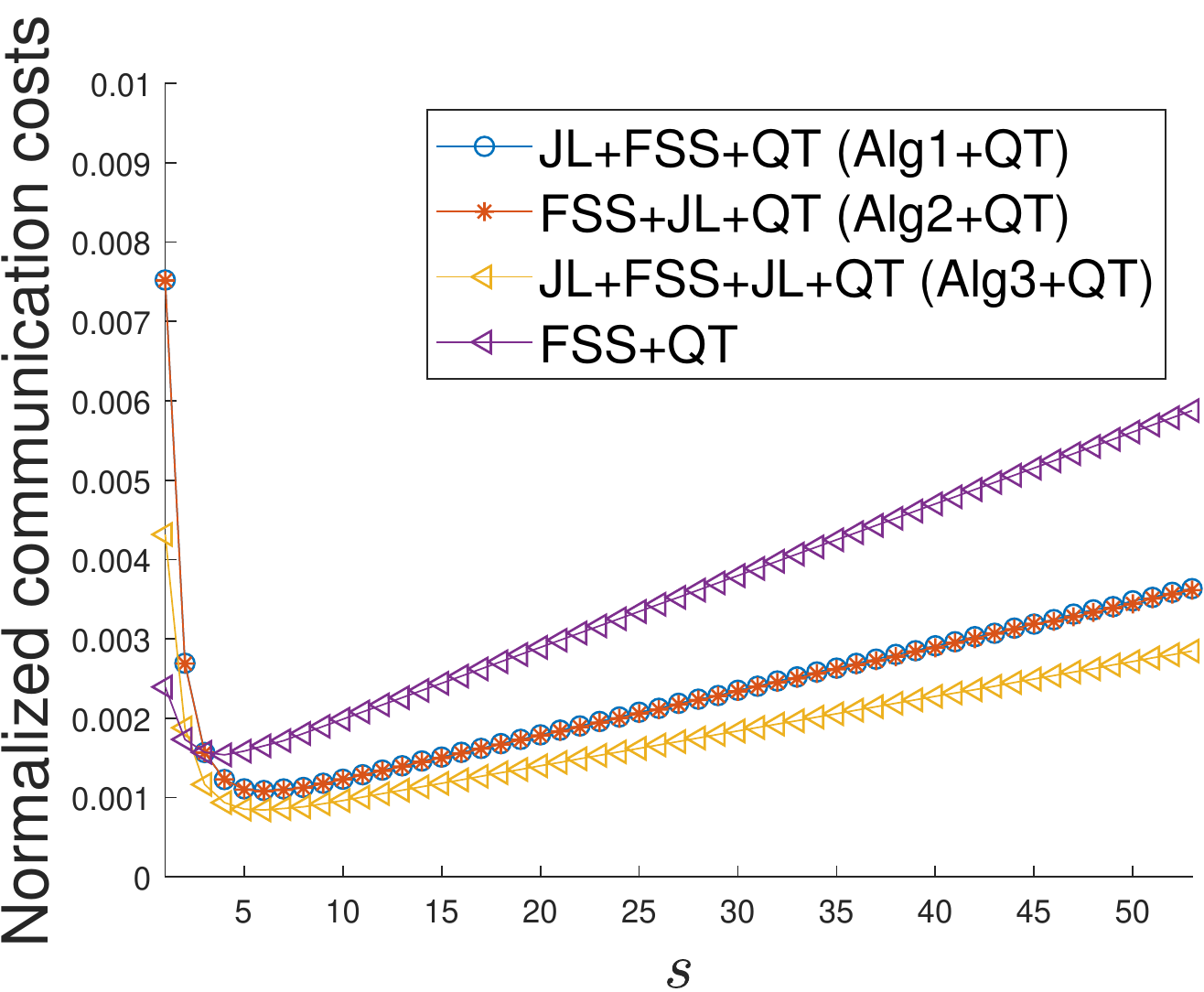}}
\vspace{+.4em}
\centerline{\scriptsize (b) Normalized communication cost}
\end{minipage}
\end{minipage}\hfill
\vspace{+.4em}
 \begin{minipage}{0.95\linewidth}
\begin{minipage}{0.49\textwidth}
\centerline{
\includegraphics[width=\textwidth,height=3.5cm]{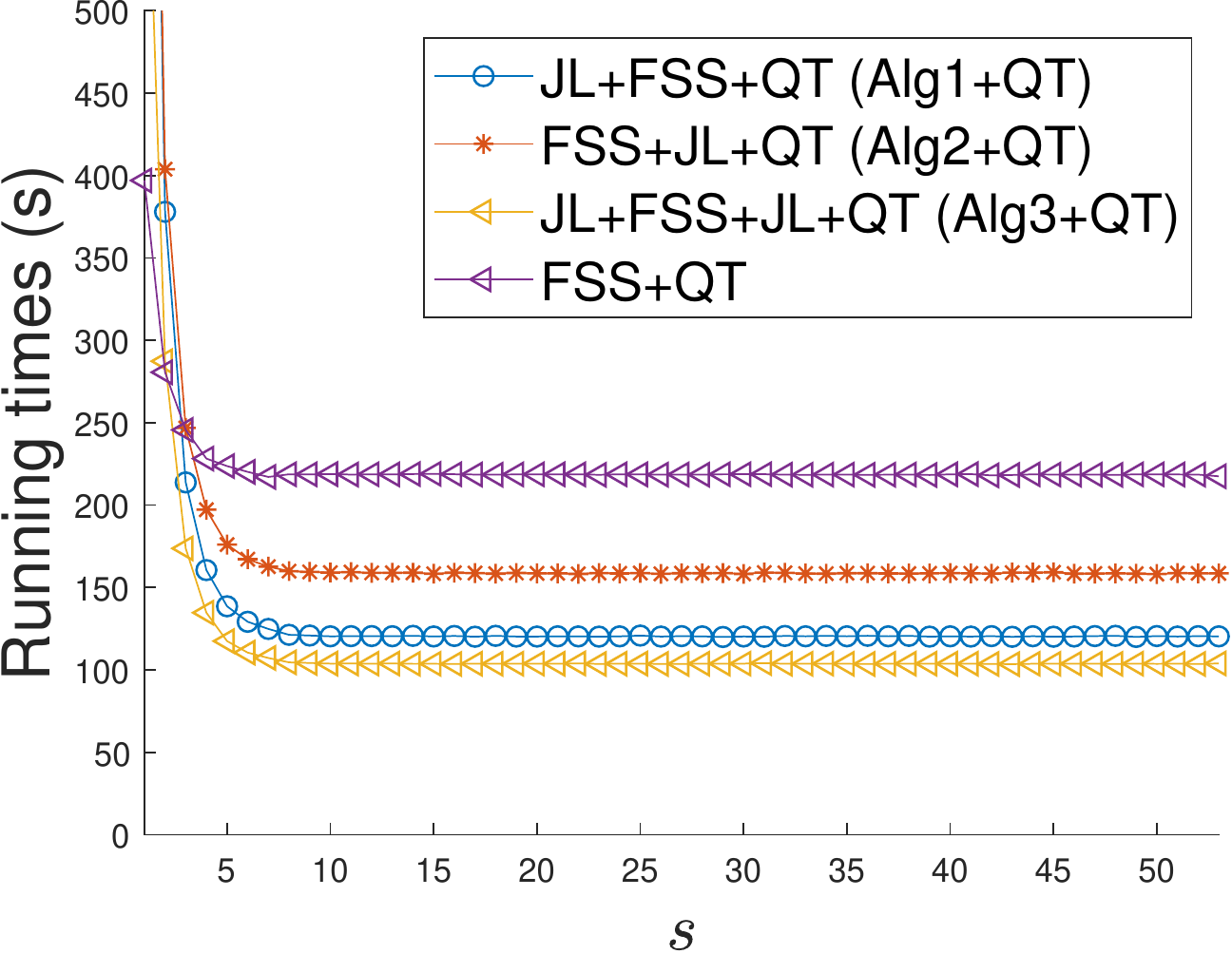}}
\vspace{+.6em}
\centerline{\scriptsize (c) Running time (s)}
\end{minipage}
\centering
\end{minipage}
\caption{Single-source case with quantization: NeurIPS}
\label{fig: extension - nips single}
\vspace{-.5em}
\end{figure}

\subsubsection{Evaluated Algorithms}\label{subsubsec:DR+CR+QT: Evaluated Algorithms}

In the case of a single data source, we evaluate the following:
\begin{itemize}
\item ``FSS+QT'': the quantization-added version of FSS~\cite{Feldman13SODA:report},
    \item ``JL+FSS+QT": the quantization-added version of Algorithm \ref{Alg:JL DR+CR}, 
    \item ``FSS+JL+QT": the quantization-added version of Algorithm \ref{Alg:FSS CR+JL DR}, and 
    \item ``JL+FSS+JL+QT": the quantization-added version of Algorithm \ref{Alg:JL+FSS+JL}. 
\end{itemize}
In the case of multiple data sources, we evaluate the following:\looseness=-1
\begin{itemize}
    \item ``BKLW+QT": the quantization-added version of BKLW~\cite{Balcan14NIPS}, and 
    \item ``JL+BKLW+QT": the quantization-added version of Algorithm \ref{Alg:distributed kmeans}. 
\end{itemize}
For each algorithm, we construct a data summary under each configuration that corresponds to a possible number of significant bits $s$, and then solve $k$-means based on the data summary. 
Since the IEEE Standard 754 floating number representation \cite{IEEE754} consists of 53 significant bits, we enumerate $s=1,\ldots, 53$. 

\begin{figure}[t]
\vspace{-.5em}
\begin{minipage}{0.95\linewidth}
\begin{minipage}{0.49\textwidth}
\centerline{
\includegraphics[width=\textwidth,height=3.5cm]{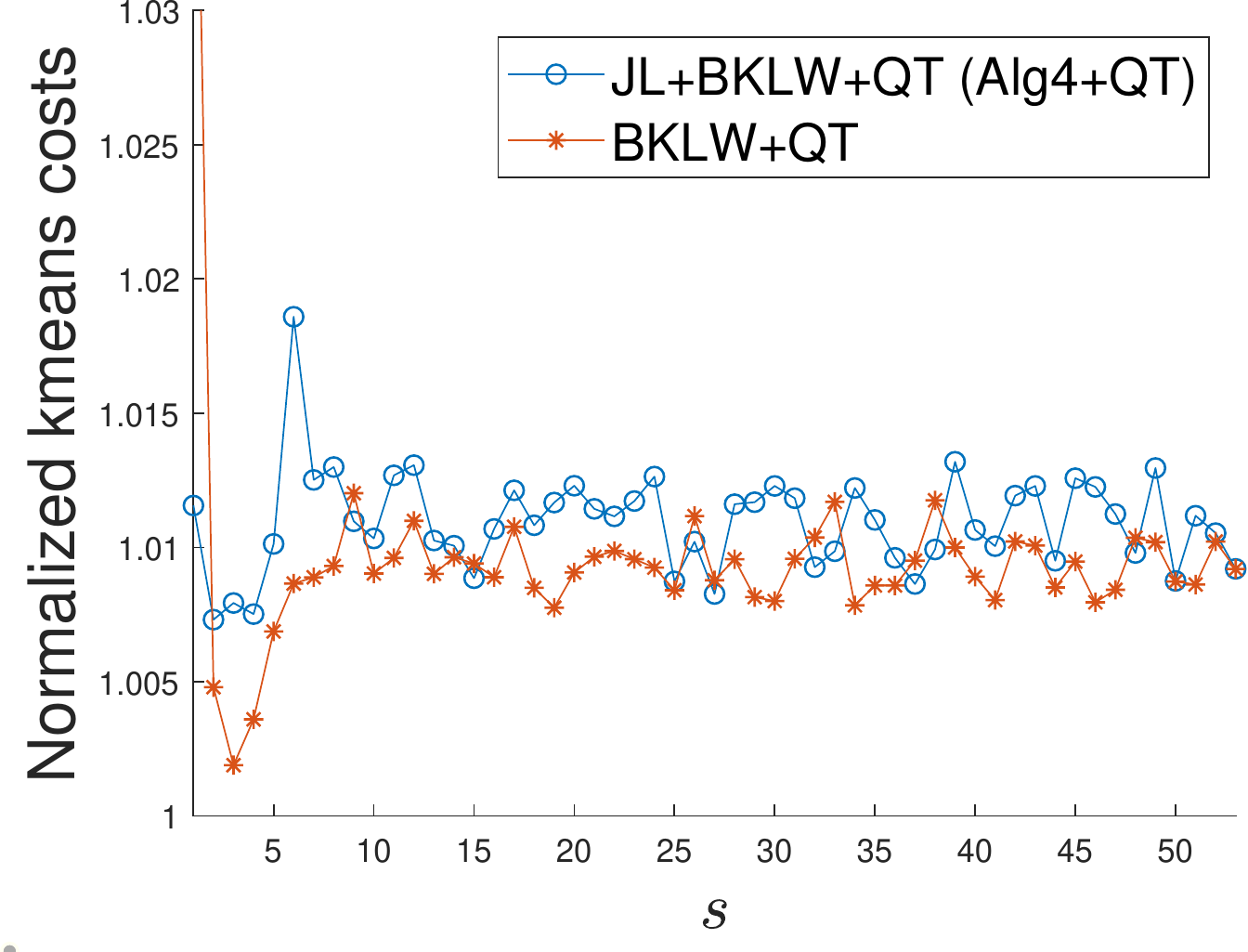}}
\vspace{+.6em}
\centerline{\scriptsize (a) Normalized $k$-means cost}
\end{minipage}
\centering
\begin{minipage}{.49\textwidth}
\centerline{
\includegraphics[width=\textwidth,,height=3.5cm]{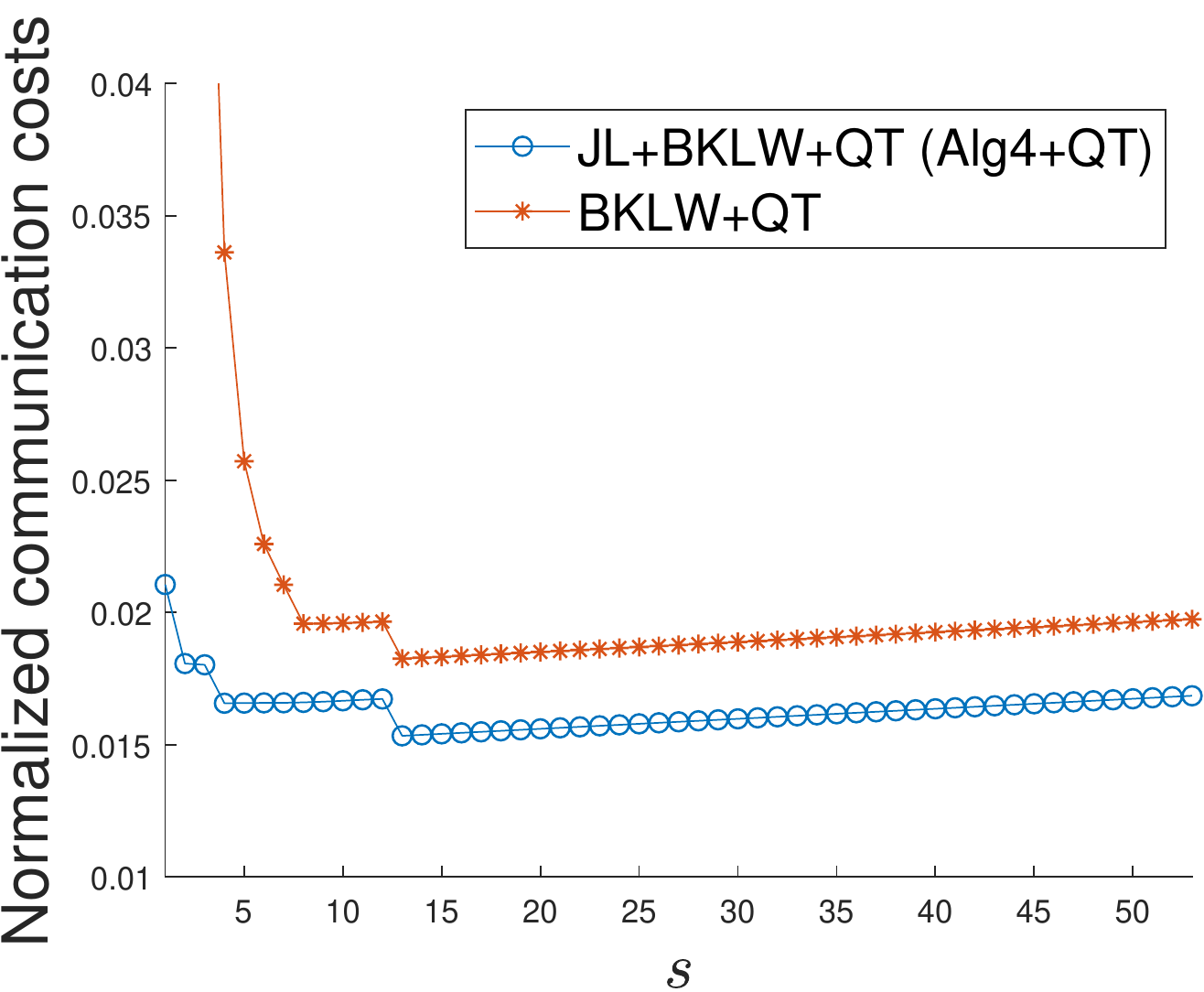}}
\vspace{+.4em}
\centerline{\scriptsize (b) Normalized communication cost}
\end{minipage}
\end{minipage}\hfill
\vspace{+.4em}
 \begin{minipage}{0.95\linewidth}
\begin{minipage}{0.49\textwidth}
\centerline{
\includegraphics[width=\textwidth,height=3.5cm]{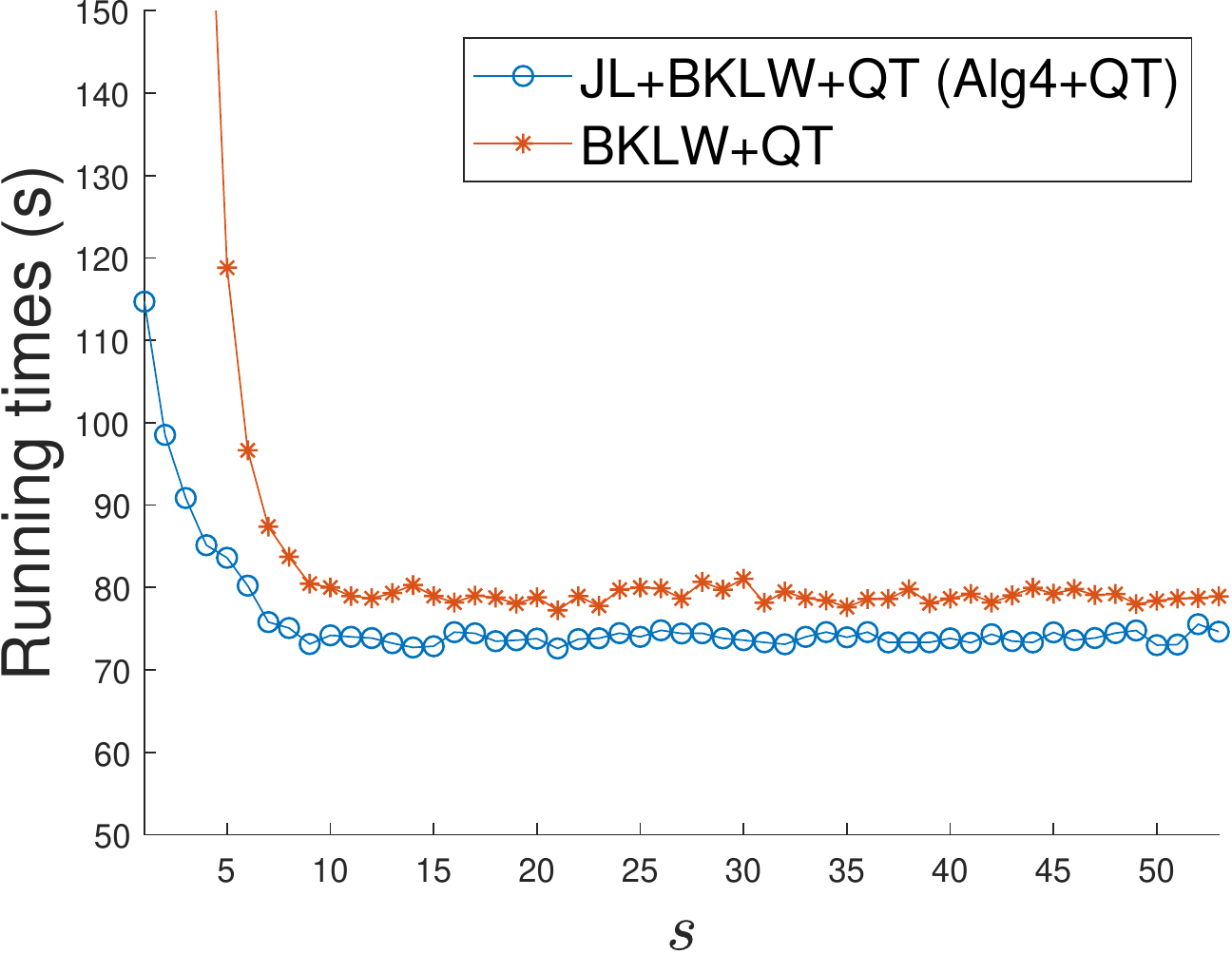}}
\vspace{+.6em}
\centerline{\scriptsize (c) Running time (s)}
\end{minipage}
\centering
\end{minipage}
\caption{Multiple-source case with quantization: MNIST}
\label{fig: extension - mnist multiple}
\vspace{-.5em}
\end{figure}

\begin{figure}[t]
\vspace{-.5em}
\begin{minipage}{0.95\linewidth}
\begin{minipage}{0.49\textwidth}
\centerline{
\includegraphics[width=\textwidth,height=3.5cm]{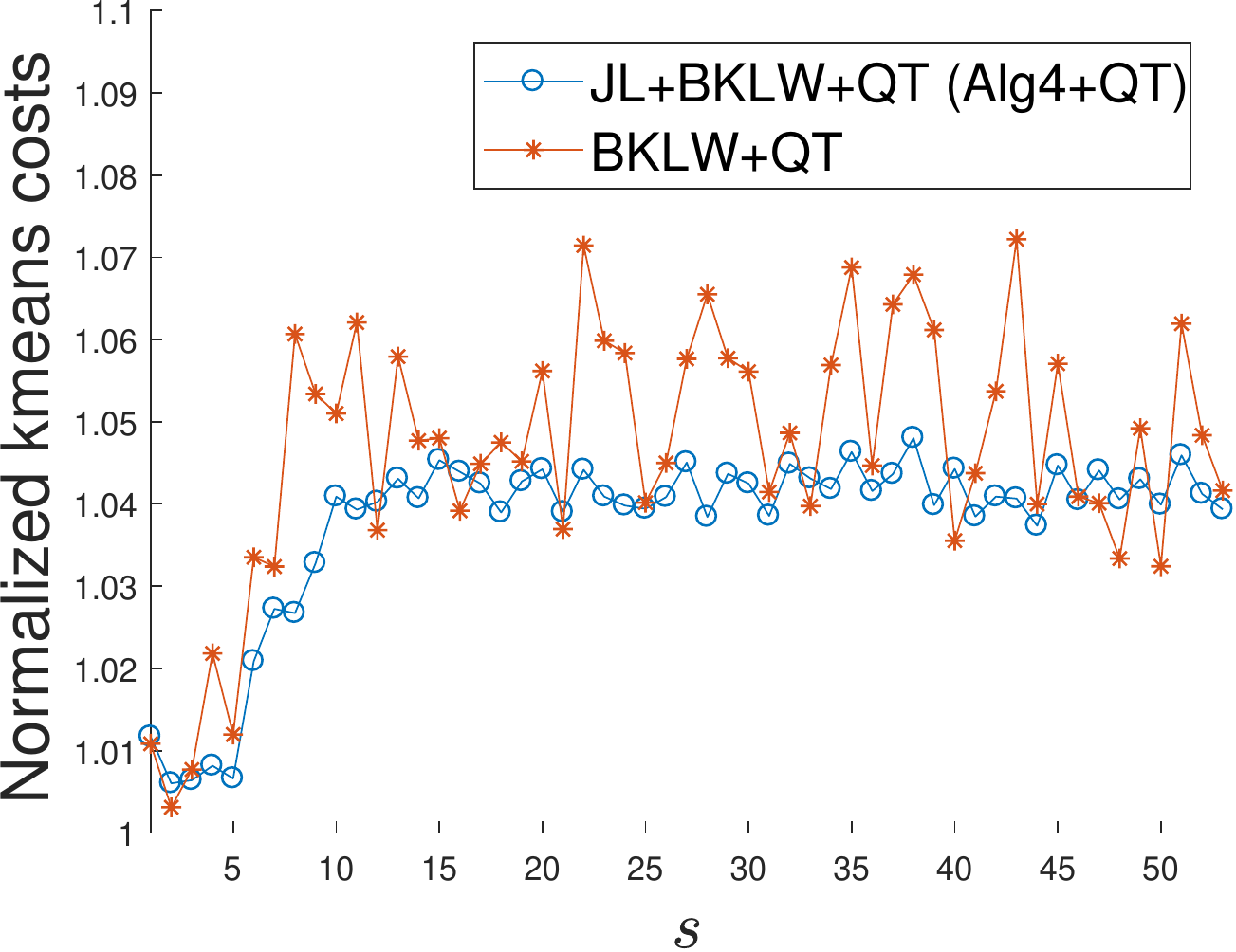}}
\vspace{+.6em}
\centerline{\scriptsize (a) Normalized $k$-means cost}
\end{minipage}
\centering
\begin{minipage}{.49\textwidth}
\centerline{
\includegraphics[width=\textwidth,,height=3.5cm]{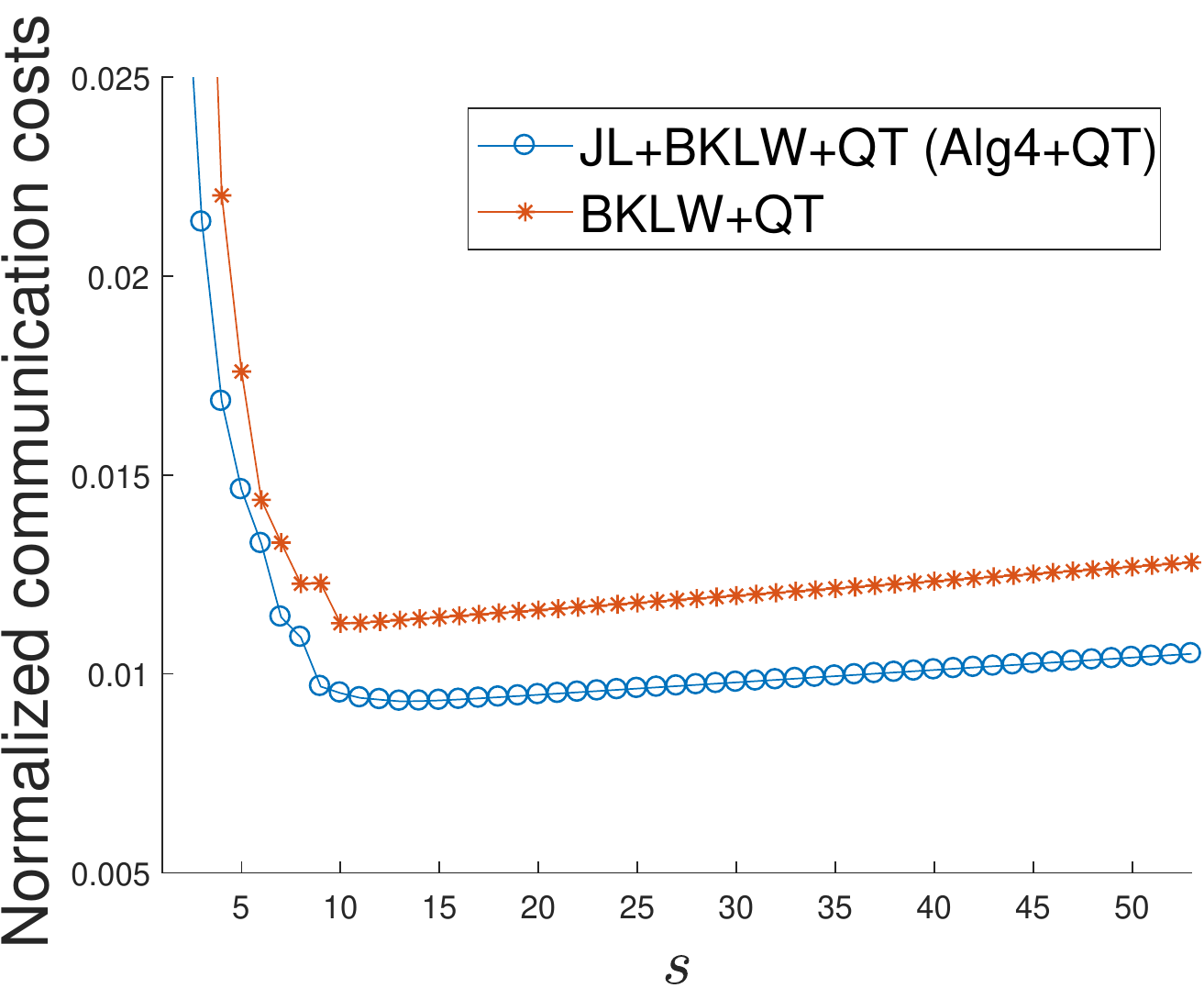}}
\vspace{+.4em}
\centerline{\scriptsize (b) Normalized communication cost}
\end{minipage}
\end{minipage}\hfill
\vspace{+.4em}
 \begin{minipage}{0.95\linewidth}
\begin{minipage}{0.49\textwidth}
\centerline{
\includegraphics[width=\textwidth,height=3.5cm]{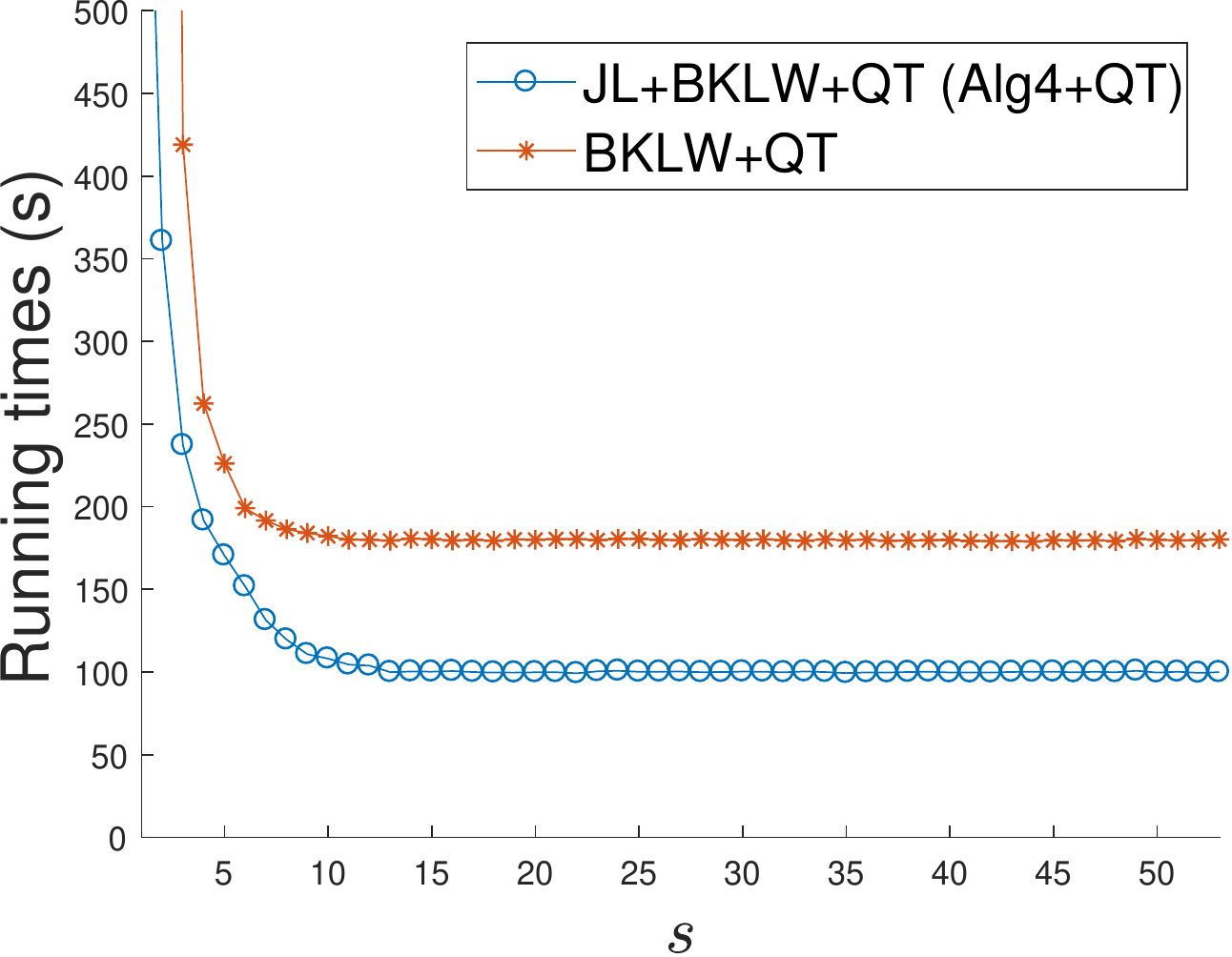}}
\vspace{+.6em}
\centerline{\scriptsize (c) Running time (s)}
\end{minipage}
\centering
\end{minipage}
\caption{Multiple-source case with quantization: NeurIPS}
\label{fig: extension - nips multiple}
\vspace{-.5em}
\end{figure}

\subsubsection{Results}\label{subsubsec:Joint DR, CR, QT: Results}

    The results for the single data source scenario are given in Figures~\ref{fig: extension - mnist single}--\ref{fig: extension - nips single}. We have the following key observations: 
    (i) Compared with the methods without quantization (\rev{the right-most points under $s=53$}), 
    adding suitably configured quantization can further reduce the communication cost by $2/3$ without increasing the $k$-means cost or the running time. \rev{This is because the cluster structure for $k$-means clustering has certain robustness to minor shifts of data points (caused by rounding off a few least significant bits).} 
    (ii) However, it is nontrivial to find the optimal configuration to achieve a comparable $k$-means cost and running time with the least communication cost, as very small and very large values of $s$ both lead to suboptimal performance. \rev{Intuitively, setting $s$ too large will fail to take advantage of the communication cost saving due to quantization, and setting it too small will cause too much quantization error and leave no room of error for DR/CR.} 
    (iii) When the dimensionality is not too high (e.g., MNIST), a three-step procedure such as JL+FSS+QT or FSS+JL+QT suffices; for a high-dimensional dataset such as NeurIPS, the four-step procedure  JL+FSS+JL+QT 
    can further reduce the communication cost and the running time while achieving a comparable $k$-means cost, \rev{which is consistent with the predicted advantage of JL+FSS+JL in the regime of $n,d\gg 1$ as shown in Table~\ref{tab:comparison}}. 
    
    The results for the multiple data source scenario are given in Figures~\ref{fig: extension - mnist multiple}--\ref{fig: extension - nips multiple}. 
    We have similar observations as in the single-source case: (i) \rev{Compared with no quantization (the right-most points under $s=53$)}, adding suitably configured quantization can further reduce the communication cost by $10\%$ without increasing the $k$-means cost or the running time. (ii) 
    Choosing a proper configuration (by selecting the optimal number of significant bits to retain in quantization) is nontrivial. 
    (iii) Compared with BKLW+QT, JL+BKLW+QT can reduce both the  communication cost and the running time while achieving a similar $k$-means cost, 
    which is consistent with the comparison between BKLW and JL+BKLW (Figure~\ref{fig:multiple} and Table~\ref{tab:multiple source - comm cost}) \rev{as well as the theoretical prediction in Table~\ref{tab:comparison}. This result again demonstrates the benefit of properly combining existing DR/CR methods with JL projection}. 

\subsection{Summary of Observations}

Our experimental results imply the following observations: \begin{itemize}
    \item Solving $k$-means based on data summaries generated by DR/CR methods can provide a reasonably good solution at a drastically reduced communication cost without incurring a high complexity at data sources. \looseness=-1
    \item Compared with state-of-the-art algorithms, suitable combination of DR and CR can effectively reduce the communication cost and the complexity while providing a $k$-means solution of a similar quality. 
    \item Augmenting DR and CR with suitably configured quantization can further reduce the communication cost without adversely affecting the other metrics. 
\end{itemize}

\section{Conclusion}\label{sec:Conclusion}
In this paper, we considered the problem of using data reduction methods to efficiently compute the $k$-means centers for a large high-dimensional dataset located at remote data source(s), with focus on DR and CR. Through a comprehensive analysis of the approximation error, the communication cost, and the complexity of various combinations of state-of-the-art DR/CR methods, we proved that it is possible to achieve a near-optimal approximation of $k$-means at a near-linear complexity at the data source(s) and a very low (constant or logarithmic) communication cost. In the process, we developed algorithms based on carefully designed combinations of existing DR/CR methods that outperformed two state-of-the-art algorithms in the scenarios of a single data source and multiple data sources, respectively. 
We also demonstrated how to combine DR/CR methods with quantizers to further reduce the communication cost without compromising the other performance metrics. 
Our findings were validated through experiments on real datasets. 

\ifCLASSOPTIONcompsoc
  \section*{Acknowledgments}
\else
  \section*{Acknowledgment}
\fi

This research was partly sponsored by the U.S. Army Research Laboratory and the U.K. Ministry of Defence under Agreement Number W911NF-16-3-0001. The views and conclusions contained in this document are those of the authors and should not be interpreted as representing the official policies, either expressed or implied, of the U.S. Army Research Laboratory, the U.S. Government, the U.K. Ministry of Defence or the U.K. Government. The U.S. and U.K. Governments are authorized to reproduce and distribute reprints for Government purposes notwithstanding any copyright notation hereon.

%



\bibliographystyle{IEEEtran}
\bibliography{TPDS_R1}

%

\begin{IEEEbiography}[{\includegraphics[width=1in,height=1.25in,clip,keepaspectratio]{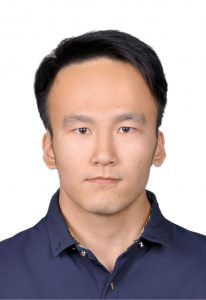}}]{Hanlin Lu}
(S'19) received the Ph.D. degree in Computer Science and Engineering from Pennsylvania State University in 2021. He is currently a Research Scientist at ByteDance, Mountain View, CA, USA. His research interests include coreset construction and distributed machine learning training.
\end{IEEEbiography}

\begin{IEEEbiography}[{\includegraphics[width=1in,height=1.25in,clip,keepaspectratio]{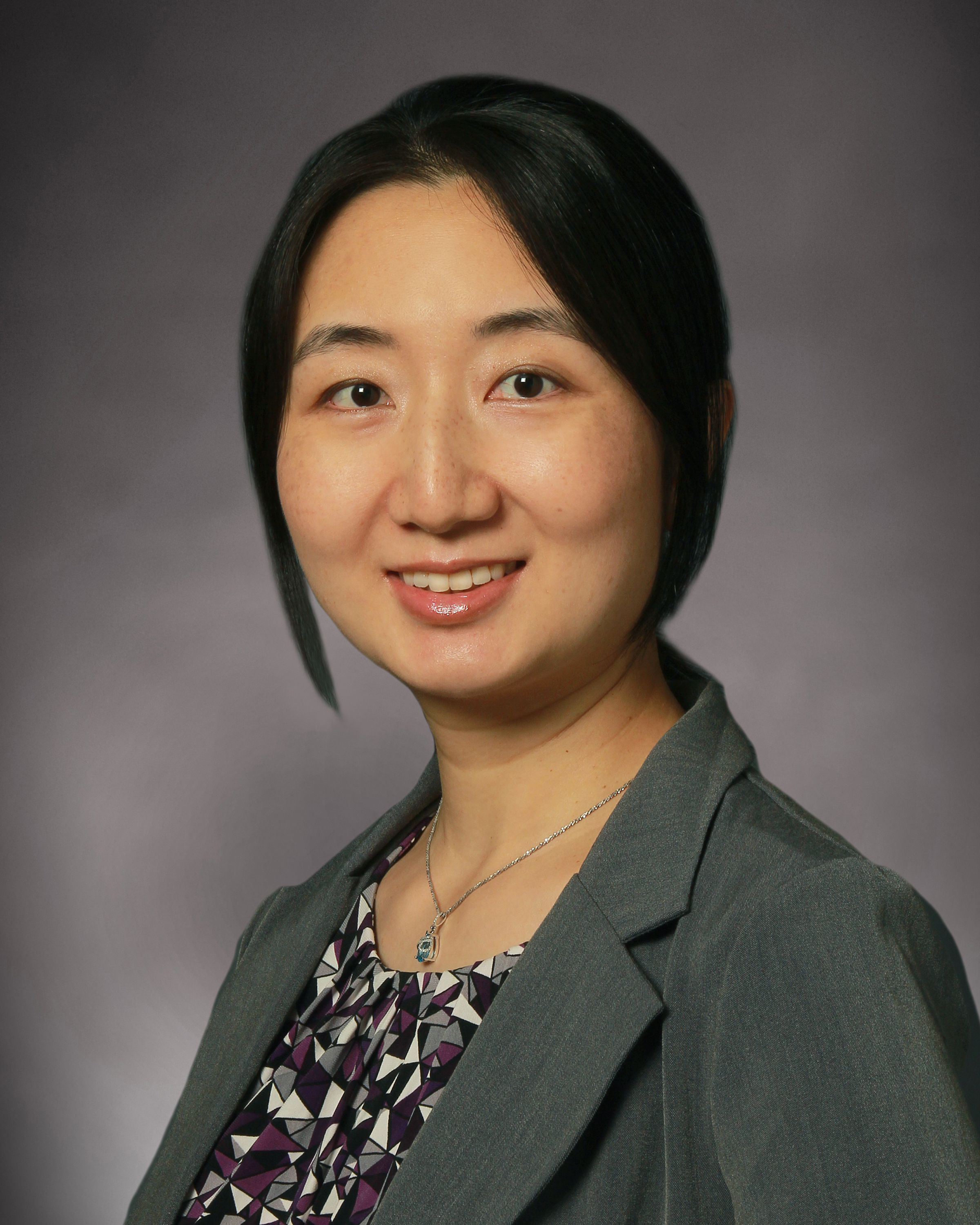}}]{Ting He} (SM’13) is an Associate Professor in the School of Electrical Engineering and Computer Science at Pennsylvania State University, University Park, PA. Her work is in the broad areas of computer networking, network modeling and optimization, and machine learning. Dr. He is a senior member of IEEE, an Associate Editor for IEEE Transactions on Communications (2017-2020) and IEEE/ACM Transactions on Networking (2017-2021), a TPC Co-Chair of IEEE ICCCN (2022), and an Area TPC Chair of IEEE INFOCOM (2021). She received multiple Outstanding Contributor Awards from IBM, multiple awards for Military Impact, Commercial Prosperity, and Collaboratively Complete Publications from ITA, and multiple paper awards from ICDCS, SIGMETRICS, ICASSP, and IEEE Communications Society. 
\end{IEEEbiography}

\begin{IEEEbiography}[{\includegraphics[width=1in,height=1.25in,clip,keepaspectratio]{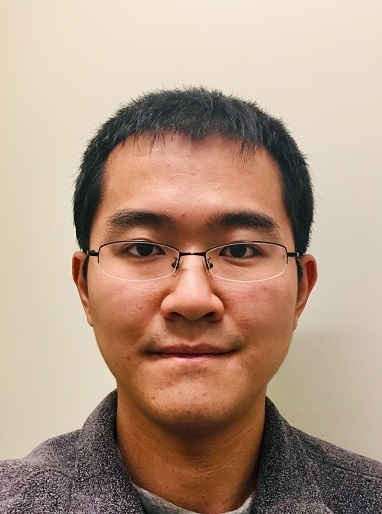}}]{Shiqiang Wang} (S’13–M’15) received his Ph.D. from the Department of Electrical and Electronic Engineering, Imperial College London, United Kingdom, in 2015. Before that, he received his master's and bachelor's degrees at Northeastern University, China, in 2011 and 2009, respectively. He has been a Research Staff Member at IBM T. J. Watson Research Center, NY, USA since 2016. His current research focuses on the intersection of distributed computing, machine learning, networking, and optimization, with a broad range of applications including data analytics, edge-based artificial intelligence (Edge AI), Internet of Things (IoT), and future wireless systems. Dr. Wang serves as an associate editor of the IEEE Transactions on Mobile Computing. He received the IEEE Communications Society (ComSoc) Leonard G. Abraham Prize in 2021, IEEE ComSoc Best Young Professional Award in Industry in 2021, IBM Outstanding Technical Achievement Awards (OTAA) in 2019 and 2021, multiple Invention Achievement Awards from IBM since 2016, Best Paper Finalist of the IEEE International Conference on Image Processing (ICIP) 2019, and Best Student Paper Award of the Network and Information Sciences International Technology Alliance (NIS-ITA) in 2015.
\end{IEEEbiography}

\begin{IEEEbiography}[{\includegraphics[width=1in,height=1.25in,clip,keepaspectratio]{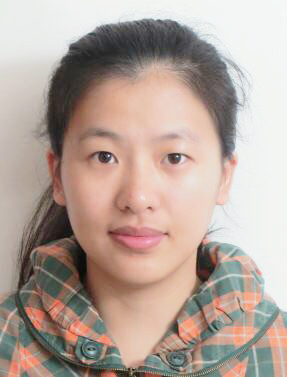}}]{Changchang Liu} received the Ph.D. degree in electrical engineering from Princeton University. She is currently a Research Staff Member with the Department of Distributed AI, IBM Thomas J. Watson Research Center, Yorktown Heights, NY, USA. Her current research interests include federated learning, big data privacy, and security.
\end{IEEEbiography}

\begin{IEEEbiography}[{\includegraphics[width=1in,height=1.25in,clip,keepaspectratio]{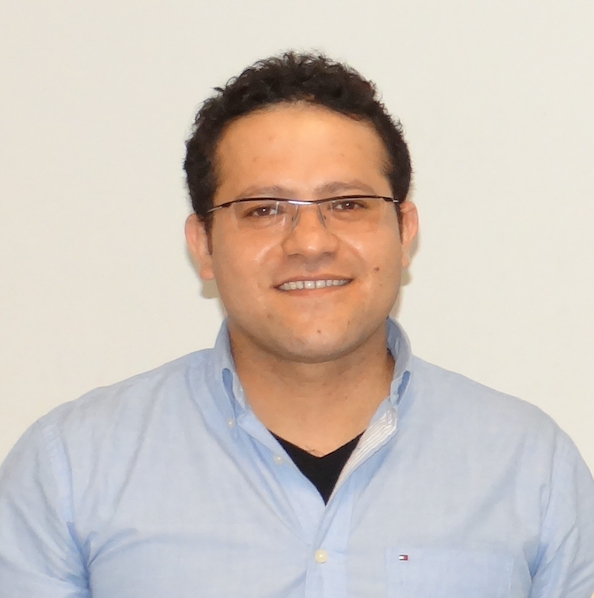}}]{Mehrdad Mahdavi} is an Assistant Professor of the Computer Science Department at the Penn State. He received the Ph.D. degree in Computer Science from Michigan State University in 2014.  Before joining PSU in 2018, he was a Research Assistant Professor at Toyota Technological Institute, at University of Chicago. His research interests lie at the interface of machine learning and optimization with a focus on developing theoretically principled and practically efficient algorithms for learning from massive datasets and complex domains. He has won the Mark Fulk Best Student Paper award at  Conference on Learning Theory (COLT) in 2012.
\end{IEEEbiography}

\begin{IEEEbiography}[{\includegraphics[width=1in,height=1.25in,clip,keepaspectratio]{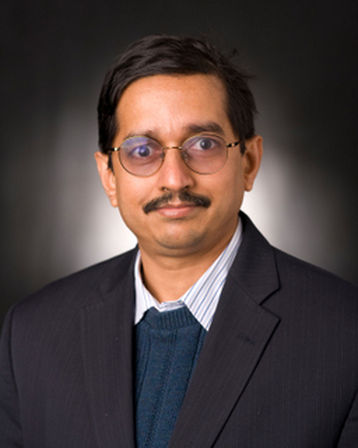}}]{Vijaykrishnan Narayanan} (F'11) is the Robert Noll Chair Professor of Computer Science and Engineering and Electrical Engineering at the Pennsylvania State University. His research interests are in Embedded System Design, Computer Architecture and Power-Aware Systems. He is a fellow of National Academy of Inventors, IEEE, and ACM.
\end{IEEEbiography}

\begin{IEEEbiography}[{\includegraphics[width=1in,height=1.25in,clip,keepaspectratio]{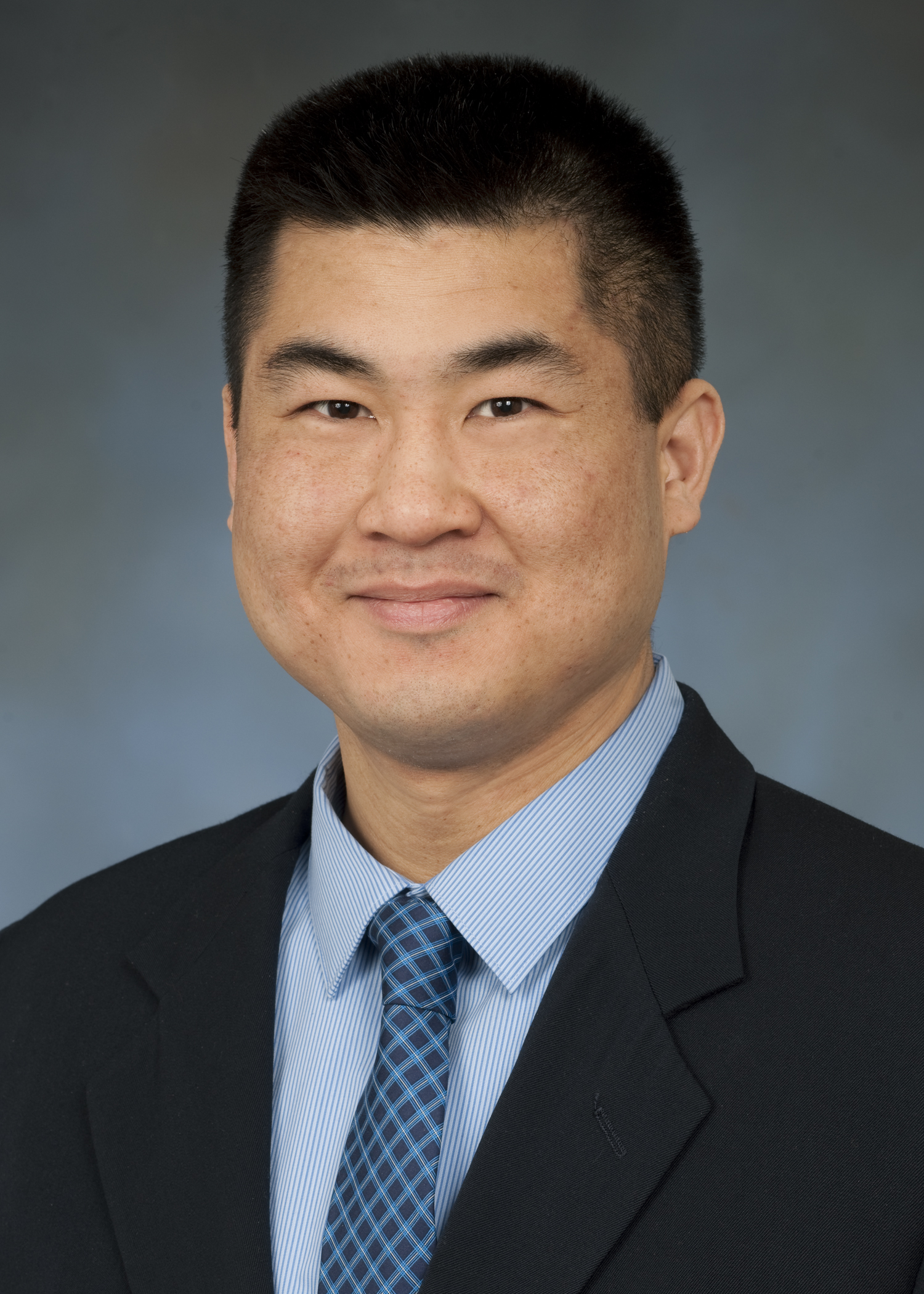}}]{Kevin S. Chan} (S’02–M’09–SM’18) received the B.S. degree in electrical and computer engineering and engineering and public policy from Carnegie Mellon University, Pittsburgh, PA, USA and the M.S. and Ph.D. degrees in electrical and computer engineering from the Georgia Institute of Technology, Atlanta, GA, USA. He is currently an Electronics Engineer with the Computational and Information Sciences Directorate, U.S. Army Combat Capabilities Development Command, Army Research Laboratory, Adelphi, MD, USA. He is actively involved in research on network science, distributed analytics, and cybersecurity. He received the 2021 IEEE Communications Society Leonard G. Abraham Prize and multiple best paper awards. He is the Co-Editor of the IEEE Communications Magazine—Military Communications and Networks Series.
\end{IEEEbiography}

\begin{IEEEbiography}[{\includegraphics[width=1in,height=1.25in,clip,keepaspectratio]{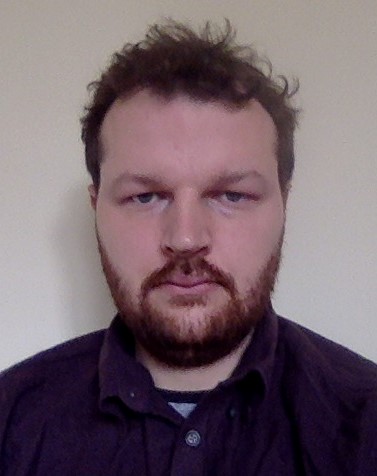}}]{Stephen Pasteris}
gained a BA+MA in Mathematics from Kings College of the University of Cambridge. After completing  his BA he then went on to gain a PhD in Computer Science from University College London: his thesis focusing on the development of efficient algorithms for machine learning on networked data. Stephen is now a Research Associate at University College London where he primarily researches online machine learning.
\end{IEEEbiography}








%

\clearpage
\appendices
\section{Proofs} \label{appendix: proofs}

\subsection{Proof of Theorem~\ref{coro:Feldmand13SODA}:} 
\begin{proof}
For 1), let $X^*$ denote the optimal $k$-means centers of $P$. Since $\Sbf$ is an $\epsilon$-coreset with probability $\geq 1-\delta$ (Theorem~\ref{thm:epsilon-coreset, Delta > 0}), by Definition~\ref{def:epsilon-coreset}, the following holds with probability $\geq 1-\delta$:
\begin{align}
\cost(P,X) \leq {1\over 1-\epsilon}\cost(\Sbf, X) \leq {1\over 1-\epsilon} \cost(\Sbf, X^*) \nonumber \\
\leq {1+\epsilon\over 1-\epsilon}\cost(P, X^*).
\end{align}

For 2), the cost of transferring $(S, \Delta, w)$ is dominated by the cost of transferring $S$. Since $S$ lies in a $d'$-dimensional subspace spanned by the columns of $V^{(d')}$, it suffices to transmit the coordinates of each point in $S$ in this subspace together with $V^{(d')}$. The former incurs a cost of $O(|S|\cdot d')$, and the latter incurs a cost of $O(d d')$. Plugging $d'=O(k/\epsilon^2)$ and $|S|=\tilde{O}(k^3/\epsilon^4)$ from  Theorem~\ref{thm:epsilon-coreset, Delta > 0} yields the overall communication cost as $O(dk/\epsilon^2)$.
\end{proof}

\subsection{Proof of Lemma~\ref{lem:JL projection}}
\begin{proof}
Let $\delta' = \delta/(2nk)$. By the JL Lemma (Lemma~\ref{lem:JL}), there exists $d' = O(\epsilon^{-2}\log(1/\delta')) = O(\epsilon^{-2}\log(nk/\delta))$, such that every $x\in\mathbb{R}^d$ satisfies $\|\pi(x)\| \approx_{1+\epsilon} \|x\|$ with probability $\geq 1-\delta'$. By the union bound, this implies that with probability $\geq 1-\delta$, every $p-x_i$ for $p\in P$ and $x_i\in X\cup X^*$ satisfies $\|\pi(p) - \pi(x_i)\| \approx_{1+\epsilon} \|p-x_i\|$. Therefore, with probability $\geq 1-\delta$, 
\begin{align}
\cost(\pi(P),\pi(X)) & = \sum_{p\in P} \min_{x_i\in X}\|\pi(p) - \pi(x_i)\|^2 \label{eq:JL projection proof 1} \\
&\leq \sum_{p\in P} \min_{x_i\in X} (1+\epsilon)^2 \|p - x_i\|^2 \nonumber \\
& = (1+\epsilon)^2 \cost(P, X), \label{eq:JL projection proof 2}
\end{align}
\begin{align}
\mbox{(\ref{eq:JL projection proof 1})} &\geq \sum_{p\in P}\min_{x_i\in X} {1\over (1+\epsilon)^2} \|p - x_i\|^2 \nonumber \\   
    & = {1\over (1+\epsilon)^2} \cost(P,X). \label{eq:JL projection proof 3}
\end{align}
Combining (\ref{eq:JL projection proof 2}) and (\ref{eq:JL projection proof 3}) proves (\ref{eq:JL projection cost(P,X)}). 
Similar argument will prove (\ref{eq:JL projection cost(P,X*)}). 
\end{proof}

\subsection{Proof of Theorem~\ref{thm:JL-based DR+CR}}
\begin{proof}
For 1), let $X^*$ be the optimal $k$-means centers of $P$ and $\Sbf':= (S',\Delta,w)$ be generated in line~\ref{JL DR+CR: S'} of Algorithm~\ref{Alg:JL DR+CR}. With probability at least $(1-\delta)^2$, $\pi_1$ satisfies (\ref{eq:JL projection cost(P,X)}, \ref{eq:JL projection cost(P,X*)}) and $\pi_2$ generates an $\epsilon$-coreset. Thus, with probability at least $(1-\delta)^2$,
\begin{align}
\cost(P,X) &\leq (1+\epsilon)^2 \cost(\pi_1(P), X') \label{eq:JL DR+CR proof 1} \\
&\leq {(1+\epsilon)^2\over 1-\epsilon} \cost(\Sbf', X') \label{eq:JL DR+CR proof 2} \\
&\leq {(1+\epsilon)^2\over 1-\epsilon} \cost(\Sbf', \pi_1(X^*)) \label{eq:JL DR+CR proof 3} \\
&\leq {(1+\epsilon)^3\over 1-\epsilon} \cost(\pi_1(P), \pi_1(X^*)) \label{eq:JL DR+CR proof 4} \\
& \leq {(1+\epsilon)^5\over 1-\epsilon} \cost(P, X^*), \label{eq:JL DR+CR proof 5}
\end{align}
where (\ref{eq:JL DR+CR proof 1}) is by (\ref{eq:JL projection cost(P,X)}) and that $\pi_1(X) = X'$, (\ref{eq:JL DR+CR proof 2}) is because $\Sbf'$ is an $\epsilon$-coreset of $\pi_1(P)$, (\ref{eq:JL DR+CR proof 3}) is because $X'$ minimizes $\cost(\Sbf',\cdot)$, (\ref{eq:JL DR+CR proof 4}) is again because $\Sbf'$ is an $\epsilon$-coreset of $\pi_1(P)$, and (\ref{eq:JL DR+CR proof 5}) is by (\ref{eq:JL projection cost(P,X*)}).

For 2), the communication cost is dominated by  transmitting $S'$. By Lemma~\ref{lem:JL projection}, the dimension of $P'$ is $d'=O(\epsilon^{-2}\log(nk/\delta)) = O(\epsilon^{-2}\log{n})$. By Theorem~\ref{thm:epsilon-coreset, Delta > 0}, the cardinality of $S'$ is $|S'| = O(k^3 \epsilon^{-4} \log^2(k)  \log(1/\delta))$. Moreover, points in $S'$ lie in a $\tilde{d}$-dimensional subspace for $\tilde{d} = O(k/\epsilon^2)$. Thus, it suffices to transmit the coordinates of points in $S'$ in the $\tilde{d}$-dimensional subspace and a basis of the subspace. 
Thus, the total communication cost is
\begin{align}
O((|S'|+d')\tilde{d}) &= O\left({k^4\over \epsilon^6}\log^2(k) \log({1\over \delta}) + {k\over \epsilon^4}\log{n}\right) \nonumber\\
&= O\left({k\log{n}\over \epsilon^4}\right).
\end{align}

For 3), note that for a given projection matrix $\Pi\in \mathbb{R}^{d\times d'}$ such that $\pi_1(P):= A_P \Pi$, line~\ref{JL DR+CR: P'} takes $O(ndd')=O(nd\epsilon^{-2} \log{n} )$ time, where we have plugged in $d'= O(\epsilon^{-2}\log{n})$. By Theorem~\ref{thm:epsilon-coreset, Delta > 0}, line~\ref{JL DR+CR: S'} takes time
\begin{align}
&O\left(\min(nd'^2, n^2 d') + {nk\over \epsilon^2}\big(d'+k\log({1\over \delta})\big)\right) \nonumber\\
= &O\left({n\over \epsilon^2}\Big({\log^2{n}\over \epsilon^2} + {k\log{n}\over \epsilon^2} + k^2\log({1\over \delta}) \Big)  \right).
\end{align}

Thus, the total complexity at the data source is:
\begin{align}
&O\left({n\over \epsilon^2}\Big({\log^2{n}\over \epsilon^2}+{k\log{n}\over \epsilon^2} + d\log{n} + k^2 \log({1\over \delta}) \Big) \right) \nonumber\\
= &O\left({nd\over \epsilon^2}\log^2{n} \right) = \tilde{O}\left({nd\over \epsilon^2}\right).
\end{align}

\end{proof}

\subsection{Proof of Lemma~\ref{lem:JL projection for S}}
\begin{proof}
The proof is analogous to that of Lemma~\ref{lem:JL projection}. Let $\delta' = \delta/(2n'k)$. Then there exists $d' = O(\epsilon^{-2}\log(1/\delta')) = O(\epsilon^{-2}\log(n'k/\delta))$, such that every $x\in \mathbb{R}^d$ satisfies $\|\pi(x)\|\approx_{1+\epsilon} \|x\|$ with probability $\geq 1-\delta'$. By the union bound, this implies that with probability $\geq 1-\delta$, every $p\in S$ and $x_i\in X\cup X^*$ satisfy $\|\pi(p)-\pi(x_i)\| \approx_{1+\epsilon}\|p-x_i\|$. Therefore,
\begin{align}
&\cost((\pi(S),\Delta,w), \pi(X)) \nonumber\\
&= \sum_{p\in S}w(p)\cdot \min_{x_i\in X} \|\pi(p) - \pi(x_i)\|^2 + \Delta \label{eq:JL projection for S proof}\\
&\leq (1+\epsilon)^2 \left(\sum_{p\in S}w(p)\cdot \min_{x_i\in X} \|p - x_i\|^2 + \Delta \right) \nonumber\\
&= (1+\epsilon)^2 \cost(\Sbf, X),
\end{align}
\begin{align}
\mbox{(\ref{eq:JL projection for S proof})} & \geq {1\over (1+\epsilon)^2} \left(\sum_{p\in S}w(p)\cdot \min_{x_i\in X} \|p - x_i\|^2 + \Delta \right) \nonumber\\
& =  {1\over (1+\epsilon)^2} \cost(\Sbf, X),
\end{align}
which prove (\ref{eq:JL for S, cost(S,X)}). Similar argument proves (\ref{eq:JL for S, cost(S,X*)}).
\end{proof}

\subsection{Proof of Theorem~\ref{thm:FSS CR+JL DR}}
\begin{proof}
For 1), let $X^*$ be the optimal $k$-means centers of $P$ and $\Sbf:= (S,\Delta,w)$ generated in line~\ref{FSS CR+DR:S} of Algorithm~\ref{Alg:FSS CR+JL DR}. With probability at least $(1-\delta)^2$, $\Sbf$ is an $\epsilon$-coreset of $P$, and $\pi_1$ satisfies (\ref{eq:JL for S, cost(S,X)}, \ref{eq:JL for S, cost(S,X*)}). Thus, with this probability, 
\begin{align}
\cost(P,X) & \leq {1\over 1-\epsilon}\cost(\Sbf, X) \label{eq:FSS CR+DR proof 1}\\
&\leq {(1+\epsilon)^2\over 1-\epsilon} \cost((S',\Delta,w), X') \label{eq:FSS CR+DR proof 2}\\
&\leq {(1+\epsilon)^2\over 1-\epsilon} \cost((S',\Delta,w), \pi_1(X^*)) \label{eq:FSS CR+DR proof 3}\\
&\leq {(1+\epsilon)^4\over 1-\epsilon} \cost(\Sbf, X^*) \label{eq:FSS CR+DR proof 4}\\
&\leq {(1+\epsilon)^5\over 1-\epsilon} \cost(P, X^*), \label{eq:FSS CR+DR proof 5}
\end{align}
where (\ref{eq:FSS CR+DR proof 1}) is because $\Sbf$ is an $\epsilon$-coreset of $P$, (\ref{eq:FSS CR+DR proof 2}) is due to (\ref{eq:JL for S, cost(S,X)}) (note that $\pi_1(S) = S'$ and $\pi_1(X) = X'$), (\ref{eq:FSS CR+DR proof 3}) is because $X'$ minimizes $\cost((S',\Delta,w),\cdot)$, (\ref{eq:FSS CR+DR proof 4}) is due to (\ref{eq:JL for S, cost(S,X*)}), and (\ref{eq:FSS CR+DR proof 5}) is again because $\Sbf$ is an $\epsilon$-coreset of $P$.

For 2), note that by Theorem~\ref{thm:epsilon-coreset, Delta > 0}, the cardinality of $S$ needs to be $n'=O(k^3 \epsilon^{-4} \log^2({k}) \log(1/\delta))$. By Lemma~\ref{lem:JL projection for S}, the dimension of $S'$ needs to be $d'=O(\epsilon^{-2}\log(n'k/\delta))$. Thus, the cost of transmitting $(S',\Delta,w)$, dominated by the cost of transmitting $S'$, is
\begin{align}
O(n' d') &= O\left({k^3\log^2{k}\over \epsilon^6} \log({1\over \delta}) \Big(\log{k} + \log({1\over \epsilon}) + \log({1\over \delta}) \Big) \right) \nonumber \\
&=\tilde{O}\left({k^3\over \epsilon^6}\right).
\end{align}


For 3), we know from Theorem~\ref{thm:epsilon-coreset, Delta > 0} that line~\ref{FSS CR+DR:S} of Algorithm~\ref{Alg:FSS CR+JL DR} takes time $O(\min(nd^2,n^2d)+nk \epsilon^{-2}(d+k\log(1/\delta)))$. Given a projection matrix $\Pi\in \mathbb{R}^{d\times d'}$ such that $\pi_1(S) := A_S \Pi$, line~\ref{FSS CR+DR:S'} takes time $O(n' d d')$. Thus, the total complexity at the data source is

\begin{align}
&O\hspace{-.15em}\left(\hspace{-.15em}\min\hspace{-.05em}(\hspace{-.15em}nd^2\hspace{-.25em}, \hspace{-.0em}n^2d\hspace{-.05em}) \hspace{-.25em}+\hspace{-.25em} {k\over \epsilon^2}nd \hspace{-.25em}+\hspace{-.25em} {k^2\log{k}\over \epsilon^2}n \hspace{-.25em}+\hspace{-.25em} {k^3 \hspace{-.25em}\log^3\hspace{-.25em}{k}(\log{k}\hspace{-.25em}+\hspace{-.25em} \log({1\over \epsilon}))\over \epsilon^6}d \hspace{-.25em}\right) \nonumber \\
&=O\left(nd \cdot \min(n,d)\right).
\end{align}

\end{proof}

\subsection{Proof of Theorem~\ref{thm:JL+FSS+JL}}
\begin{proof}
Let $n':=|S|$, $d'$ be the dimension after $\pi^{(1)}_1$, and $d''$ be the dimension after $\pi^{(2)}_1$. Let $X^*$ be the optimal $k$-means centers for $P$.

For 1), note that with probability $\geq (1-\delta)^3$, $\pi^{(1)}_1$ and $\pi^{(2)}_1$ will preserve the $k$-means cost up to a multiplicative factor of $(1+\epsilon)^2$, and $\pi_2$ will generate an $\epsilon$-coreset of $P'$. Thus, with this probability, we have
\begin{align}
\cost(P,X) &\leq (1+\epsilon)^2 \cost(P', \pi^{(1)}_1(X)) \label{eq:JL+FSS+JL proof 1} \\
&\leq {(1+\epsilon)^2\over 1-\epsilon} \cost((S, \Delta,w), \pi^{(1)}_1(X)) \label{eq:JL+FSS+JL proof 2} \\
&\hspace{-1em}\leq {(1+\epsilon)^4\over 1-\epsilon} \cost((S',\Delta,w), \pi^{(2)}_1\circ \pi^{(1)}_1(X)) \label{eq:JL+FSS+JL proof 3} \\
&\hspace{-1em}\leq {(1+\epsilon)^4\over 1-\epsilon} \cost((S',\Delta,w), \pi^{(2)}_1\circ \pi^{(1)}_1(X^*)) \label{eq:JL+FSS+JL proof 4} \\
&\leq {(1+\epsilon)^6\over 1-\epsilon} \cost((S,\Delta,w), \pi^{(1)}_1(X^*)) \label{eq:JL+FSS+JL proof 5} \\
&\leq {(1+\epsilon)^7\over 1-\epsilon} \cost(P', \pi^{(1)}_1(X^*)) \label{eq:JL+FSS+JL proof 6} \\
&\leq {(1+\epsilon)^9\over 1-\epsilon} \cost(P, X^*), \label{eq:JL+FSS+JL proof 7} 
\end{align}
where (\ref{eq:JL+FSS+JL proof 1}) is by Lemma~\ref{lem:JL projection}, (\ref{eq:JL+FSS+JL proof 2}) is because $(S,\Delta,w)$ is an $\epsilon$-coreset of $P'$, (\ref{eq:JL+FSS+JL proof 3}) is by Lemma~\ref{lem:JL projection for S}, (\ref{eq:JL+FSS+JL proof 4}) is because $\pi^{(2)}_1\circ \pi^{(1)}_1(X) = X'$, which is optimal in minimizing $\cost((S',\Delta,w), \cdot)$, (\ref{eq:JL+FSS+JL proof 5}) is by Lemma~\ref{lem:JL projection for S}, (\ref{eq:JL+FSS+JL proof 6}) is because $(S,\Delta,w)$ is an $\epsilon$-coreset of $P'$, and (\ref{eq:JL+FSS+JL proof 7}) is by Lemma~\ref{lem:JL projection}. 

For 2), note that by Theorem~\ref{thm:epsilon-coreset, Delta > 0}, the cardinality of the coreset constructed by FSS is $n'=O(k^3\log^2{k} \epsilon^{-4}\log(1/\delta))$, which is independent of the dimension of the input dataset. Thus, the communication cost remains the same as that of Algorithm~\ref{Alg:FSS CR+JL DR}, which is $\tilde{O}(k^3/\epsilon^6)$. 

For 3), note that the first JL projection $\pi^{(1)}_1$ takes $O(nd d')$ time, where $d' = O(\log{n}/\epsilon^2)$ by Lemma~\ref{lem:JL projection}, and the second JL projection $\pi^{(2)}_1$ takes $O(n' d' d'')$ time, where $n'$ is specified by Theorem~\ref{thm:epsilon-coreset, Delta > 0} as above and $d''=O(\epsilon^{-2}\log(n'k/\delta))$ by Lemma~\ref{lem:JL projection for S}. Moreover, from the proof of Theorem~\ref{thm:JL-based DR+CR}, we know that applying FSS after a JL projection takes $O({n\over \epsilon^2}(\log^2{n}/\epsilon^2 + k\log{n}/\epsilon^2 + k^2\log{1\over \delta}))$ time. Thus, the total complexity at the data source is
\begin{align}
&O\left(nd d' + {n\over \epsilon^2}\Big({\log^2{n}\over \epsilon^2} + {k\log{n}\over \epsilon^2} + k^2\log{1\over \delta}\Big) + n' d' d''\right) \nonumber \\
&\hspace{-1.25em}= O\left({nd\log{n}\over \epsilon^2} + {n\log^2{n}\over \epsilon^4}\right) = \tilde{O}\left({nd\over \epsilon^2}\right). 
\end{align}
\end{proof}

\subsection{Proof of Theorem~\ref{thm:BKLW}}
\begin{proof}
For 1), let $P:=\bigcup_{i=1}^m P_i$, $\tilde{P}:= \bigcup_{i=1}^m \tilde{P}_i$, and $\Sbf:=(S, 0, w)$ be the output of disSS. Let $X^*$ be the optimal $k$-means centers of $P$. By Theorem~\ref{thm:disSS}, we know that with probability $\geq 1-\delta$, 
\begin{align}
    \cost(\tilde{P},X) \leq {1\over 1-\epsilon}\cost(\Sbf, X) &\leq {1\over 1-\epsilon}\cost(\Sbf, X^*) \nonumber\\
    &\leq {1+\epsilon\over 1-\epsilon}\cost(\tilde{P}, X^*),\label{eq:BKLW proof 1}
\end{align}
where the second inequality is because $X$ is optimal for $\Sbf$. 
Moreover, by Theorem~\ref{thm:disPCA}, we have 
\begin{align}
    &(1-\epsilon)\cost(P,X)-\Delta \leq \cost(\tilde{P},X), \label{eq:BKLW proof 2}\\
    &\cost(\tilde{P},X^*) \leq (1+\epsilon)\cost(P, X^*) - \Delta. \label{eq:BKLW proof 3}
\end{align}
Combining (\ref{eq:BKLW proof 1}, \ref{eq:BKLW proof 2}, \ref{eq:BKLW proof 3}) yields
\begin{align}
    (1-\epsilon)\cost(P,X)-\Delta &\leq {1+\epsilon\over 1-\epsilon} \cdot \left( (1+\epsilon)\cost(P, X^*) - \Delta \right) \nonumber \\
    &\leq {(1+\epsilon)^2\over 1-\epsilon} \cost(P, X^*) - \Delta,
\end{align}
which gives the desired approximation factor. 

For 2), note that disPCA incurs a cost of $O(m\cdot (k/\epsilon^2) \cdot d)$ for transmitting $O(k/\epsilon^2)$ vectors in $\mathbb{R}^d$ from each of the $m$ data sources, and disSS incurs a cost of $O\left({k\over \epsilon^2} \cdot (\epsilon^{-4}({k^2\over \epsilon^2} + \log{1\over \delta}) + mk\log{mk\over \delta}) \right)$ for transmitting $O(\epsilon^{-4}({k^2\over \epsilon^2} + \log{1\over \delta}) + mk\log{mk\over \delta})$ vectors in $\mathbb{R}^{O(k/\epsilon^2)}$. For $d \gg m, k, 1/\epsilon,$ and $1/\delta$, the total communication cost is dominated by the cost of disPCA.

For 3), as computing the local SVD at data source $i$ takes $O(n_i d \cdot \min(n_i,d))$ time, the complexity of disPCA at the data sources is $O(nd \cdot \min(n, d))$. The complexity of disSS at data source $i$ is dominated by the computation of bicriteria approximation of $\tilde{P}_i$, which takes $O(n_i t_2 k \log{1\over \delta}) = O(n k^2 \epsilon^{-2}\log{1\over \delta})$ according to \cite{ADK09}. For $\min(n,d)\gg m, k, 1/\epsilon,$ and $1/\delta$, the overall complexity is dominated by that of disPCA. 
\end{proof}

\subsection{Proof of Lemma~\ref{lem:BKLW-based CR method}}
\begin{proof}
Let $\tilde{P}$ be the projection of $P$ using the principal components computed by disPCA. Then by Theorem~\ref{thm:disPCA}, there exists $\Delta\geq 0$ such that 
\begin{align}
\hspace{-.5em}    (1\hspace{-.25em}-\hspace{-.25em}\epsilon)\cost(P, X) \hspace{-.05em} \leq\hspace{-.05em}  \cost(\tilde{P}, X)\hspace{-.25em} +\hspace{-.25em} \Delta \hspace{-.05em}\leq\hspace{-.05em} (1\hspace{-.25em}+\hspace{-.25em}\epsilon) \cost(P, X). \label{eq:proof disPCA}
\end{align}
Moreover, by Theorem~\ref{thm:disSS}, $\Sbf$ is an $\epsilon$-coreset of $\tilde{P}$ with probability at least $1-\delta$. 
Multiplying (\ref{eq:proof disPCA}) by $1-\epsilon$, we have \looseness=-1
\begin{align}
    (1-\epsilon)^2 \cost(P,X) &\leq (1-\epsilon)\cost(\tilde{P}, X) + (1-\epsilon)\Delta \label{eq:proof BKLW 3} \\
    &\leq \cost(\Sbf, X)+\Delta, \label{eq:proof BKLW 1}
\end{align}
where we can obtain (\ref{eq:proof BKLW 1}) from (\ref{eq:proof BKLW 3}) because $\Sbf$ is an $\epsilon$-coreset of $\tilde{P}$. Similarly, multiplying (\ref{eq:proof disPCA}) by $1+\epsilon$, we have
\begin{align}
    (1+\epsilon)^2 \cost(P,X) &\geq (1+\epsilon)\cost(\tilde{P}, X) + (1+\epsilon)\Delta \nonumber \\
    &\geq \cost(\Sbf, X) + \Delta. \label{eq:proof BKLW 2}
\end{align}
Combining (\ref{eq:proof BKLW 1}, \ref{eq:proof BKLW 2}) yields the desired bound.  
\end{proof}

\subsection{Proof of Theorem~\ref{thm:distributed kmeans}}
\begin{proof}
For 1), let $\Sbf':=(\bigcup_{i=1}^m S'_i, \Delta, w)$, where $(\bigcup_{i=1}^m S'_i, 0, w)$ is the overall coreset constructed by line~\ref{distributed: pi_2} of Algorithm~\ref{Alg:distributed kmeans}, and $\Delta$ is a constant satisfying Lemma~\ref{lem:BKLW-based CR method} for the input dataset $\{P'_i\}_{i=1}^m$ as in line~\ref{distributed: pi_2} of Algorithm~\ref{Alg:distributed kmeans}. Let $P:=\bigcup_{i=1}^m P_i$, and $X^*$ be the optimal $k$-means centers for $P$. Then with probability $\geq (1-\delta)^2$, we have

\begin{align}
\cost(P,X) &\leq (1+\epsilon)^2 \cost(\pi_1(P), X') \label{eq:distributed, proof 1}\\
&\leq {(1+\epsilon)^2\over (1-\epsilon)^2} \cost(\Sbf', X') \label{eq:distributed, proof 2}\\
&\leq {(1+\epsilon)^2\over (1-\epsilon)^2} \cost(\Sbf', \pi_1(X^*)) \label{eq:distributed, proof 3}\\
&\leq {(1+\epsilon)^4\over (1-\epsilon)^2} \cost(\pi_1(P), \pi_1(X^*)) \label{eq:distributed, proof 4}\\
&\leq {(1+\epsilon)^6\over (1-\epsilon)^2} \cost(P, X^*), \label{eq:distributed, proof 5}
\end{align}
where (\ref{eq:distributed, proof 1}) is by Lemma~\ref{lem:JL projection} (note that $\pi_1(X) = X'$), (\ref{eq:distributed, proof 2}) is by Lemma~\ref{lem:BKLW-based CR method} (note that $\cost(\Sbf', X') = \cost((\bigcup_{i=1}^m S'_i, 0, w), X') + \Delta$), (\ref{eq:distributed, proof 3}) is because $X'$ is optimal in minimizing $\cost(\Sbf', \cdot)$, (\ref{eq:distributed, proof 4}) is again by Lemma~\ref{lem:BKLW-based CR method}, and (\ref{eq:distributed, proof 5}) is again by Lemma~\ref{lem:JL projection}. 

For 2), only line~\ref{distributed: pi_2} incurs communication cost. By Theorem~\ref{thm:BKLW}, we know that applying BKLW to a distributed dataset $\{P'_i\}_{i=1}^m$ with dimension $d'$ incurs a cost of $O(mkd'/ \epsilon^{2})$, and by Lemma~\ref{lem:JL projection}, we know that $d' = O(\log{n}/\epsilon^2)$, which yields the desired result. 

For 3), the JL projection at each data source incurs a complexity of $O(ndd') = O(nd \log{n}/\epsilon^2)$. By Theorem~\ref{thm:BKLW}, applying BKLW incurs a complexity of $O(nd' \cdot \min(n, d')) = O(n\log^2{n}/\epsilon^4)$ at each data source. Together, the complexity is $O({nd\over \epsilon^2}\log{n} + {n\over \epsilon^4}\log^2{n}) = \tilde{O}(nd/\epsilon^4)$. 
\end{proof}

\subsection{Proof of Theorem~\ref{thm: DR, CR, QT}}
\begin{proof}
We only present the proof for Algorithm~\ref{Alg:JL+FSS+JL} with the incorporation of quantization, as the proofs for the other algorithms are similar.  Consider a coreset $(S,\Delta,w)$ and a set of $k$-means centers $X$. If we quantize $S$ into $S_{QT}$ with a maximum quantization error of $\Delta_{QT}$, then for each coreset point $q \in S$ and its quantized version $q'\in S_{QT}$, we have $\|q-q'\| \leq \Delta_{QT}$. On the other hand, from \cite{lu2020robust}, the $k$-means cost function is $2\Delta_D$-Lipschitz-continuous, which yields $|\cost(q, X)-\cost(q', X)| \leq 2\Delta_D\Delta_{QT}$. Thus, the difference in the $k$-means cost between the original and the quantized coresets is bounded by
\begin{align}\label{eq:quanitzation - coreset}
\left|\cost((S,\Delta,w),X)-\cost((S_{QT},\Delta,w),X)\right| \nonumber \\
\leq 2\Delta_D \Delta_{QT} \sum_{q\in S}w(q),    
\end{align}
as $\cost((S,\Delta,w),X) = \sum_{q\in S}w(q) \cost(q,X) + \Delta$.

Following the arguments in the proof of Theorem~\ref{thm:JL+FSS+JL}, we see that  with probability $\geq (1-\delta)^3$:
\begin{align}
&\cost(P,X) \nonumber \\
&\leq (1+\epsilon_1^{(1)})^2 \cost(P', \pi^{(1)}_1(X))  \\
&\leq {(1+\epsilon_1^{(1)})^2\over 1-\epsilon_2} \cost((S, \Delta,w), \pi^{(1)}_1(X)) \\
&\leq {(1+\epsilon_1^{(1)})^2(1+\epsilon_1^{(2)})^2\over 1-\epsilon_2} \cost((S',\Delta,w), \pi^{(2)}_1\circ \pi^{(1)}_1(X)) \\
&\leq {(1+\epsilon_1^{(1)})^2(1+\epsilon_1^{(2)})^2\over 1-\epsilon_2} \cdot \nonumber \\ 
&~~~~(\cost((S'_{QT},\Delta,w), \pi^{(2)}_1\circ \pi^{(1)}_1(X)) + 2n\Delta_D\Delta_{QT}) \label{eq:JL+FSS+JL+qt proof 4}\\
&\leq {(1+\epsilon_1^{(1)})^2(1+\epsilon_1^{(2)})^2\over 1-\epsilon_2} \cdot \nonumber \\ 
&~~~~(\cost((S'_{QT},\Delta,w), \pi^{(2)}_1\circ \pi^{(1)}_1(X^*))  + 2n\Delta_D\Delta_{QT} ) \label{eq:JL+FSS+JL+qt proof 5} \\
&\leq {(1+\epsilon_1^{(1)})^2(1+\epsilon_1^{(2)})^2\over 1-\epsilon_2} \cdot \nonumber \\
&~~~~(\cost((S',\Delta,w), \pi^{(2)}_1\circ \pi^{(1)}_1(X^*))  + 4n\Delta_D\Delta_{QT} ) \label{eq:JL+FSS+JL+qt proof 6} \\
&\leq {(1+\epsilon_1^{(1)})^2(1+\epsilon_1^{(2)})^4\over 1-\epsilon_2} \cost((S,\Delta,w), \pi^{(1)}_1(X^*)) \nonumber \\
&~~~~+ {(1+\epsilon_1^{(1)})^2(1+\epsilon_1^{(2)})^2\over 1-\epsilon_2} 4n\Delta_D\Delta_{QT}\\
&\leq {(1+\epsilon_1^{(1)})^2 (1+\epsilon_2) (1+\epsilon_1^{(2)})^4\over 1-\epsilon_2} \cost(P', \pi^{(1)}_1(X^*)) \nonumber \\
&~~~~+ {(1+\epsilon_1^{(1)})^2(1+\epsilon_1^{(2)})^2\over 1-\epsilon_2} 4n\Delta_D\Delta_{QT} \\
&\leq {(1+\epsilon_1^{(1)})^4 (1+\epsilon_2) (1+\epsilon_1^{(2)})^4\over 1-\epsilon_2} \cost(P, X^*) \nonumber \\
&~~~~+ {(1+\epsilon_1^{(1)})^2(1+\epsilon_1^{(2)})^2\over 1-\epsilon_2} 4n\Delta_D\Delta_{QT}, 
\end{align}
where \eqref{eq:JL+FSS+JL+qt proof 4} and \eqref{eq:JL+FSS+JL+qt proof 6} are by \eqref{eq:quanitzation - coreset} and the property that the coreset $(S,\Delta,w)$ constructed by sensitivity sampling 
satisfies $\sum_{q\in S}w(q)=n$ (the cardinality of $P$)\footnote{While the  sensitivity sampling procedure in \cite{Feldman13SODA:report} only guarantees that $\mathbb{E}[\sum_{q\in S}w(q)]=n$ (expectation over $S$), a variation of this procedure proposed in \cite{Balcan13NIPS} guarantees $\sum_{q\in S}w(q)=n$ deterministically. FSS based on the sampling procedure in \cite{Balcan13NIPS} still generates an $\epsilon$-coreset (with probability $\geq 1-\delta$) with a constant cardinality (precisely, $O({k^2\over \epsilon^6}\log({1\over \delta})$).  }. 
\end{proof}



\ifCLASSOPTIONcaptionsoff
  \newpage
\fi



\end{document}